\newcommand{\abs}[1]{\left\vert#1\right\vert}
\newcommand{\norm}[1]{\left\lVert#1\right\rVert}
\newcommand{\tensornorm}[1]{{\left\vert\kern-0.25ex\left\vert\kern-0.25ex\left\vert #1 
    \right\vert\kern-0.25ex\right\vert\kern-0.25ex\right\vert}}
\newcommand{\floor}[1]{\left\lfloor#1\right\rfloor}
\newcommand{\ceil}[1]{\left\lceil#1\right\rceil}
\newcommand{\overbar}[1]{\mkern 1mu \overline{\mkern-1.5mu#1\mkern0mu}\mkern 0mu}
\newcommand{\expect}{\E\expectarg}
\DeclarePairedDelimiterX{\expectarg}[1]{[}{]}{%
  \ifnum\currentgrouptype=16 \else\begingroup\fi
  \activatebar#1
  \ifnum\currentgrouptype=16 \else\endgroup\fi
}
\newcommand{\innermid}{\nonscript\;\delimsize\vert\nonscript\;}
\newcommand{\activatebar}{%
  \begingroup\lccode`\~=`\|
  \lowercase{\endgroup\let~}\innermid 
  \mathcode`|=\string"8000
}
\newcommand*\tcircle[1]{%
  \raisebox{0.5pt}{%
    \textcircled{\fontsize{8.5pt}{0}\fontfamily{and}\selectfont #1}%
  }%
}
\newcommand{\E}{\mathbb{E}}
\newcommand{\N}{\mathbb{N}}
\newcommand{\Prob}{\mathbb{P}}
\newcommand{\R}{\mathbb{R}}
\newcommand{\CC}{\mathcal{C}}
\newcommand{\DD}{\mathcal{D}}
\newcommand{\EE}{\mathcal{E}}
\newcommand{\GG}{\mathcal{G}}
\newcommand{\HH}{\mathcal{H}}
\newcommand{\NN}{\mathcal{N}}
\newcommand{\MM}{\mathcal{M}}
\newcommand{\OO}{\mathcal{O}}
\newcommand{\dd}{\mathrm{d}}
\newcommand{\del}{\partial}
\newcommand{\diag}{\mathrm{diag}}
\newcommand{\tr}{\mathrm{tr}}
\newcommand{\one}{\mathbbm{1}}
\newcommand{\lr}{\eta}
\newcommand{\relu}{\mathrm{ReLU}}
\newcommand{\sign}{\mathrm{sign}}
\newcommand{\ftrue}{f_{\mathrm{true}}}
\newcommand{\iid}{\overset{{\tiny \text{i.i.d}}}{\sim}}
\newcommand{\tR}{t\wedge\theta_R}
\newcommand{\tauR}{\tau\wedge\theta_R}
\newcommand{\hbarr}{\mkern 1mu \overline{\mkern-1.5mu H \mkern0mu}\mkern -1mu}
\newcommand{\htilde}{\widetilde{H}}
\newcommand{\Xtilde}{\widetilde{X}}
\newcommand{\hhat}{\widehat{h}}
\theoremstyle{plain}
\newtheorem{thm}{Theorem}
\newtheorem{lem}{Lemma}
\theoremstyle{remark}
\newtheorem{defn}{Definition}
\newtheorem{ass}{Assumption}
\newtheorem{hyp}{Scaling regime}
\newtheorem{rmk}{Remark}
\newcommand{\alain}[1]{\color{blue} \textbf{Alain:} #1 \color{black}}
\title{Asymptotic Analysis of Deep Residual Networks\footnote{Alain Rossier's research was supported through EPSRC Centre for Doctoral Training in Mathematics of Random Systems: Analysis, Modelling and Simulation (EP/S023925/1).}}
\author{Rama~Cont$^1$, Alain Rossier$^{1,2}$ and Renyuan Xu$^3$\\
\ \\
   \small{ $^1$ Mathematical Institute, University of Oxford \hspace{0.2cm} $^2$ Instadeep Ltd}\\
   \small{ $^3$ Department of Industrial and Systems Engineering, University of Southern California} \\
   \small{Email addresses: Rama.Cont@maths.ox.ac.uk, rossier@maths.ox.ac.uk, renyuanx@usc.edu}
}
\begin{document}
\numberwithin{equation}{section}

\maketitle

\begin{abstract}
We investigate the asymptotic properties of deep Residual networks (ResNets) as the number of layers increases. We first show the existence of scaling regimes for trained weights markedly different from those implicitly assumed in the neural ODE literature. We study the convergence of the hidden state dynamics in these scaling regimes, showing that one may obtain an ODE, a stochastic differential equation (SDE) or neither of these. In particular, our findings point to the existence of a diffusive regime in which the deep network limit is described by a class of stochastic differential equations (SDEs). Finally, we derive the corresponding scaling limits for the backpropagation dynamics. 
\end{abstract}
\newpage
\tableofcontents
\newpage

\section{Introduction} \label{sec:intro}


Residual networks, or ResNets, are multilayer neural network architectures in which a {\it skip connection} is introduced at every layer~(\cite{HZRS2016}). This allows very deep networks to be trained by circumventing vanishing and exploding gradients, mentioned in~\cite{BSF1994}. The increased depth in ResNets has lead to commensurate performance gains in applications ranging from speech recognition~\cite{HDH2016, ZK2016} to computer vision~\cite{HZRS2016, HSLSW2016}.

A residual network with $L$ layers may be represented as
\begin{equation}\label{forward-map-resnet}
    h^{(L)}_{k+1} = h^{(L)}_{k} + \delta^{(L)}_k \sigma_d\left(A^{(L)}_{k}h^{(L)}_{k} + b^{(L)}_k \right),
\end{equation} 
where $h^{(L)}_k$ is the hidden state at layer $k=0,\ldots,L$, $h^{(L)}_0 = x \in \R^d$ the input, $h^{(L)}_L \in \R^d$ the output, $\sigma\colon\mathbb{R}\to\mathbb{R}$ a nonlinear activation function, $\sigma_d(x) = (\sigma(x_1),\ldots,\sigma(x_d))^{\top}$ its component-wise extension to $x\in \mathbb{R}^d$, and $A_k^{(L)}$, $b_k^{(L)}$, and $\delta_k^{(L)}$ trainable network weights for $k=0, \ldots, L-1$.


ResNets have been the focus of several theoretical studies due to a perceived link with a class of differential equations. The idea, put forth in~\cite{HR2018} and~\cite{CRBD2018}, is to view \eqref{forward-map-resnet} as a discretization of a system of ordinary differential equations 
\begin{equation}\label{eq:neural_ode_limit}
    \frac{\dd H_t}{\dd t}= \sigma_d\left(\overbar{A}_t H_t + \overline{b}_t \right),
\end{equation}
where $\overbar{A}\colon[0,1]\to\R^{d\times d}$ and $\overline{b}\colon[0,1]\to\R^d$ are appropriate smooth functions and $H(0)=x$. 
This may be justified~(\cite{TvG2018}) by assuming that 
\begin{equation}
    \delta^{(L)}\sim 1/L,\quad A_k^{(L)}\sim \overbar{A}_{k/L},\quad b_k^{(L)}\sim \overline{b}_{k/L} \label{eq.ThorpeScaling}
\end{equation}
as $L$ increases. Such models, named neural ordinary differential equations or neural ODEs~\cite{CRBD2018,dupont2019augmented}, have motivated the use of optimal control methods to train ResNets~\cite{ew2018}.


However, the precise link between deep ResNets and the neural ODE ~\eqref{eq:neural_ode_limit} is unclear: in practice, the weights $A^{(L)}$ and $b^{(L)}$ result from training, yet the validity of the scaling assumptions~\eqref{eq.ThorpeScaling} for trained weights  is far from obvious.
As a matter of fact, there is empirical evidence showing that using a scaling factor $\delta^{(L)}\sim 1/L$ can deteriorate the network accuracy~\cite{BMMCM2020}. Also, there is no guarantee that weights obtained through training have a non-zero limit which depends smoothly on the layer, as~\eqref{eq.ThorpeScaling} would require. In fact,  for ResNet architectures used in practice, empirical evidence points to the contrary \cite{cohen2021}. These observations motivate an in-depth examination of the actual scaling behavior of weights with network depth in ResNets and of its impact on the asymptotic behavior of those networks. 

{\bf Contributions.}
We systematically investigate the scaling behavior of trained networks weights and examine in detail the consequence of this scaling for the asymptotic properties of ResNets as the number of layers increases. 
We first show, through detailed numerical experiments,  
the existence of scaling regimes for trained weights markedly different from those implicitly assumed in the neural ODE literature. We study the convergence of the hidden state dynamics in these scaling regimes, showing that one may obtain an ODE, a stochastic differential equation (SDE) or neither of these. More precisely, we show strong convergence of the hidden state dynamics to a limiting ODE or SDE, by viewing the discrete hidden state dynamics as a ``nonlinear Euler scheme'' of the limiting equation.
At a mathematical level, we extend the convergence analysis of Higham et al. \cite{higham2002strong} for discretization schemes of time-homogeneous (Markov) diffusions to a class of nonlinear approximations  for It\^o processes with bounded coefficients. 

In particular, our findings  point to the existence of a ``diffusive regime'' in which the deep network limit is described by a class of stochastic differential equations (SDEs). These novel findings on the relation between ResNets and neural ODEs complement previous work~\cite{TvG2018,DBLP:journals/corr/abs-1904-05263,DBLP:journals/corr/abs-1910-02934,ott2021resnet, peyre2022}. 
Finally, we derive the corresponding scaling limit for the backpropagation dynamics. 
The results we obtain are different from previous ones on asymptotics of ResNets ~\cite{CRBD2018,HR2018, lu2020mean}, and correspond to a different scaling regime which is relevant for trained weights in practical settings.

In particular, in the diffusive regime we find a  limit different from the ``Neural SDE'' literature \cite{li2020a}. Indeed, we observe that the Jacobian of the output with respect to the hidden states depends on hidden states across all levels, so may {\it not} be directly expressed as the solution of a forward or backward stochastic differential equation, as proposed in \cite{li2020a}. 
However, in Section \ref{sec:backward} we obtain a representation for the asymptotics of the backpropagation dynamics in terms of an auxiliary forward SDE.

{\bf Outline.}
Section~\ref{sec:methodology} describes the various scaling regimes for trained weights evidenced in \cite{cohen2021} and the methodology for studying this scaling behaviour in the deep network limit.
In Section \ref{sec:experiments}, we report detailed numerical experiments on the scaling of trained network weights across a range of ResNet architectures and datasets, showing the existence of at least three different scaling regimes, none of which correspond to~\eqref{eq.ThorpeScaling}. In Section~\ref{sec:results}, we show that under these scaling regimes, the dynamics of the the hidden state may be described in terms of a class  of ordinary or stochastic differential equations, different from the neural ODEs studied in~\cite{CRBD2018,HR2018, lu2020mean}.   In Section~\ref{sec:backward}, we derive the large depth limit of the backpropagation dynamics under each scaling regime. 


{\bf Notations.} Let $\|v\| $ denote the Euclidean norm of a vector $v$. For a matrix $M$, denote $M^{\top}$ its transpose, $\diag(M)$ its diagonal vector, $\tr(M)$ its trace and $\norm{M}_F = \sqrt{\tr(M^{\top}M)}$ its Frobenius norm. Denote $\floor{x}$ the integer part of a real number $x$.  
 Let $\NN(m, \Sigma)$ denote the Gaussian distribution with mean $m$ and (co)variance $\Sigma$, $\otimes$ denote the tensor product, and $\R^{d, \, \otimes n} = \R^d \times \cdots \times \R^d$ ($n$ times). Define the vectorisation operator by ${\rm vec} \colon \R^{d_1{\times\cdots\times} d_n} \to \R^{d_1\cdots d_n}$, and let $\one_S$ be the indicator function of a set $S$. $\CC^0$ is the space of continuous functions, for $\nu\geq0$, $\CC^\nu$ is the space of $\nu$-Hölder continuous functions, and $\HH^1$ is the Sobolev space of order $1$.
\section{Scaling regimes} \label{sec:methodology}
We start by providing a framework for describing the scaling regimes for trained network weights, as identified in  numerical experiments  on deep ResNets \cite{cohen2021}. 

\subsection{Scaling regimes for trained network weights}

As described in Section~\ref{sec:intro}, the neural ODE limit assumes  
\begin{equation} \label{eq:thorpe}
\delta^{(L)} \sim  \frac{1}{L} \quad {\rm and} \quad A^{(L)}_{\floor{Lt} }\overset{L\to \infty}{\longrightarrow} \overbar{A}_t,\quad b^{(L)}_{\floor{Lt} }\overset{L\to \infty}{\longrightarrow} \overline{b}_t
\end{equation}
for $t\in[0,1]$, where $\overbar{A} \colon [0,1] \to \R^{d\times d}$ and $\overline{b} \colon [0,1] \to \R^{d}$ are smooth  functions~\cite{TvG2018}. 
Our numerical experiments, detailed in Section~\ref{sec:experiments}, show that the norm of the weights generally shrinks as $L$ increases (see for example Figures~\ref{fig:scaling_tanh_shared_delta} and~\ref{fig:scaling_relu_scalar_delta}), so one cannot expect the above assumption to hold, unless weights are renormalized in some way.
We consider here a more general assumption which includes~\eqref{eq:thorpe} but allows for shrinking weights.
\begin{hyp} \label{hypothesis.1}
There exist $\overbar{A} \in \CC^0\left( [0,1], \R^{d\times d} \right)$ and $\beta \in \left[0, 1\right]$ such that 
\begin{equation} \label{scaling-beta}
\forall s\in [0,1],  \qquad   \overbar{A}_s = \lim_{L\to\infty} L^{\beta}\,  A^{(L)}_{\floor{Ls}}.
\end{equation}
\end{hyp}

\begin{figure}[!tb]
    \centering
    \includegraphics[width=0.45\textwidth]{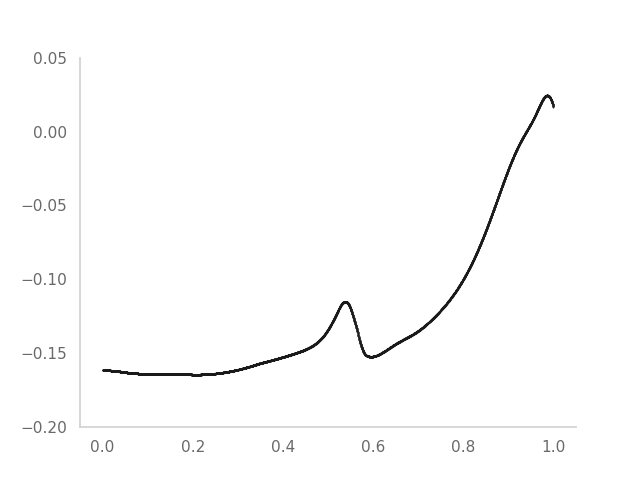}
    \includegraphics[width=0.45\textwidth]{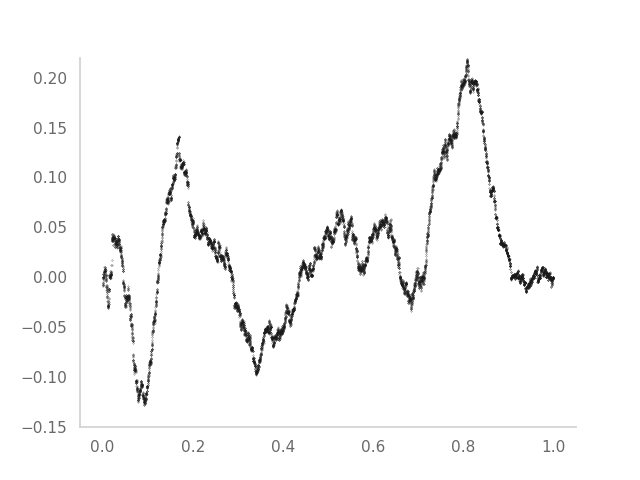}
    \caption{Trained weights as a function of $k/L$ for $k=0,\ldots,L$ and $L=9100$. Left: rescaled weights $L^\beta A^{(L)}_{k, (0, 0)}$ for a $\tanh$ network with $\beta=0.2$. Right: cumulative sum  $\sum_{j=0}^{k-1} A^{(L)}_{j, (0, 0)}$for a $\relu$ network. Note that each $A^{(L)}_{k, (0, 0)}\in\R$. \label{fig:trained_weights}}
\end{figure}
These renormalized weights do converge to a continuous function of the layer in some cases, as shown in Figure~\ref{fig:trained_weights} (top) which displays a ResNet~\eqref{forward-map-resnet} with fully connected layers and $\tanh$ activation function, without explicit regularization (see Section \ref{sec:fully_connected}).


Yet, it is not the case that network weights always converge to a smooth function of the layer, even after rescaling. Indeed, network weights $A_k^{(L)}$ are usually initialized to random, independent and identically distributed (i.i.d.) values, whose scaling limit would then correspond to a {\it white noise}, which cannot be represented as a function of the layer. In this case, the {\it cumulative sum} $\sum_{j=0}^{k-1} A^{(L)}_j$ of the weights behaves like a random walk, which does have a well-defined scaling limit $W \in \CC^0\left(\left[0, 1\right], \R^{d\times d} \right)$.
Figure \ref{fig:trained_weights} (bottom) shows that, for a $\relu$ ResNet with fully-connected layers, this cumulative sum of trained weights converges to an {\it irregular}, that is, non-smooth function of the layer. 

This observation motivates the consideration of a different scaling regime where the weights $A^{(L)}_k$ are represented as the {\it increments} of a continuous function $W^A$, i.e. the {\it cumulative sum} of the weights may converges to a limit but not the weight themselves. We also allow for a \textit{trend} term as in Scaling regime \ref{hypothesis.1}.  

\begin{hyp}\label{hypothesis.2}
There exist $\beta \in \left[0, 1\right)$, $\overbar{A} \in \CC^0\left( [0,1], \R^{d\times d} \right)$, and $W^A \in \CC^{0}([0,1], \mathbb{R}^{d\times d} )$ non-zero such that $W^A_0=0$ and
\begin{equation} \label{scaling-W}
A_k^{(L)} = L^{-\beta} \overbar{A}_{k/L} +  W^A_{(k+1)/L} - W^A_{k/L}.
\end{equation}
\end{hyp}
\noindent The above decomposition is unique. Indeed, for $s\in[0,1]$,
\begin{align}
L^{\beta -1} \sum_{k=0}^{\floor{Ls}-1} A_k^{(L)} &= L^{-1} \sum_{k=0}^{\floor{Ls}-1} \overbar{A}_{k/L} + L^{\beta - 1} W^A_{\floor{Ls}/L} \mathop{\to}^{L \to \infty} \int_{0}^s \overbar{A}_r \dd r. \label{cumul-decomp}
\end{align}
The integral of $\overbar{A}$ is thus uniquely determined by the weights $A_k^{(L)}$, so $\overbar{A}$ can be obtained by discretization and $W^A$ by fitting the residual error in \eqref{cumul-decomp}. In addition, Scaling Regimes~\ref{hypothesis.1} and~\ref{hypothesis.2} are mutually exclusive since Scaling regime~\ref{hypothesis.2} requires $W^A$ to be non-zero.

\begin{rmk} \label{rem:weight-init}
In the case of independent Gaussian weights
\begin{equation*}
    A_{k,mn}^{(L)} \iid \mathcal{N}\left(0, L^{-1} d^{-2} \right) \quad \text{and} \quad  b_{k,n}^{(L)} \iid \mathcal{N}\left(0, L^{-1}d^{-1}\right),
\end{equation*}
where $A_{k,mn}^{(L)}$ is the $(m,n)$-th entry of $A_k^{(L)}\in\R^{d\times d}$ and $b_{k,n}^{(L)}$ is the $n$-th entry of $b_k^{(L)}\in\R^d$, 
 we can represent the weights $\{ A^{(L)}, b^{(L)}\}$ as the increments of a matrix-valued Brownian motion
\begin{equation*}
    A_{k}^{(L)} = d^{-1} \left( W^A_{(k+1)/L} - W^A_{k/L} \right),
\end{equation*}
which is a special case of Scaling regime 2.
\end{rmk}
This remark shows that Scaling regime 2 corresponds to a 'diffusive' regime.



\subsection{Smoothness of weights with respect to the layer} \label{sec:tools}

A question related to the existence of a scaling limit is the degree of smoothness of the limits $\overbar{A}$ or $W^A$, if they exist. 
To quantify the smoothness of the function mapping the layer number to the corresponding network weight, we define in Table~\ref{tab:summary_results} several quantities which may be viewed as discrete versions of various (semi-)norms used to measure the smoothness of functions.

{
    \tabulinesep=1.3mm
        \captionof{table}{Quantities associated to a tensor $A^{(L)} \in \R^{L\times d \times d}$.}
    \begin{center}
    \begin{tabu}{p{5cm}|p{5cm}}
        \hline
        \textbf{Quantity} & \textbf{Definition}  \\ \hline
        Maximum norm & $\max_{k} \norm{A^{(L)}_k}_F$ \\
        Cumulative sum norm & $\norm{ \sum_{k=1}^L A_k^{(L)} }_F$ \\
        $\beta$-scaled norm of increments & $L^{\beta} \max_{k} \norm{A^{(L)}_{k+1} - A^{(L)}_{k}}_F$ \\
        Root sum of squares & $\left(\sum_k \norm{A^{(L)}_k}_F^2\right)^{1/2}$ \\
        \hline
    \end{tabu}
    \end{center}
    \label{tab:summary_results}
}

\section{Scaling behavior of trained weights: numerical experiments} \label{sec:experiments}

We  now report on detailed numerical experiments to investigate the scaling properties and asymptotic behavior of trained weights for residual networks as the number of layers increases. We focus on two types of architectures: fully-connected and convolutional networks.

\subsection{Methodology} \label{sec:procedure-experiments}

We underline that Scaling Regimes~\ref{hypothesis.1} and~\ref{hypothesis.2} are mutually exclusive since Scaling regime~\ref{hypothesis.2} requires $W^A$ to be non-zero. In order to examine whether one of these scaling regimes, or neither, holds for the trained weights $A^{(L)}$ and $b^{(L)}$, we proceed as follows.


\noindent{\bf Step 1:} First, to obtain the scaling exponent $\beta\in [0,1)$, note that under Scaling regime \ref{hypothesis.2}, \[
L^{\beta - 1}\sum_{k=1}^L A_k^{(L)} = \frac{1}{L}\sum_{k=1}^L \overbar{A}_{k/L} + L^{\beta-1} W^A_1\qquad  \mathop{\to}^{L \to \infty} \int_0^1 \overbar{A}_s \dd s.
\]
Hence, we perform a logarithmic regression of the cumulative sum norm of $A^{(L)}$ with respect to $L$, and the rate of increase of $\sum_{k=1}^L A_k^{(L)}$ as $L\to\infty$ is $1-\beta$.

\noindent{\bf Step 2:} After identifying the correct scale $L^{-\beta}$ for the weights, we compute the $\beta$-scaled norm of increments of $A^{(L)}$ to check whether they satisfy Scaling regime~\ref{hypothesis.1} and measure the smoothness of the trained weights. On one hand, if the $\beta$-scaled norm of increments of $A^{(L)}$ does not vanish as $L\to\infty$, it means that the rescaled weights cannot be represented as a continuous function of the layer, as in Scaling regime~\ref{hypothesis.1}. On the other hand, if the $\beta$-scaled norm of increments of $A^{(L)}$ vanishes (say, as $L^{-\nu}$) when $L$ increases, it supports Scaling regime~\ref{hypothesis.1} with a Hölder-continuous limit function $\overbar{A}\in \CC^\nu([0,1], \R^{d\times d})$.

\noindent{\bf Step 3:} To discriminate between Scaling regimes~\ref{hypothesis.1} and ~\ref{hypothesis.2}, we decompose the cumulative sum $\sum_{j=0}^{k-1} A_j^{(L)}$ of the trained weights into a {\it trend} component $\overbar{A}$ and a {\it noise} component $W^A$, as shown in~\eqref{cumul-decomp}. The presence of non-negligible noise term $W^A$ favors Scaling regime~\ref{hypothesis.2}.

\noindent{\bf Step 4:} Finally, we estimate the regularity of  the term $W^A$ under Scaling regime~\ref{hypothesis.2}. If $W^A$ has  {\it diffusive} behavior, as in the example of i.i.d. random weights, then its quadratic variation tensor defined  by
\begin{align*}
\left[ W^A \right]_s &= \lim_{L \to \infty} \sum_{k=0}^{\floor{Ls}-1} \left( W^A_{\frac{k+1}{L}} - W^A_{\frac{k}{L}} \right) \otimes  \left( W^A_{\frac{k+1}{L}} - W^A_{\frac{k}{L}} \right)^{\top}
\end{align*}
has a finite limit as $L\to\infty$.  Hence, using~\eqref{scaling-W} and Cauchy-Schwarz, we obtain 
\begin{equation} \label{qv-sqrtssq}
\tensornorm{\left[ W^A \right]_s} \leq 2 \cdot \lim_{L\to\infty} \sum_{k=0}^{\floor{Ls}-1} \norm{A^{(L)}_k}_F^2 + L^{1-2\beta} \norm{\overbar{A}}^2_{L^2}
\end{equation}
where $\tensornorm{\cdot}$ is the Hilbert-Schmidt norm. As $\overbar{A}$ is continuous on a compact domain, its $L^2$ norm is finite. Hence, if we have $\beta \geq \frac{1}{2}$, the fact that the root sum of squares of $A^{(L)}$ is upper bounded as $L\to\infty$ implies that the quadratic variation of $W^A$ is finite. \\

We follow all of the above steps for $b^{(L)}$ as well. Note that the scaling exponent $\beta$ may not be the same for $A^{(L)}$ and $b^{(L)}$.

\begin{rmk}\label{rmk:relu}
Note that $\sigma = \relu$ is homogeneous of degree $1$, so we can write
\begin{equation*}
\delta \cdot \sigma_d\left(Ah+b \right) = \sign(\delta) \cdot \sigma_d\left(\abs{\delta}A h + \abs{\delta}b \right).
\end{equation*}
Hence, when analyzing the scaling of trained weights in the case of a $\relu$ activation with fully-connected layers, we look at the quantities $\abs{\delta^{(L)}} A^{(L)}$ and $\abs{\delta^{(L)}} b^{(L)}$, as they represent the total scaling of the residual connection. \end{rmk}
\subsection{Results for fully-connected layers \label{sec:fully_connected}}
We first consider the case where the network layers are fully-connected. We consider the network architecture~\eqref{forward-map-resnet} for two different setups:
\begin{enumerate}[label=(\roman*)]
    \item $\sigma = \tanh$, $\delta^{(L)}_k = \delta^{(L)} \in \R_+$ trainable,
    \item $\sigma = \relu$, $\delta^{(L)}_k \in \R$ trainable.
\end{enumerate}
We choose to present these two cases for the following reasons. First, both $\tanh$ and $\relu$ are widely used in practice. Further, having $\delta^{(L)}$ scalar makes the derivation of the limiting behavior simpler. Also, since $\tanh$ is an odd function, the sign of $\delta^{(L)}$ can be absorbed into the activation. Therefore, we can assume that $\delta^{(L)}$ is non-negative for $\tanh$. Regarding $\relu$, having a shared $\delta^{(L)}$ would hinder the expressiveness of the network. Indeed, if for instance $\delta^{(L)} > 0$, we would get $h^{(L)}_{k+1} \geq  h^{(L)}_{k}$ element-wise since $\relu$ is non-negative. This would imply that $h^{(L)}_L \geq x$, which is not desirable. The same argument applies to the case $\delta^{(L)} < 0$. Thus, we let $\delta^{(L)}_k \in \R$ depend on the layer number for $\relu$ networks.

We consider two data sets. The first one is synthetic: fix $d=10$ and generate $N$ i.i.d samples $x_i$ coming from the $d-$dimensional uniform distribution in $[-1, 1]^d$. Let $K=100$ and simulate the following dynamical system:
\begin{equation*}
    \begin{cases}
    z_{0}^{x_i}&=x_i \\
    z_{k}^{x_i} &= z_{k-1}^{x_i}+ K^{-1/2}\tanh_d\left(g_d\left(z_{k-1}^{x_i},k, K\right) \right), \,\,\,  k = 1, \ldots, K,
    \end{cases}
\end{equation*}
where 
$g_d(z,k,K)\coloneqq\sin(5k\pi/K)z+\cos( 5k\pi/K )\one_d$. The targets $y_i$ are defined as $y_i = z_{K}^{x_i} / \norm{z_{K}^{x_i}}$. The motivation behind this low-dimensional dataset is to be able to train very deep residual networks and to be sure that there exists at least a (sparse) optimal solution.

The second dataset is a low-dimensional embedding of the MNIST handwritten digits dataset~\cite{MNIST}. Let $\left( \widetilde{x}, c \right) \in \R^{28\times 28} \times \left\{0, \ldots, 9 \right\}$ be an input image and its corresponding class. We transform $\widetilde{x}$ into a lower dimensional embedding $x \in \R^{d}$ using an untrained convolutional projection, where $d=25$. More precisely, we stack two convolutional layers initialized randomly, we apply them to the input and we flatten the downsized image into a $d-$dimensional vector. Doing so reduces the dimensionality of the problem while allowing very deep networks to reach at least $99\%$ training accuracy. The target $y\in \R^d$ is the one-hot encoding of the corresponding class. 

The weights are updated by stochastic gradient descent (SGD) using batches of size $B$ on the  mean-square loss and a constant learning rate $\lr$, until the loss falls below $\epsilon$, or when the maximum number of updates $T_{\max}$ is reached. We repeat the experiments for depths $L$ varying from $L_{\min}$ to $L_{\max}$. Details are given in Appendix~\ref{sec:hyperparameters}. \\

{\bf Results.} For the case of a $\tanh$ activation (i), we observe in Figure~\ref{fig:scaling_tanh_shared_delta} that for both datasets, $\delta^{(L)} \sim L^{-0.7}$ clearly decreases as $L$ increases, and $A^{(L)}$ decreases slightly when $L$ increases. We deduce that $\beta = 0.3$ for the MNIST dataset and $\beta = 0.2$ for the synthetic dataset.

\begin{figure}[tb!]
    \centering
    \includegraphics[width=0.45\textwidth]{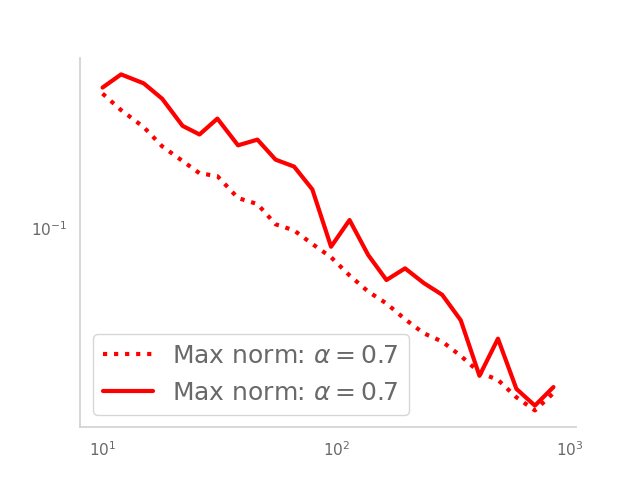}
    \includegraphics[width=0.45\textwidth]{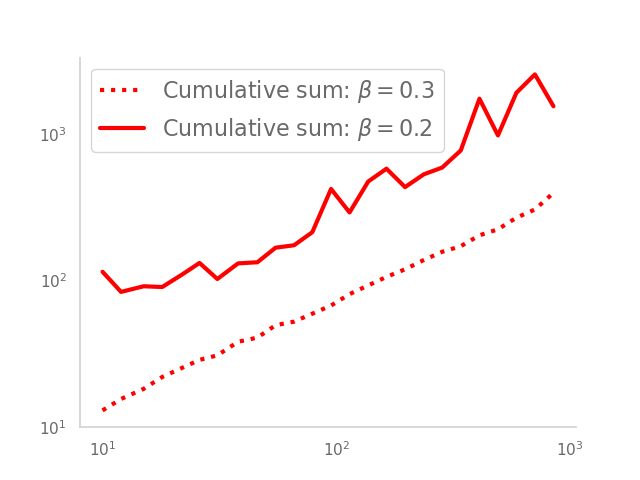}
    \caption{Scaling for $\tanh$ activation and $\delta^{(L)}\in\R$. Left: Maximum norm of $\delta^{(L)}$ with respect to $L$. Right: Cumulative sum norm of $A^{(L)}$ with respect to $L$. The dashed lines are for the synthetic data and the solid lines are for MNIST. The plots are in log-log scale.}
    \label{fig:scaling_tanh_shared_delta}
\end{figure}

We use these results to identify the scaling behavior of $A^{(L)}$. We observe in Figure~\ref{fig:hypothesis_tanh_shared_delta} (left) that the $\beta$-scaled norm of increments of $A^{(L)}$ decreases like $L^{-1/2}$, suggesting that Scaling regime~\ref{hypothesis.1} holds, with $\overbar{A}$ being $1/2-$Hölder continuous. This is confirmed in Figure~\ref{fig:hypothesis_tanh_shared_delta} (right), as the trend part $\overbar{A}$ is visibly continuous and even of class $\CC^1$. The noise part $W^A$ is negligible. This observation is even more striking given that the weights are trained \textbf{without explicit regularization}.

\begin{figure}[tb!]
    \centering
    \includegraphics[width=0.45\textwidth]{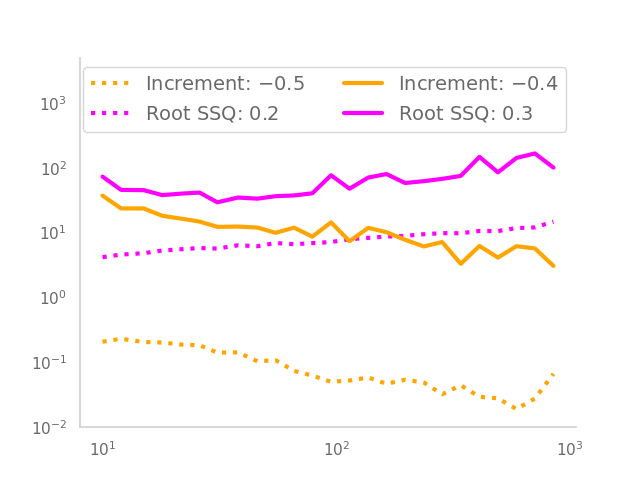}
    \includegraphics[width=0.45\textwidth]{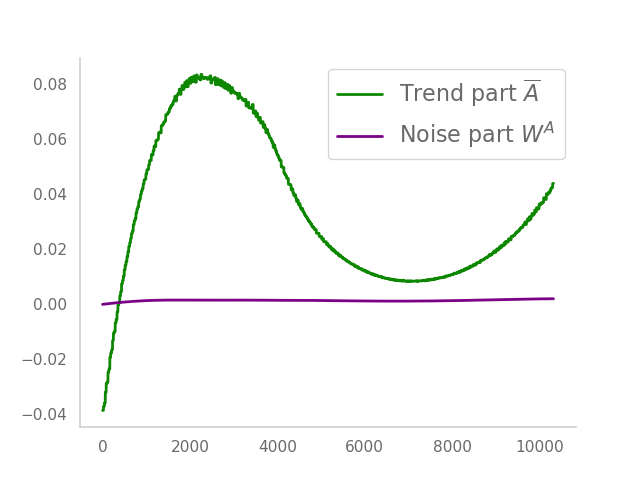}
    \caption{Identification of scaling behavior in the case of $\tanh$ activation and $\delta^{(L)}\in\R$. Left: log-log plot of root sum of squares of $A^{(L)}$ (pink) and the $\beta$-scaled norm of increments of $A^{(L)}$ (orange). Dashed lines are for the synthetic data and the solid lines are for MNIST. Right: Decomposition of the trained weights $A^{(L)}_{k, (9, 7)}$ with the trend part $\overbar{A}$ and the noise part $W^A$ for $L=10321$, as defined in~(\ref{scaling-W}), for the synthetic dataset.}
    \label{fig:hypothesis_tanh_shared_delta}
\end{figure}

Regarding the case of a $\relu$ activation function (ii), we observe in Figure~\ref{fig:scaling_relu_scalar_delta} (left) that the trend part of the residual connection $\abs{ \delta^{(L)}} A^{(L)}$ scales like $L^{-0.8}$ for the synthetic dataset and like $L^{-0.9}$ for the MNIST dataset. We see in Figure~\ref{fig:scaling_relu_scalar_delta} (right) that keeping the sign of $\delta^{(L)}_k$ is important, as the sign oscillates considerably throughout the network depth $k=0, \ldots, L-1$.

\begin{figure}[tb!]
    \centering
    \includegraphics[width=0.45\textwidth]{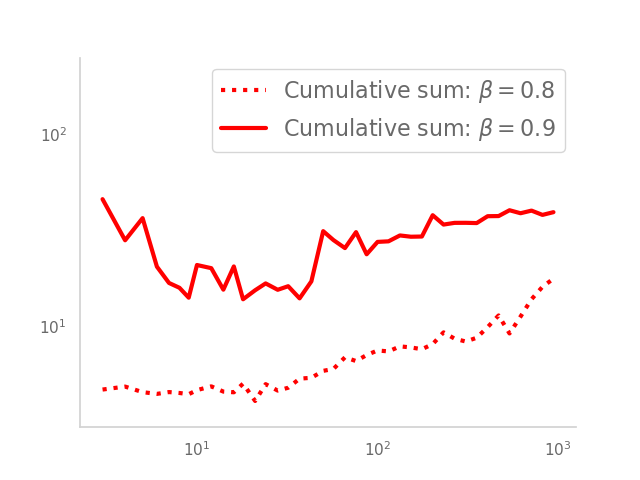}
    \includegraphics[width=0.45\textwidth]{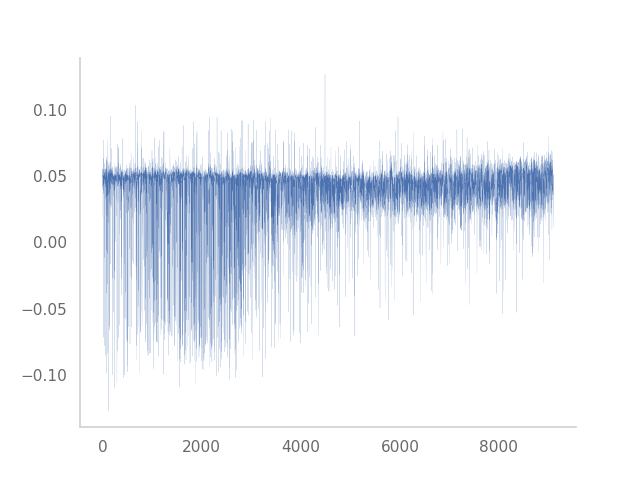}
    \caption{Scaling for $\relu$ activation and $\delta_k^{(L)}\in\R$. Left: Cumulative sum norm of $\vert \delta^{(L)}\vert A^{(L)}$ with respect to $L$, in log-log scale. Right: trained values of $\delta^{(L)}_k$ as a function of $k$, for $L=9100$ and for the synthetic dataset.}
    \label{fig:scaling_relu_scalar_delta}
\end{figure}

Figure~\ref{fig:hypothesis_relu_delta_scalar} (left) shows that the $\beta$-scaled norm of increments diverges as the depth increases. This suggests that there exists a noise part $W^A$. Following~\eqref{qv-sqrtssq}, the fact that the root sum of squares of $\abs{\delta^{(L)}} A^{(L)}$ is upper bounded as $L\to\infty$ and $\beta \geq 1/2$ implies that $W^A$ has finite quadratic variation. These claims are also supported by Figure~\ref{fig:hypothesis_relu_delta_scalar} (right): there is a non-zero trend part $\overbar{A}$, and a non-negligible noise part $W^A$.

\begin{figure}[tb!]
    \centering
    \includegraphics[width=0.45\textwidth]{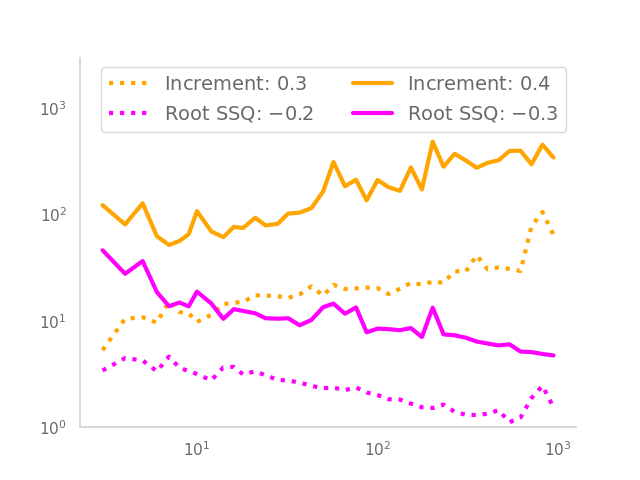}
    \includegraphics[width=0.45\textwidth]{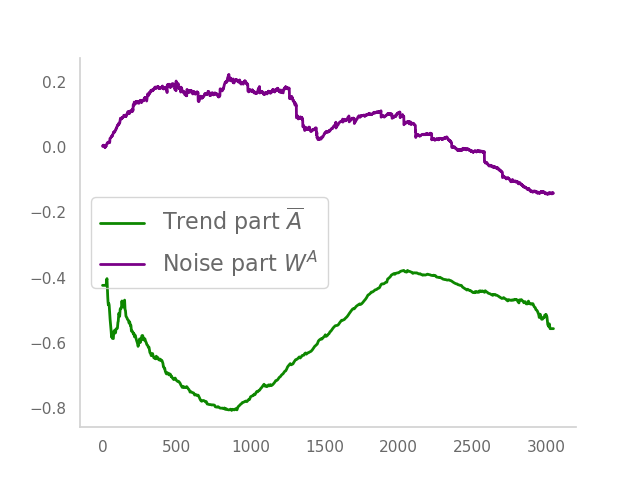}
    \caption{$\relu$ activation and scalar $\delta_k^{(L)}$. Left: in pink we plot in log-log scale the root sum of squares of $\vert \delta^{(L)}\vert A^{(L)}$, and in orange the $\beta$-scaled norm of increments of $\vert \delta^{(L)}\vert A^{(L)}$. The dashed lines are for the synthetic data and the solid lines for MNIST. Right: Decomposition of the trained weights $ \vert \delta^{(L)} \vert \, A^{(L)}_{k, (7, 7)}$ with the trend part $\overbar{A}$ and the noise part $W^A$ for $L=10321$, as defined in~(\ref{scaling-W}), for the synthetic dataset.
    \label{fig:hypothesis_relu_delta_scalar}}
\end{figure}

Given the scaling behavior of the trained weights, we conclude that Scaling regime~\ref{hypothesis.1} seems to be a plausible description for the $\tanh$ case (i), but Scaling regime~\ref{hypothesis.2} provides a better description for the $\relu$ case (ii).

Scaling behavior of
$b^{(L)}$ are shown for the $\tanh$ case in Figure~\ref{fig:analysis_b_tanh} and for the $\relu$ case in Figure~\ref{fig:analysis_b_relu}. We observe that the cumulative sum norm, the scaled norm of the increments and the root sum of squares of $b^{(L)}$ scales in the same way as $A^{(L)}$ as the depth $L$ increases. In particular, the scaling exponent $\beta$ for $b^{(L)}$ is equal to the scaling exponent of $A^{(L)}$, justifying the setup considered in Section \ref{sec:methodology}.  

\begin{figure}[tb!]
    \centering
    \includegraphics[width=0.32\textwidth]{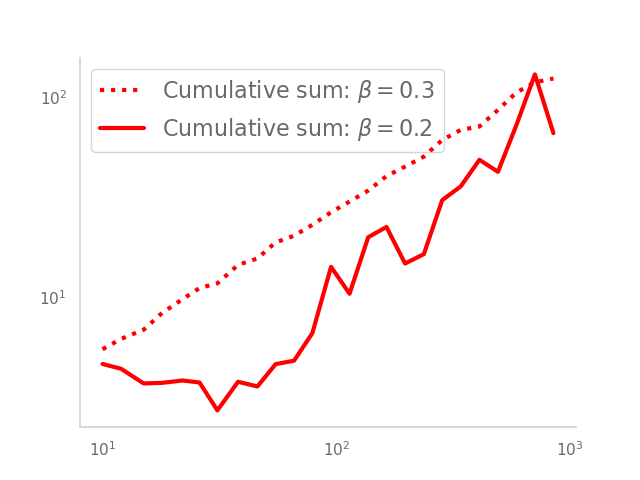}
    \includegraphics[width=0.32\textwidth]{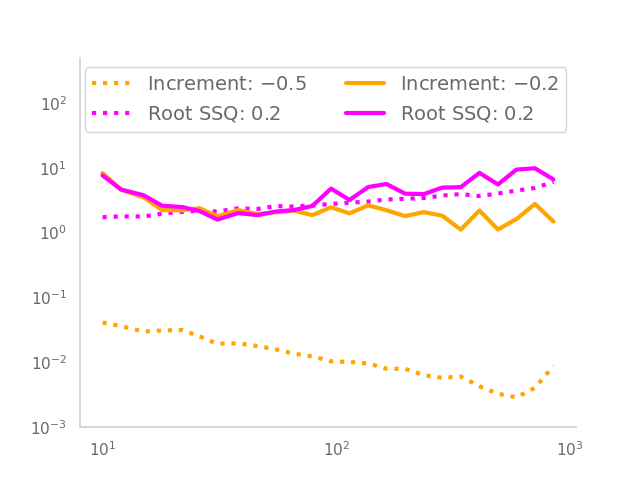}
    \includegraphics[width=0.32\textwidth]{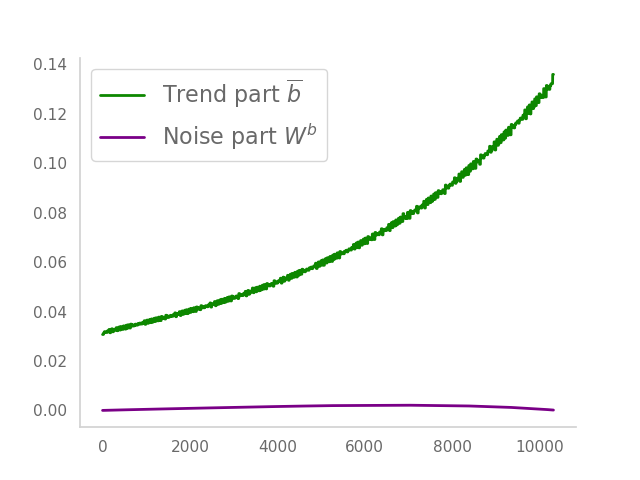}
    \caption{Scaling behavior for $b^{(L)}$ with $\tanh$ activation and scalar $\delta^{(L)}$. Left: cumulative sum norm of $b^{(L)}$ with respect to $L$, in log-log scale. Middle: the root sum of squares of $b^{(L)}$ in pink and the $\beta-$scaled norm of increments of $b^{(L)}$ in orange, in log-log scale. The dashed lines are for the synthetic data and the solid lines are for MNIST. Right: Decomposition of the trained weights $b^{(L)}_{k, 5}$ with the trend part $\overline{b}$ and the noise part $W^b$ for $L=10321$, as defined in~(\ref{scaling-W}), for the synthetic dataset.}
    \label{fig:analysis_b_tanh}
\end{figure}

\begin{figure}[tb!]
    \centering
    \includegraphics[width=0.32\textwidth]{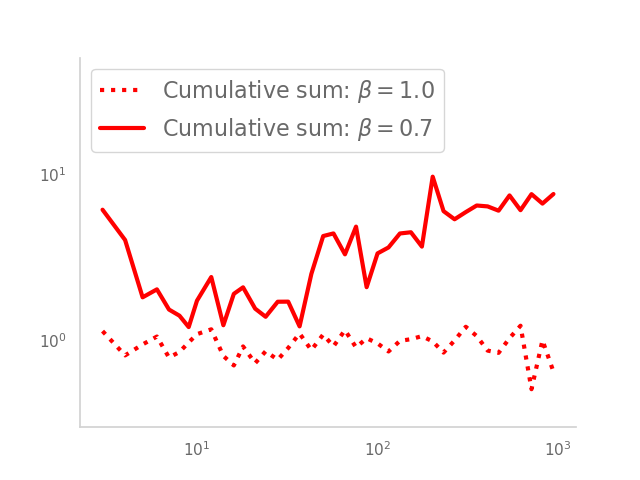}
    \includegraphics[width=0.32\textwidth]{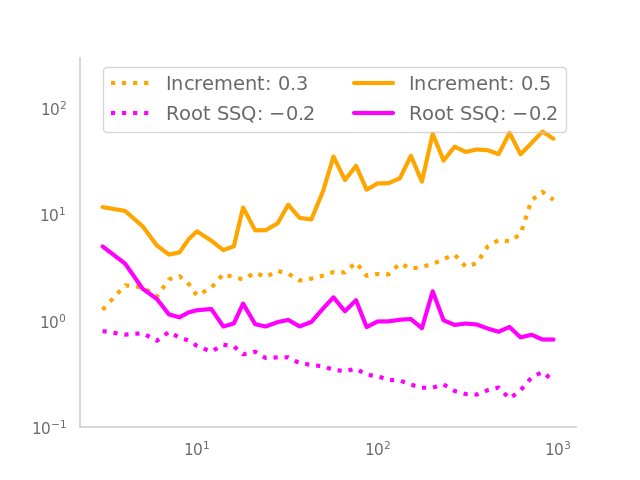}
    \includegraphics[width=0.32\textwidth]{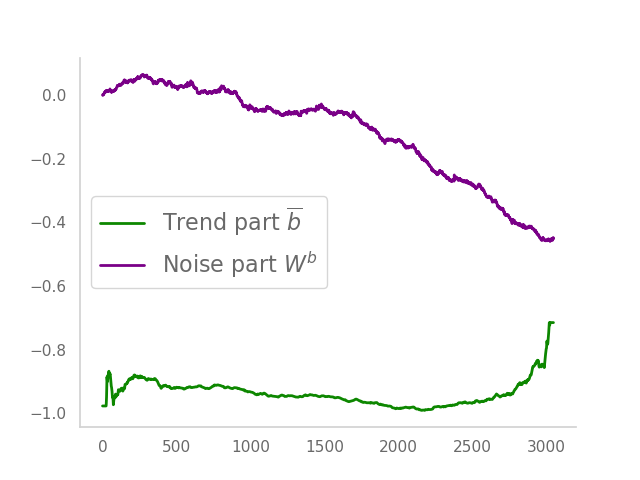}
    \caption{Scaling and hypothesis verification for $b^{(L)}$ with  $\relu$ activation and $\delta_k^{(L)}\in\R$. Left: cumulative sum norm of $\vert \delta^{(L)}\vert b^{(L)}$ with respect to $L$, in log-log scale. Middle: the root sum of squares of $\vert \delta^{(L)}\vert b^{(L)}$ in pink and the $\beta-$scaled norm of increments of $\vert \delta^{(L)}\vert b^{(L)}$ in orange, in log-log scale. The dashed lines are for the synthetic data and the solid lines for MNIST. Right: Decomposition of the trained weights $ \vert \delta^{(L)} \vert \, b^{(L)}_{k, 6}$ with the trend part $\overline{b}$ and the noise part $W^b$ for $L=10321$, as defined in~(\ref{scaling-W}), for the synthetic dataset.}
    \label{fig:analysis_b_relu}
\end{figure}

{\bf Importance of the stochastic term $W^A$.} It is legitimate  to ask whether the noise term $W^A$ plays a significant role in the output accuracy of the network. To test this, we create a residual network with denoised weights $\widetilde{A}^{(L)}_k \coloneqq  L^{-\beta} \overbar{A}_{k/L}$, compute its training error and we compare it to the original training error. We observe in Figure~\ref{fig:loss_denoised} (left) that for $\tanh$, the noise part $W^A$ is negligible and does not influence the loss. However, for $\relu$, the loss with denoised weights is one order of magnitude above the original training loss, meaning that the noise part $W^A$ plays a significant role in the accuracy of the trained network.

\begin{figure}[tb!]
    \centering
    \includegraphics[width=0.45\textwidth]{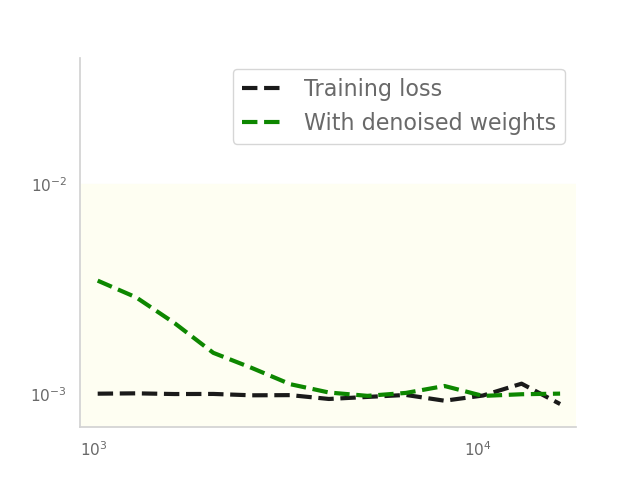}
    \includegraphics[width=0.45\textwidth]{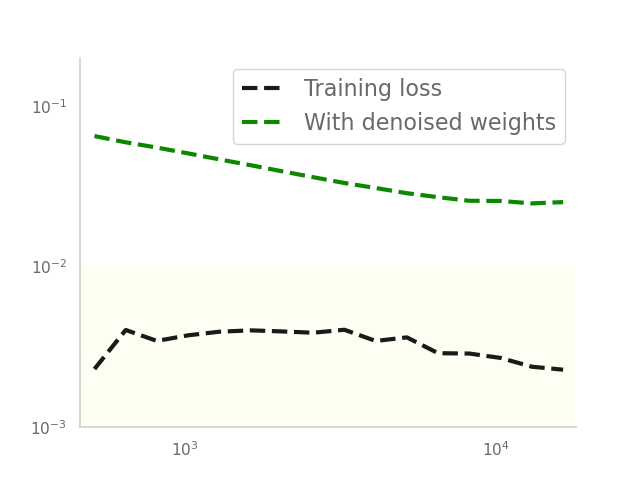}
    \caption{Loss value, as a function of $L$, in black for the trained weights $A_k^{(L)}$ and in green for the denoised weights $\widetilde{A}^{(L)}_k = L^{-\beta} \overbar{A}_{k/L}$. Left: $\tanh$ activation and $\delta^{(L)}\in\R$. Right: $\relu$ activation and $\delta_k^{(L)}\in\R$. Note that these curves are for the synthetic dataset and that we plot them in log-log scale. Also, we show in off-white the loss value range in which we consider our networks to have converged.}
    \label{fig:loss_denoised}
\end{figure}

{\bf Sensitivity of $\alpha$ and $\beta$ with respect to the hyperparameters.} The values of $\alpha$ and $\beta$ stem from the trained weights, which are themselves a function of the initialization and the training algorithm. We are using stochastic gradient descent, and the most significant hyperparameters of SGD are the learning rate $\eta$ and the batch size $B$. \\
Hence, we report the value $\alpha$ and $\beta$ found for the $\tanh$ and trainable $\delta$ architecture on the synthetic data with different batch sizes $B \in\left\{8, 32, 128\right\}$ and learning rates $\eta\in\left\{0.01, 0.003, 0.001 \right\}$, with $5$ different realizations for the initialization. We report the average values of $\alpha$ and $\beta$ for $5$ different seeds in Table \ref{table:hp-sensi} below.

\captionof{table}{Average value of $\alpha$ (left) and $\beta$ (centre) for the trained weights, over $5$ random initialization. $\eta$ is the learning rate, $B$ the batch size.}
\label{table:hp-sensi}
\begin{minipage}[tb!]{0.48\textwidth}
    \tabulinesep=1.6mm
    \setlength\tabcolsep{4pt}
    \begin{tabu}{|c|c|c|c|}
        \hline
        $\alpha$ & $B=8$ & $B=32$ & $B=128$  \\ \hline
        $\eta=.01$ & $.69 \pm .02$ & $.73\pm .02$ & $.67 \pm .02$  \\ \hline
        $\eta=.003$ & $.59\pm .05$ & $.60\pm .01$ & $.58\pm .01$ \\ \hline
        $\eta=.001$ & $.58\pm .01$ & $.55\pm .01$ & $.53 \pm .01$ \\ \hline
    \end{tabu}
\end{minipage}
\hspace{0.51cm}
\begin{minipage}[t]{0.48\textwidth}
    \tabulinesep=1.5mm
    \setlength\tabcolsep{4pt}
    \begin{tabu}{|c|c|c|c|}
        \hline
        $\beta$ & $B=8$ & $B=32$ & $B=128$  \\ \hline
        $\eta=.01$ & $.24\pm .02$ & $.29 \pm .05$ & $.22\pm .02$    \\ \hline
        $\eta=.003$ & $.33\pm .01$ & $.41\pm .06$ & $.40 \pm .02$ \\ \hline
        $\eta=.001$ & $.39\pm .02$ & $.43\pm .02$ & $.41 \pm .01$  \\ \hline
    \end{tabu}
\end{minipage}
\vspace{12pt}

We observe that the learning rate does affect $\alpha$ and $\beta$ while keeping $\alpha+\beta$ around $1$, and the batch size does not affect $\alpha$ or $\beta$. A plausible explanation for these observations is that a higher batch size means a more precise descent direction at the cost of efficiency, but the shape of the solution is not supposed to change. 

\subsection{Results for convolutional networks \label{sec:classification}}

We now consider the original ResNet with convolutional layers introduced in~\cite{HZRS2016}. This architecture is close to the state-of-the-art methods used for image recognition tasks. We do not include batch normalization~\cite{DBLP:journals/corr/IoffeS15} since it only slightly improves the performance of the network while making the analysis significantly more complicated.
The architecture is displayed in Figures \ref{fig:cifar_net_basic} and \ref{fig:cifar_net_block}. 

\begin{figure}[tb!]
    \centering
    \includegraphics[width=1.0\textwidth]{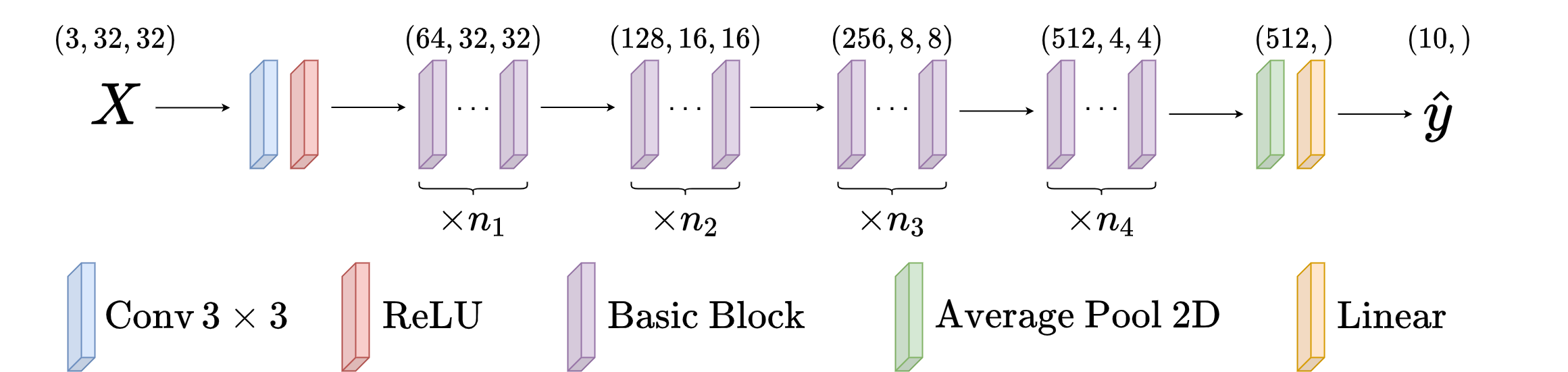}
    \vspace{0.1in}
    \caption{Residual architecture. There are 4 blocks that are respectively repeated $n_1$, $n_2$, $n_3$ and $n_4$ times. The network depth is $L=n_1+n_2+n_3+n_4$. The Basic Block architecture is detailed in Figure~\ref{fig:cifar_net_block}.}
    \label{fig:cifar_net_basic}
\end{figure}

\begin{figure}[tb!]
    \centering
    \includegraphics[width=1.0\textwidth]{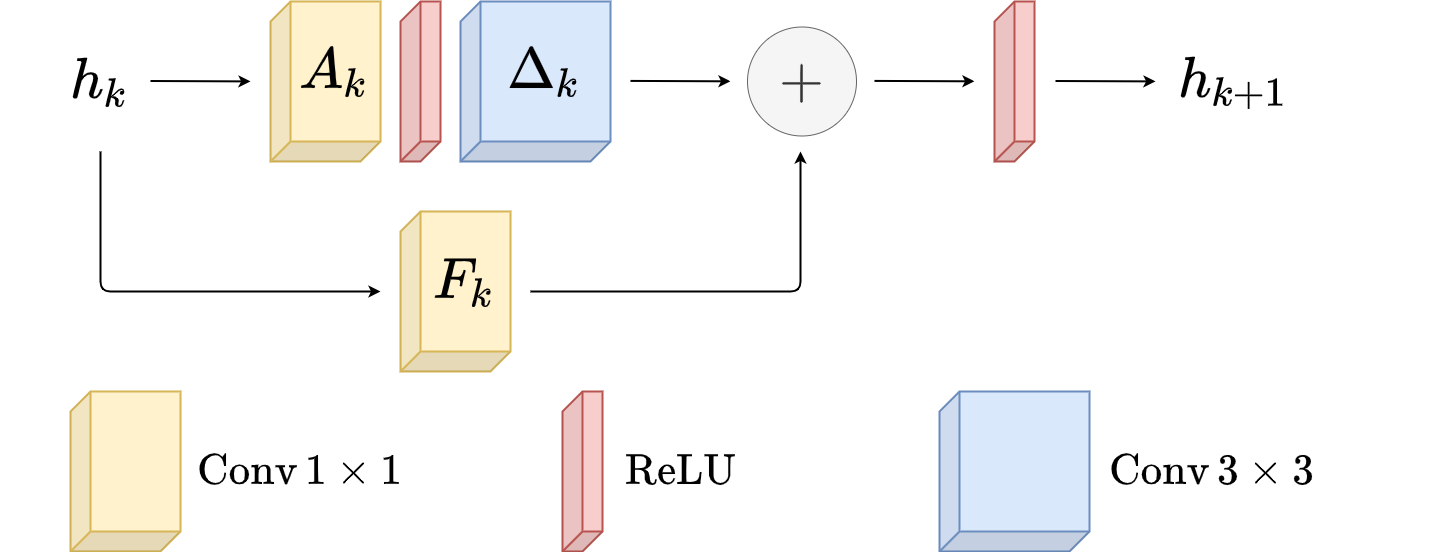}
    \vspace{0.1in}
    \caption{Basic Block from Figure~\ref{fig:cifar_net_basic}. See~\eqref{eq:2d-forward-map} for details.}
    \label{fig:cifar_net_block}
\end{figure}
Our network still possesses the skip connections from~(\ref{forward-map-resnet}): the dynamics of the hidden state reads
\begin{equation} \label{eq:2d-forward-map}
    h_{k+1} = \sigma\left(h_{k} +\Delta_k * \sigma\left(A_k*h_{k}\right)+F_k*h_{k}\right)
\end{equation}
for $k=0,\ldots,L-1$, where $\sigma=\relu$. Here, $\Delta_k$, $A_k$, and $F_k$ are kernels and $*$ denotes convolution. Note that $\Delta_k$ plays the same role as $\delta_k^{(L)}$ from~(\ref{forward-map-resnet}). To lighten the notation, we omit the superscripts $x$ (the input) and $L$ (the number of layers).

We train our residual networks at depths ranging from $L_{\min}=8$ to $L_{\max}=121$ on the CIFAR-10~\cite{CIFAR-10} dataset with the unregularized relative entropy loss. Here,  'depth' is the number of residual connections. We note that a network with $L_{\max}=121$ is already very deep. As a comparison, a standard ResNet-152~\cite{HZRS2016} has depth $L=50$ in our framework. \\


\begin{figure}[tb!]
    \includegraphics[width=0.45\textwidth]{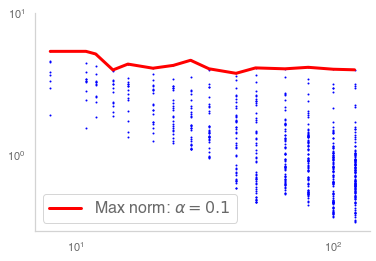}
    \includegraphics[width=0.45\textwidth]{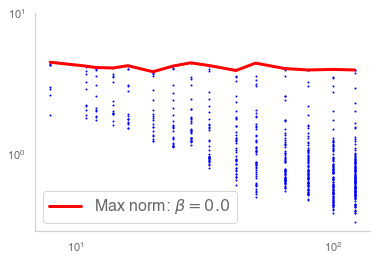}
    \caption{Scaling of $\Delta^{(L)}$ (left) and $A^{(L)}$ (right) against the network depth $L$ for convolutional architectures on CIFAR-10. In blue: spectral norm of the kernels $\Delta^{(L)}_k$, resp. $A^{(L)}_k$, for $k=0, \ldots, L-1$. In red: maximum norm, defined in Table~\ref{tab:summary_results}. The plots are in log-log scale.}
    \label{fig:cifar_scaling_with_depth}
\end{figure}

{\bf Results.} Table~\ref{tab:cifar_test_acc} shows the  accuracy of our convolutional residual networks trained on an NVIDIA GeForce RTX 2080 GPU on the CIFAR-10 dataset. The results are in line with those of traditional ResNet architectures~\cite{HZRS2016}, even though our networks do not have batch normalization layers~\cite{DBLP:journals/corr/IoffeS15}. It is also noteworthy to add that our concept of depth is not that of traditional ResNets. We define the number of layers $L$ as the number of skip connections in the network, that is the number of $\Delta_k$ kernels in~\eqref{eq:2d-forward-map}.

\newpage
{
    \tabulinesep=1.3mm
        \captionof{table}{Learning error in $\%$ on CIFAR-10 for each network depth $L$.}
    \begin{center}
    \begin{tabu}{c|c|c|c|c|c|c|c|c}
        \hline
        $L$ & 8 & 11 & 12 & 14 & 16 & 20 & 24 & 28 \\ 
        \hline
        Test error & 6.64 & 6.37 & 6.32 & 5.98 & 6.25 & 5.98 & 6.24 & 7.03 \\ 
        \hline \hline
        $L$ & 33 & 42 & 50 & 65 & 80 & 100 & 121 & \\ 
        \hline
        Test error & 6.13 & 6.21 & 6.32 & 6.19 & 6.30 & 6.20 & 6.37 & \\
        \hline
    \end{tabu}
    \end{center}
    \label{tab:cifar_test_acc}
    \vspace{0.2cm}
}

As in Section~\ref{sec:fully_connected}, we investigate how the weights scale with depth and whether Scaling regime~\ref{hypothesis.1} or Scaling regime~\ref{hypothesis.2} holds true for convolutional layers. To that end, we follow the steps of~\cite{DBLP:journals/corr/abs-1805-10408} to get the singular values, and therefore the spectral norms, of the linear operators defined by the convolutional kernels $\Delta_k^{(L)}$ and $A_k^{(L)}$. Figure~\ref{fig:cifar_scaling_with_depth} shows the maximum norm, and hence the scaling of $\Delta^{(L)}$ and $A^{(L)}$ against the network depth $L$. We observe that $\Delta^{(L)} \sim L^{-\alpha}$ and $A^{(L)} \sim L^{-\beta}$ with $\alpha=0.1$ and $\beta = 0$.



\begin{figure}[tb!]
    \includegraphics[width=0.45\textwidth]{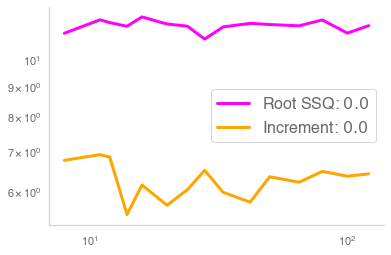}
    \includegraphics[width=0.45\textwidth]{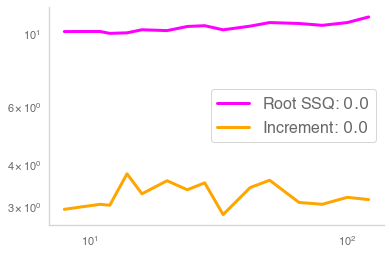}
    \caption{Scaling behavior of $\Delta^{(L)}$ (left) and $A^{(L)}$ (right). We plot in pink the root sum of squares and in orange the $\alpha$-scaled norm of increments of $\Delta^{(L)}$ (left) and the $\beta$-scaled norm of increments of $A^{(L)}$ (right). Plots are in log-log scale. The root sum of squares and the scaled norm of increments are defined in Table~\ref{tab:summary_results}. We obtain $\alpha$ and $\beta$ from Figure~\ref{fig:cifar_scaling_with_depth}.}
    \label{fig:cifar_hyp_with_depth}
\end{figure}

We then use the values obtained for $\alpha$ and $\beta$ to verify which Scaling regime holds. Figure~\ref{fig:cifar_hyp_with_depth} shows that both the $\alpha$-scaled norm of increments of $\Delta^{(L)}$ and the $\beta$-scaled norm of increments of $A^{(L)}$ seem to have lower bounds as the depth grows. This suggests that Scaling regime~\ref{hypothesis.1} does not hold for convolutional layers.

We also observe that the root sum of squares stays in the same order as the depth increases. Coupled with the fact that the maximum norms of $\Delta^{(L)}$ and $A^{(L)}$ are close to constant order as the depth increases, this suggests that the scaling limit is sparse with a finite number of weights being of constant order in $L$.

\subsection{Summary: three scaling regimes \label{sec:exp_summary}}

Our experiments show different scaling regimes for trained weights based on the network architecture.F
or fully-connected layers with $\tanh$ activation and a shared $\delta^{(L)}\in\R$, we observe a behavior consistent with Scaling regime~\ref{hypothesis.1} for both the synthetic dataset and MNIST.  
For fully-connected layers with $\relu$ activation and $\delta_k^{(L)}\in\R$, we observe that Scaling regime~\ref{hypothesis.2} holds for the synthetic dataset and MNIST. We deduce that the results for fully-connected layers are consistent with our findings in Figure~\ref{fig:trained_weights}.

In the case of convolutional architectures trained on CIFAR-10 and presented in Section~\ref{sec:classification}, we observe that the maximum norm of the trained weights does not decrease with the network depth and the trained weights display a sparse structure, indicating a third scaling regime corresponding to sparse scaling limits for both $\Delta^{(L)}$ and $A^{(L)}$. These results are consistent with previous evidence on the existence of sparse CNN representations for image recognition \cite{mallat2016}. We stress that the setup for our CIFAR-10 experiments has been chosen to approach state-of-the-art performance with our generic architecture, as shown in Figures \ref{fig:cifar_net_basic} and \ref{fig:cifar_net_block}.




\section{Deep network limit} \label{sec:results}

In this section, we study the scaling limit of the hiddent state dynamics \eqref{forward-map-resnet} under scaling regimes \ref{hypothesis.1} and  \ref{hypothesis.2}.

\subsection{Scaling regime \ref{hypothesis.1}: ODE limit}
First, we show that the scaling regime~\ref{hypothesis.1} together with a smooth and Lipschitz-continuous activation function lead to two ODE limits under different parameter regimes, including the neural ODE described in \cite{CRBD2018,TvG2018,HR2018} as a special case.

 We consider  a setup which is consistent with Scaling regime \ref{hypothesis.1} and $\delta^{(L)} =L^{-\alpha}$ for some $\alpha \ge 0$:
\begin{equation} \label{eq:resnet.v3}
\begin{aligned}
h_0^{(L)} &= x, \\
h^{(L)}_{k+1} &= h^{(L)}_{k} + L^{-\alpha}
\,\sigma_d\left(A^{(L)}_k h^{(L)}_{k}+ b^{(L)}_k\right),
\end{aligned}
\end{equation}
with
\begin{eqnarray*}
A^{(L)}_k = L^{-\beta}\overbar{A}_{k/L},\quad b^{(L)}_k = L^{-\beta}\overline{b}_{k/L}.
\end{eqnarray*}

\noindent We focus our analysis on smooth activation functions.
\begin{ass}[Activation function]\label{ass:activation}
The activation function $\sigma$ satisfies $\sigma \in \CC^3(\mathbb{R},\mathbb{R})$, $\sigma(0)=0$, $\sigma^{\prime}(0)=1$, and has a bounded third derivative  $\sigma^{\prime\prime\prime}$.
\end{ass}
Most smooth activation functions, including $\tanh$, satisfy this condition. 
The boundedness of the third derivative $\sigma^{\prime\prime\prime}$ may be relaxed to an exponential growth condition \cite{peluchetti2020}.

As observed in the numerical experiments, non-smooth activation functions such as $\relu$ lead to a different scaling regime to that of smooth functions.

We now describe ODE limits under Scaling regime~\ref{hypothesis.1}. Let $\hbarr^{(L)}:[0,1]\to \mathbb{R}^d$ be a continuous-time extension of the hidden states $h^{x,(L)}_k$:
\begin{eqnarray}\label{eq:interpolation3}
\hbarr^{(L)}_t \coloneqq h_k^{x,(L)} \one_{\frac{k}{L}\leq t <\frac{k+1}{L}}, \quad k =0,1,\ldots,L.
\end{eqnarray}
\begin{thm}[ODE limits under Scaling regime~\ref{hypothesis.1}]\label{thm:H1} Under Assumption~\ref{ass:activation} and $\sigma$ Lipschitz, 
\begin{itemize}
    \item \underline{Neural ODE limit \cite[Lemma 4.6]{TvG2018}:} If $\alpha=1$ and $\beta=0$ and we further assume that $\overbar{A} \in \HH^1\hspace{-3pt} \left( [0,1], \R^{d\times d} \right)$ and $\overline{b} \in \HH^1\hspace{-3pt}\left( [0,1], \R^{d} \right)$, then the interpolated hidden state dynamics~\eqref{eq:interpolation3} converge to the solution of the neural ODE
   \begin{eqnarray} \label{eq:limit-node}
  \frac{\dd H_t}{\dd t} = \sigma(\overbar{A}_t H_t+\overline{b}_t),\qquad H_0=x,
   \end{eqnarray}
   in the sense that  $\lim_{L \rightarrow \infty} \sup_{0 \leq t \leq 1}\|H_t-\hbarr_t\|=0$.
    \item \underline{A different ODE limit:}
    If $\alpha+\beta=1$ and $\beta>0$, and there exist $M>0$ and $\kappa>0$ such that $\forall s,t\in[0,1]$, $\norm{\overbar{A}_t-\overbar{A}_s} +\norm{\overline{b}_t-\overline{b}_s} \leq M |t-s|^{\kappa/2}$, then the interpolated hidden state dynamics~\eqref{eq:interpolation3} converge to the solution of the following ODE
   \begin{eqnarray}\label{eq:limit2}
  \frac{\dd H_t}{\dd t} = \overbar{A}_t H_t+\overline{b}_t,\qquad H_0=x,
   \end{eqnarray}
     in the sense that  $\lim_{L \rightarrow \infty} \sup_{0 \leq t \leq 1}\|H_t-\hbarr_t\|=0$.
\end{itemize}
\end{thm}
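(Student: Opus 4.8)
The plan is to treat both limits uniformly as the convergence of a (possibly nonlinear) forward Euler scheme with step $1/L$, controlled by a discrete Gronwall inequality. Throughout write $H_k \coloneqq H_{k/L}$ for the exact solution sampled on the grid and $e_k \coloneqq \norm{h_k^{(L)} - H_k}$. I first record two preliminaries. (i) \emph{Well-posedness of the limit.} In the neural ODE case the drift $f(t,h)=\sigma_d(\overbar{A}_t h + \overline{b}_t)$ is Lipschitz in $h$ uniformly in $t$ — since $\overbar{A}$ is continuous on a compact domain, hence bounded, and $\sigma$ is Lipschitz — and continuous in $t$ (as $\HH^1\hookrightarrow\CC^0$ in one dimension), so Cauchy--Lipschitz gives a unique solution; in the second case \eqref{eq:limit2} is linear with continuous coefficients. (ii) \emph{Uniform boundedness of the discrete trajectories.} Using $\sigma(0)=0$ and $\norm{\sigma_d(z)}\le\mathrm{Lip}(\sigma)\norm{z}$, and the identity $L^{-\alpha}L^{-\beta}=L^{-1}$ (from $\alpha+\beta=1$), the step obeys $\norm{h_{k+1}}\le(1+C/L)\norm{h_k}+C'/L$, so discrete Gronwall yields $\sup_L\sup_{k\le L}\norm{h_k^{(L)}}\le C$. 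Consequently the arguments $L^{-\beta}(\overbar{A}_{k/L}h_k+\overline{b}_{k/L})$ stay in a fixed compact set uniformly in $k,L$.

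For the second (new) limit the essential step is a Taylor expansion that linearizes the scheme. Writing $v_k=\overbar{A}_{k/L}h_k+\overline{b}_{k/L}$ and using $\sigma(0)=0$, $\sigma'(0)=1$ together with the boundedness of $\sigma''$ on the relevant compact set, each component satisfies $\sigma(L^{-\beta}v_{k,i})=L^{-\beta}v_{k,i}+R(L^{-\beta}v_{k,i})$ with $\abs{R(y)}\le C y^2$. Multiplying by $L^{-\alpha}$ and inserting $\alpha+\beta=1$ gives
\begin{equation*}
h_{k+1}-h_k=\tfrac{1}{L}\left(\overbar{A}_{k/L}h_k+\overline{b}_{k/L}\right)+r_k,\qquad \norm{r_k}\le C\,L^{-\alpha-2\beta}\norm{v_k}^2=C\,L^{-1-\beta},
\end{equation*}
so the scheme is exactly the forward Euler discretization of \eqref{eq:limit2} perturbed by a remainder whose total mass $\sum_{k<L}\norm{r_k}\le C L^{-\beta}$ vanishes since $\beta>0$. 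In the neural ODE case no expansion is needed: with $\alpha=1$, $\beta=0$ the scheme is literally the Euler scheme for \eqref{eq:limit-node}.

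Convergence then follows by comparing one step with the exact increment $H_{k+1}-H_k=\int_{k/L}^{(k+1)/L}\dot H_s\,\dd s$. I split the error into a Lipschitz-in-$h$ term bounded by $(C/L)e_k$ and a local consistency term bounded by $\int_{k/L}^{(k+1)/L}\norm{f(k/L,H_k)-f(s,H_s)}\,\dd s$. The latter I control through the time-regularity of the coefficients: the $\HH^1$ hypothesis embeds into $\CC^{1/2}$ (so $\kappa=1$), respectively the $\kappa/2$-Hölder hypothesis is assumed, while $\dot H$ is bounded, whence each local consistency error is $O(L^{-1-\kappa/2})$ and their sum is $O(L^{-\kappa/2})$. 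Collecting the three contributions gives $e_{k+1}\le(1+C/L)e_k+\rho_k$ with $\sum_k\rho_k\to0$ (the $\rho_k$ also absorbing the Taylor remainder in the second case); discrete Gronwall yields $\max_k e_k\le e^{C}\sum_k\rho_k\to0$. Finally, since $\norm{H_t-H_{\floor{Lt}/L}}\le C/L$, this grid convergence transfers to $\sup_{0\le t\le1}\norm{H_t-\hbarr_t}\to0$.

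I expect the main obstacle to be the second case. One must verify that the nonlinear Taylor remainder is genuinely negligible, which hinges on the interplay $\alpha+\beta=1$, $\beta>0$ producing the order $L^{-1-\beta}$, and crucially on the uniform boundedness of the trajectories, which is what keeps the activation argument in a region where $\sigma''$ is controlled. The time-discretization consistency error under mere Hölder (rather than Lipschitz or $\CC^1$) regularity is the other delicate point, but it is dispatched by the standard modulus-of-continuity estimate without appealing to any derivative bound.
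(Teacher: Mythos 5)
Your proposal is correct and follows essentially the same route as the paper's proof: uniform boundedness of the discrete trajectories via the Lipschitz bound and a discrete Gronwall argument, a Taylor expansion of $\sigma$ around $0$ showing the scheme is a perturbed Euler scheme for \eqref{eq:limit2} with remainder of total mass $O(L^{-\beta})$, a consistency estimate from the H\"older regularity of $\overbar{A},\overline{b}$, and a final discrete Gronwall step. The only cosmetic differences are that the paper keeps the $\sigma''(0)$ term explicit in a third-order expansion rather than absorbing everything past first order into a quadratic remainder, and that it defers the first bullet to the cited reference rather than arguing it directly as you do.
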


\subsection{Scaling regime \ref{hypothesis.2}}

Let $(\Omega, \mathcal{F}, \mathbb{F}, \mathbb{P})$ be a probability space with a $\mathbb{P}$-complete filtration $\mathbb{F}=(\mathcal{F}_t)_{t\geq 0}$. Let  $(B^A_t)_{t\ge 0}$, resp. $(B^b_t)_{t\ge 0}$, be $d\times d$-dimensional, resp. $d$-dimensional, $\mathbb{F}$-standard uncorrelated Brownian motions.  We consider 
a setup which is consistent with Section \ref{sec:experiments} where the noise part comes from the increment of some stochastic process and $\delta^{(L)} =L^{-\alpha}$ for some $\alpha \ge 0$:
\begin{equation} \label{eq:resnet.v2}
\begin{aligned}
h_0^{(L)} &= x, \\
h^{(L)}_{k+1} &= h^{(L)}_{k} + L^{-\alpha}
\,\sigma_d\left(A^{(L)}_k h^{(L)}_{k}+ b^{(L)}_k\right),
\end{aligned}
\end{equation}
with
\begin{eqnarray*}
A^{(L)}_k &=& L^{-\beta}\overbar{A}_{k/L} + \left(W^A_{(k+1)/L}-W^A_{k/L}\right),\\
b^{(L)}_k &=& L^{-\beta}\overline{b}_{k/L} + \left(W^b_{(k+1)/L}-W^b_{k/L}\right),
\end{eqnarray*}
where $(W^A_t)_{t\in [0,1]}$ and $(W^b_t)_{t\in[0,1]}$ are Itô processes \cite{revuz2013continuous}  adapted to $\mathbb{F}$ and can be written in the form:
\begin{equation}\label{eq:ito_process}
\begin{aligned}
\left(\dd W_t^A\right)_{ij} &= \left(U^A_t\right)_{ij} \dd t + \sum_{k,l=1}^d \left(q_t^A\right)_{ijkl} \left(\dd B^{A}_t\right)_{kl} \quad \mbox{for }\, i,j=1, \ldots, d ,\\
\dd W_t^b &= U^b_t \dd t +q^b_t \,\dd B^b_t,
\end{aligned}
\end{equation}
with $W_0^A = 0$, $W_0^b = 0$, $q_t^A\in \mathbb{R}^{d,\otimes4}$ and $q_t^b\in \mathbb{R}^{d\times d}$ for $t\in [0,1]$.
We use the following notation for the quadratic variation of $W^A$ and $W^b$:\begin{align}
\big[W^A\big]_t = \int_0^t \Sigma^A_u\, \dd u,\quad \big[W^b\big]_t = \int_0^t \Sigma^b_u\, \dd u, \label{eq.qv}
\end{align}
where $\Sigma^A$ and $\Sigma^b$ are bounded processes with values respectively in $\mathbb{R}^{d,\,\otimes 4}$ and $\mathbb{R}^{d \times d}$.
{ From \eqref{eq:ito_process} and \eqref{eq.qv}, we have the quadratic variation process as follows:
\begin{equation} \label{def:covariance-tensors}
\left(\Sigma_t^A\right)_{i_1 j_1 i_2 j_2} \coloneqq \sum_{k,l=1}^d \left(q_t^A\right)_{i_1 j_1 kl} \, \left(q_t^A\right)_{i_2 j_2 kl}, \quad \mbox{for } i_1, j_1, i_2, j_2 = 1, \ldots, d, \quad
\Sigma_t^b \coloneqq q_t^b \left(q_t^b\right)^{\top}.
\end{equation}
Here 
$(U_t^A)_{t\geq 0}$, $(U_t^b)_{t\geq 0}$, $(\Sigma_t^A)_{t\geq 0}$ and $(\Sigma_t^b)_{t\geq 0}$ are progressively measurable processes that satisfy the following conditions.
\begin{ass}[Regularity of the Ito processes $(W^A,W^b)$ and continuous functions $(\overbar{A},\overline{b})$] \label{ass:ito} We assume:
\begin{itemize}
    \item[(i)] There exists a constant $C_1>0$ such that almost surely
\begin{eqnarray}\label{eq:bound_ito}
\sup_{0 \leq t \leq 1}\norm{U_t^A} + \sup_{0 \leq t \leq 1} \norm{U_t^b} + \sup_{0 \leq t \leq 1} \norm{\Sigma_t^A} + \sup_{0 \leq t \leq 1} \norm{\Sigma_t^b} \leq C_1.
\end{eqnarray}
\item[(ii)]  There exist $M>0$ and $\kappa>0$ such that $\forall s,t\in[0,1]$ almost surely
 \begin{eqnarray}\label{eq:continuity_ito}
\norm{U_t^A-U^A_s}^2 + \norm{U_t^b-U^b_s}^2+ \norm{\Sigma_t^A-\Sigma^A_s}^2 +\norm{\Sigma_t^b-\Sigma^b_s}^2  \leq M |t-s|^{\kappa},
 \end{eqnarray}
 and  \begin{eqnarray}\label{eq:continuity_func}
\norm{\overbar{A}_t-\overbar{A}_s}^2 +\norm{\overline{b}_t-\overline{b}_s}^2 \leq M |t-s|^{\kappa}.
  \end{eqnarray}
\end{itemize}
\end{ass}
\noindent Note that~\eqref{eq:bound_ito} implies that $(U^A,U^B,\Sigma^A,\Sigma^B)$ are almost surely uniformly bounded and \eqref{eq:continuity_ito} implies that $(U^A,U^B,\Sigma^A,\Sigma^B)$ are almost surely H\"older continuous with exponent $\kappa/2$.

\begin{lem}[Uniform integrability]
Under Assumption \ref{ass:ito} (i), we have
\begin{equation}\label{eq:uniform_W}
\mathbb{E}\left[\sup_{0\leq s \leq 1}\norm{W^A_s}^{p_0}\right] \, \lor \, \mathbb{E}\left[\sup_{0\leq s \leq 1}\norm{W^b_s}^{p_0}\right]<\infty,
\end{equation}
for any $p_0> 1$.
\end{lem}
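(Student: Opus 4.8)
The plan is to treat $W^A$ and $W^b$ in exactly the same way, so I describe the argument for $W^A$ and note that the bound for $W^b$ follows verbatim. First I would split the Itô process into its drift and martingale parts, writing for each $s\in[0,1]$
\[
W^A_s = \int_0^s U^A_u\,\dd u + M^A_s, \qquad (M^A_s)_{ij} \coloneqq \sum_{k,l=1}^d \int_0^s \left(q^A_u\right)_{ijkl}\,\left(\dd B^A_u\right)_{kl},
\]
and then use the elementary inequality $(a+b)^{p_0}\le 2^{p_0-1}(a^{p_0}+b^{p_0})$ to reduce the claim to a separate $p_0$-th moment bound on the supremum of each of the two parts.

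The drift term is handled pathwise. By Jensen's inequality (or Cauchy--Schwarz in time) together with Assumption~\ref{ass:ito}(i),
\[
\sup_{0\le s\le 1}\norm{\int_0^s U^A_u\,\dd u} \le \int_0^1 \norm{U^A_u}\,\dd u \le C_1 \quad \text{a.s.},
\]
so its $p_0$-th moment is at most $C_1^{p_0}<\infty$ and needs no further work.

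For the martingale term I would invoke the Burkholder--Davis--Gundy inequality, which furnishes a constant $C_{p_0}$ such that
\[
\E\left[\sup_{0\le s\le 1}\norm{M^A_s}^{p_0}\right] \le C_{p_0}\,\E\left[\left(\int_0^1 \tensornorm{q^A_u}^2\,\dd u\right)^{p_0/2}\right].
\]
The key observation is that the integrand is governed by the quadratic-variation density: from the definition~\eqref{def:covariance-tensors}, $\tensornorm{q^A_u}^2 = \sum_{i,j,k,l}\left(q^A_u\right)_{ijkl}^2 = \sum_{i,j}\left(\Sigma^A_u\right)_{ijij}$, which is bounded by a dimensional constant times $\norm{\Sigma^A_u}\le C_1$ under Assumption~\ref{ass:ito}(i). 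Hence $\int_0^1\tensornorm{q^A_u}^2\,\dd u$ is a.s. bounded by a constant, the right-hand side above is finite, and combining with the drift bound yields~\eqref{eq:uniform_W} for $W^A$. The identical computation, now using $\Sigma^b_u = q^b_u\left(q^b_u\right)^{\top}$, delivers the bound for $W^b$.

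I do not expect a genuine obstacle: the statement is a standard moment estimate for Itô processes with bounded coefficients, and the only point requiring care is the bookkeeping that converts the boundedness of the quadratic-variation tensors $\Sigma^A,\Sigma^b$ into an almost-sure bound on the Hilbert--Schmidt norms of the diffusion coefficients $q^A,q^b$, after which BDG does all the work. The restriction $p_0>1$ is harmless, since the argument in fact works for every $p_0\ge 1$, and the supremum over $s$ is precisely what the BDG inequality is designed to control.
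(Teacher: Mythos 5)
Your proof is correct and follows essentially the same route as the paper: decompose $W^A$ into drift plus martingale, bound the drift pathwise via $\sup_t\|U^A_t\|\le C_1$, and control the martingale part with the Burkholder--Davis--Gundy inequality using the almost-sure bound on $\Sigma^A$ (and likewise for $W^b$). Your extra bookkeeping relating $\tensornorm{q^A_u}^2$ to the trace of $\Sigma^A_u$ is a slightly more explicit version of the step the paper passes over, but the argument is the same.
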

\begin{proof}
By Minkowski's inequality and Assumption \ref{ass:ito}-(i),
\begin{eqnarray*}
\mathbb{E}\left[\sup_{0\leq s \leq 1}\norm{W^A_s}^{p_0}\right] &\leq& 2^{p_0-1} \mathbb{E}\left[\sup_{0\leq s \leq 1}\norm{\int_0^s U_t^A \dd t}^{p_0} \right]  \\
&+& 2^{p_0-1} \mathbb{E}\left[\sup_{0\leq s \leq 1}\norm{\left(\int_0^s \sum_{k,l=1}^d \left(q_t^A\right)_{ijkl} \left(\dd B^{A}_t\right)_{kl}\right)_{i,j}}^{p_0} \right] \\
&\leq& 2^{p_0-1}C_1^{p_0} + 2^{p_0-1} \mathbb{E}\left[\sup_{0\leq s \leq 1}\norm{\left(\int_0^s \sum_{k,l=1}^d \left(q_t^A\right)_{ijkl} \left(\dd B^{A}_t\right)_{kl}\right)_{i,j}}^{p_0} \right] 
\end{eqnarray*}
By the Burkholder-Davis-Gundy inequality and Assumption \ref{ass:ito} $(i)$,
\begin{eqnarray*}
\mathbb{E}\left[\sup_{0\leq s \leq 1}\norm{\left(\int_0^s \sum_{k,l=1}^d \left(q_t^A\right)_{ijkl} \left(\dd B^{A}_t\right)_{kl}\right)_{i,j}}^{p_0} \right]
\leq C_{p_0}\mathbb{E}\left[\left(\int_0^1  \Sigma_u^A \dd u \right)^{p_0/2}\right] \leq C_{p_0}\,C_1^{p_0/2}.
\end{eqnarray*}
Combining the two inequalities above, we get
$\mathbb{E}\left[\sup_{0\leq s \leq 1}\norm{W^A_s}^{p_0}\right]<\infty$. Similarly $\mathbb{E}\left[\sup_{0\leq s \leq 1}\norm{W^b_s}^{p_0}\right]<\infty$ holds.
\end{proof}
}

\noindent Write $Q:[0,1]\times\mathbb{R}^{d} \rightarrow \mathbb{R}^d$, where each component $Q_i$ is defined, for $i=1,\ldots,d$, as
\begin{equation} \label{eq:Q_thm}
    Q_i(t,x)\coloneqq \sum_{j,k=1}^d x_j x_k \left(\Sigma^A_t\right)_ {ijik} + \Sigma^b_{t,ii}.
\end{equation}


\noindent Let $\hbarr^{(L)}:[0,1]\to \mathbb{R}^d$ be a continuous-time extension of the hidden states $h^{(L)}_k$:
\begin{eqnarray}\label{eq:interpolation2}
\hbarr^{(L)}_t \coloneqq h_k^{x,(L)} \one_{\frac{k}{L}\leq t <\frac{k+1}{L}}, \quad k =0,1,\ldots,L.
\end{eqnarray}

\begin{ass}[Uniform integrability]\label{ass:strong}
There exist $p_1 >4$ and a constant $C_0$ such that for all $L$,
 \begin{eqnarray}\label{ass:boundedness}
 \mathbb{E}\left[\sup_{0 \leq t \leq 1}\norm{\hbarr^{(L)}_t}^{p_1} \right]  \leq C_0.
 \end{eqnarray}
 \end{ass}
 Assumption \ref{ass:strong} is standard in the convergence of approximation schemes for SDEs \cite{higham2002strong}. 
In practice, condition~\eqref{ass:boundedness} is  guaranteed throughout the training as both the inputs and the outputs of the network are bounded. 

Let us now describe the intuition behind the deep network limit when $\beta>0$. Denote $t_k=k/L$ and define for $s\in [t_k,t_{k+1})$:
\begin{align*}
\widetilde{M}^{(L)}_{k, s} \coloneqq \left(W^A_s- W^A_{t_k}\right) h^{(L)}_{k}+\left(W^b_s- W^b_{t_k}\right) +L^{1-\beta} \overbar{A}_{t_k} h^{(L)}_{k}(s-t_k)+L^{1-\beta} \overline{b}_{t_k}(s-t_k).
\end{align*}
Using Itô's formula~\cite{ito1944} to $\sigma \big( \widetilde{M}^{(L)}_{k, s} \big)$ for $s\in [t_k,t_{k+1})$, we obtain the following approximation
\begin{equation} \label{eq:euler_maruyama3} 
    h^{(L)}_{k+1}-h^{(L)}_k = \delta^{(L)} \sigma \hspace{-1pt}\left( \widetilde{M}_{k, t_{k+1}}^{(L)}\right)  \simeq D_1 + D_2 + D_3,
\end{equation}
where
\begin{align*}
D_1 &\coloneqq L^{-\alpha}\left(\left(W^A_{t_{k+1}}-W^A_{t_{k}}\right)h_k^{(L)}+\left(W^b_{t_{k+1}}-W^b_{t_{k}}\right)\right), \\
D_2 &\coloneqq \frac{1}{2} L^{-\alpha} \sigma^{\prime \prime}(0)\,Q \hspace{-1pt} \left(t_k,{h^{(L)}_k}\right)({t_{k+1}-t_{k}}), \\
D_3 &\coloneqq L^{1-\beta-\alpha}\left(\overbar{A}_{t_k}h_{k}^{(L)}({t_{k+1}-t_{k}})+\overline{b}_{t_k}({t_{k+1}-t_{k}})\right).
\end{align*}
We observe from $D_1$ that~\eqref{eq:euler_maruyama3} admits a diffusive limit only when $\alpha=0$. In this case, we see that $D_2$ and $D_3$ do not explode only when  $\beta \geq 1$, corresponding to a stochastic differential equation (SDE) limit that is diffusive. 
Another case where we obtain a non-trivial limit is when $\alpha>0$ and $\alpha+\beta=1$, which leads to an ODE limit.

We now provide a precise mathematical description of the different scaling limits of $\hbarr^{(L)}$ for various values of $\alpha$ and $\beta$, using the concept of uniform convergence in $L^2$, also known as strong convergence. For a general exponent $p\geq 1$, we have the following definition.
\begin{defn}[Uniform convergence in $L^p$]\label{def:unif-conv-l2}
Let $p\geq 1$ and $\MM$ be the class of random functions $X:[0,1]\times \Omega\to\R^d$ such that 
$$ \mathbb{E}\left[\mathop{\sup}_{t\in [0,1]} \norm{ X(t) }^p\right] <\infty. $$
  We say that a sequence $(X^{(L)})_{L\in\N}\subset\MM$ \emph{converges uniformly in $L^p$} to $X^*\in\MM$ if
\begin{equation}\label{eq:formula-conv}
    \lim_{L \to \infty}\mathbb{E}\left[\sup_{0\leq t\leq 1}\norm{X^{(L)}_t - {X}^*_t}^p\right] = 0.
\end{equation}
\end{defn}

We now show that Scaling regime~\ref{hypothesis.2} together with a smooth activation function lead to an ODE limit (which is different from the neural ODE) or a stochastic differential equation (SDE) depending on the values of $\alpha$ and $\beta$.
\begin{thm}[ODE limit under Scaling regime~\ref{hypothesis.2}]\label{thm:H2-ode} Under Assumptions~\ref{ass:activation}, \ref{ass:ito}, and~\ref{ass:strong}, 
if $\alpha>0$, $\beta>0$ and $\alpha+\beta=1$, then the interpolated hidden state dynamics~\eqref{eq:interpolation2} converge uniformly in $L^2$ to the solution to the ODE
   \begin{eqnarray}\label{eq:limit5}
  \frac{\dd H_t}{\dd t} = \overbar{A}_t H_t+\overline{b}_t,\qquad H_0=x.
   \end{eqnarray}
\end{thm}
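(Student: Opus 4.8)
The plan is to compare the interpolated trajectory $\hbarr^{(L)}$ directly with the deterministic solution $H$ of the ODE~\eqref{eq:limit5}, exploiting the heuristic decomposition~\eqref{eq:euler_maruyama3}, namely $h^{(L)}_{k+1}-h^{(L)}_k \simeq D_1+D_2+D_3$. Under the constraint $\alpha+\beta=1$ the deterministic term $D_3$ reduces to $(\overbar{A}_{t_k}h^{(L)}_k+\overline{b}_{t_k})(t_{k+1}-t_k)$, which is exactly one step of the explicit Euler scheme for~\eqref{eq:limit5}, whereas $D_1$ and $D_2$ both carry a prefactor $L^{-\alpha}$ with $\alpha>0$ and should therefore wash out. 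The whole task is thus to (a) make the expansion~\eqref{eq:euler_maruyama3} rigorous with a controllable remainder, (b) show that the accumulated $D_1$ and $D_2$ contributions vanish uniformly in $L^2$, and (c) close a Grönwall estimate against the Euler scheme.

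First I would make~\eqref{eq:euler_maruyama3} precise. Applying Itô's formula componentwise to $s\mapsto\sigma\big((\widetilde{M}^{(L)}_{k,s})_i\big)$ on $[t_k,t_{k+1}]$ and using $\widetilde{M}^{(L)}_{k,t_k}=0$, $\sigma(0)=0$, $\sigma'(0)=1$, I would isolate the first-order term (yielding $D_1+D_3$), the second-order Itô-correction term (yielding $D_2$ through the quadratic-variation tensors $\Sigma^A,\Sigma^b$ of~\eqref{def:covariance-tensors} and the definition~\eqref{eq:Q_thm} of $Q$), and a remainder. The remainder is bounded by the third-order Taylor estimate together with $\|\sigma'''\|_\infty<\infty$ from Assumption~\ref{ass:activation}: each increment of $\widetilde{M}^{(L)}_{k,\cdot}$ over a single step is $O(L^{-1/2})$ in every $L^p$ norm by the boundedness in Assumption~\ref{ass:ito}(i) combined with the moment bound~\eqref{ass:boundedness}, so the cubic remainder is $O(L^{-3/2})$ per step and $O(L^{-1/2})$ after summing the $L$ steps.

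Next I would estimate the accumulated $D_1$ and $D_2$ terms in the $\sup_t$, $L^2$ sense. For $\sum_k D_2$, a direct bound using $\sup_t\|\Sigma^A_t\|+\sup_t\|\Sigma^b_t\|\le C_1$ and~\eqref{ass:boundedness} gives $O(L^{-\alpha})$. For $\sum_k D_1$, I would split the increments of $W^A$ and $W^b$ into their drift parts (controlled by $\sup_t\|U^A_t\|,\sup_t\|U^b_t\|\le C_1$, giving $O(L^{-\alpha})$) and their martingale parts; the latter form a discrete stochastic integral against $B^A,B^b$ whose bracket is $O(1)$, so Doob's inequality and the Burkholder--Davis--Gundy inequality — exactly as in the proof of the uniform-integrability lemma — yield an $O(L^{-\alpha})$ bound on $\mathbb{E}\big[\sup_t\|\sum_k D_1\|^2\big]$. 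Here the adaptedness of $h^{(L)}_k$ to $\mathcal{F}_{t_k}$ and the bound~\eqref{ass:boundedness} are what make the martingale estimate close. This is the step I expect to be the main obstacle: the uniform-in-time $L^2$ control of the diffusive sum requires the martingale structure and the moment bound to be combined carefully, and it is also where the hypothesis $\alpha>0$ is genuinely used.

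Finally I would close the argument. Writing $e^{(L)}_t\coloneqq\hbarr^{(L)}_t-H_t$ and using that the surviving term $\sum_k D_3$ is the Euler scheme for~\eqref{eq:limit5}, I would combine the local truncation error of Euler (controlled by the Hölder continuity~\eqref{eq:continuity_func} of $\overbar{A},\overline{b}$ and the $\CC^1$-regularity of $H$), the $O(L^{-\alpha})$ bounds just obtained, and the Lipschitz property of the linear field $y\mapsto\overbar{A}_t y+\overline{b}_t$ (finite since $\overbar{A}$ is continuous on $[0,1]$). A discrete Grönwall inequality applied to $\mathbb{E}\big[\sup_{s\le t}\|e^{(L)}_s\|^2\big]$ then propagates these vanishing contributions to a uniform bound, yielding $\lim_{L\to\infty}\mathbb{E}\big[\sup_{0\le t\le1}\|\hbarr^{(L)}_t-H_t\|^2\big]=0$, i.e. uniform convergence in $L^2$ in the sense of Definition~\ref{def:unif-conv-l2}.
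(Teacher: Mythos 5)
Your proposal is correct and follows essentially the same route as the paper, which proves Theorem~\ref{thm:H2-ode} by reusing the decomposition $D_1+D_2+D_3$ of \eqref{eq:euler_maruyama3} from the proof of Theorem~\ref{thm:H2} and observing that $D_1$ and $D_2$ vanish when $\alpha>0$: a third-order Taylor/It\^o expansion with the bounded-$\sigma'''$ remainder, Doob/BDG control of the accumulated martingale terms via Assumptions~\ref{ass:ito} and~\ref{ass:strong}, and a Gr\"onwall closure against the Euler scheme of the linear ODE (your version is in fact slightly cleaner, since the deterministic globally Lipschitz limit lets you dispense with the stopping-time localization used for the SDE case). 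One minor quantitative slip that does not affect the conclusion: the one-step increment of $\widetilde{M}^{(L)}_{k,\cdot}$ is $\OO\bigl(L^{-\min(\beta,1/2)}\bigr)$ rather than $\OO(L^{-1/2})$ because the trend part contributes $L^{-\beta}$, so the summed cubic remainder is $\OO\bigl(L^{-\min(2\beta,\,\alpha+1/2)}\bigr)$ rather than $\OO(L^{-1/2})$ --- it still vanishes since $\beta>0$.
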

In particular, this implies the convergence of the hidden state process 
for any typical initialization (i.e almost surely with respect to the initialization). Note that in Theorem~\ref{thm:H2-ode}, the limit~\eqref{eq:limit5} defines a linear input-output map behaving like a linear network~\cite{ACGH2019}. This is different from the neural ODE~\eqref{eq:neural_ode_limit}, where the activation function $\sigma$ appears in the limit.



\begin{thm}[SDE limit under Scaling regime~\ref{hypothesis.2}]\label{thm:H2} Let Assumptions~\ref{ass:activation}, \ref{ass:ito} and~\ref{ass:strong} hold and let $\alpha=0$ and $\beta \ge 1$. Denote $H$ as the solution to the SDE
\begin{equation} \label{eq:limit1}
   \dd H_t = \dd W^A_t H_t +\dd W^b_t + \frac{1}{2}\sigma^{\prime \prime}(0)Q(t,H_t)\,\dd t +\one_{\beta=1}(\overbar{A}_t H_t+\overline{b}_t)\,\dd t,
\end{equation}
with initial condition $H_0=x$.
If there exist $p_2>2$  such that $\mathbb{E}\left[\sup_{0 \leq t \leq 1}\|H_t\|^{p_2} \right]  <\infty $, then the interpolated hidden state dynamics~\eqref{eq:interpolation2} converge uniformly in $L^2$ to the solution of~\eqref{eq:limit1}.
\end{thm}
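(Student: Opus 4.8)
The plan is to exhibit the recursion \eqref{eq:resnet.v2} (with $\alpha=0$) as a \emph{nonlinear} Euler--Maruyama scheme for the SDE \eqref{eq:limit1} and to adapt the strong-convergence machinery of Higham, Mao and Stuart \cite{higham2002strong}, which treats linear Euler--Maruyama schemes for diffusions with only locally Lipschitz coefficients via a stopping-time localization. The starting point is the one-step Itô--Taylor expansion announced in \eqref{eq:euler_maruyama3}. Since $\widetilde{M}^{(L)}_{k,t_{k+1}}=A^{(L)}_k h^{(L)}_k+b^{(L)}_k$ and $\widetilde{M}^{(L)}_{k,t_k}=0$, applying Itô's formula to $s\mapsto\sigma_d(\widetilde{M}^{(L)}_{k,s})$ on $[t_k,t_{k+1})$ and using $\sigma(0)=0$, $\sigma'(0)=1$ yields the exact identity $h^{(L)}_{k+1}-h^{(L)}_k=\int_{t_k}^{t_{k+1}}\sigma_d'(\widetilde{M}^{(L)}_{k,s})\,\dd\widetilde{M}^{(L)}_{k,s}+\tfrac12\int_{t_k}^{t_{k+1}}\sigma_d''(\widetilde{M}^{(L)}_{k,s})\,\dd[\widetilde{M}^{(L)}_k]_s$. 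Replacing $\sigma_d'$ by $\sigma'(0)=1$ and $\sigma_d''$ by $\sigma''(0)$, and computing the bracket $[\widetilde{M}^{(L)}_k]$ from \eqref{eq:ito_process} (the $B^A$/$B^b$ cross terms vanish by uncorrelatedness, reproducing $Q$ of \eqref{eq:Q_thm}), produces exactly $D_1+D_2+D_3$ together with a remainder $R_k$ gathering $\int_{t_k}^{t_{k+1}}[\sigma'(\widetilde{M})-1]\,\dd\widetilde{M}$, $\tfrac12\int[\sigma''(\widetilde{M})-\sigma''(0)]\,\dd[\widetilde{M}]$, and the discrepancy between $[\widetilde{M}^{(L)}_k]_{t_{k+1}}$ and $Q(t_k,h^{(L)}_k)/L$.

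Summing over $k$ and recognizing the telescoped $D_1,D_2,D_3$ as Itô and Lebesgue integrals of the piecewise-constant adapted integrand $h^{(L)}_{\eta(s)}$, where $\eta(s)=\floor{Ls}/L$, I next define a continuous-time interpolant $\overbar{h}^{(L)}$ that agrees with $h^{(L)}_k$ at grid points up to the accumulated remainder $\sum_k R_k$ and that solves \eqref{eq:limit1} with coefficients frozen at $(\eta(s),h^{(L)}_{\eta(s)})$ and drift prefactor $L^{1-\beta}$. I then introduce the stopping time $\theta_R=\inf\{t:\norm{\hbarr^{(L)}_t}\vee\norm{H_t}\ge R\}$; on $[0,\theta_R]$ the state is bounded by $R$, so $Q$, $\overbar{A}$, $\overline{b}$, $\sigma''$ and the Taylor remainders become bounded and Lipschitz with $R$-dependent constants, which is the ``bounded coefficients'' reduction. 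For fixed $R$ I then subtract the integral equations for $\overbar{h}^{(L)}$ and $H$, take $\sup_{u\le\tR}$ and expectations, and estimate: the martingale part $\int\dd W^A(h^{(L)}_{\eta}-H)$ by Burkholder--Davis--Gundy with the boundedness of $\Sigma^A$; the drift parts by Cauchy--Schwarz with the local Lipschitz property of $Q$ and the $\kappa/2$-Hölder regularity from Assumption~\ref{ass:ito}. The freezing errors contribute $O(L^{-\kappa/2})$, the within-step increments $O(L^{-1/2})$, and $\sum_k R_k$ vanishes in $L^2$ (orthogonal martingale increments summing to $O(L^{-1/2})$ by Itô isometry, finite-variation parts to $O(L^{-\kappa/2})$ via bounded $\sigma'''$). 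For $\beta>1$ the prefactor $L^{1-\beta}\to0$ kills $D_3$, matching $\one_{\beta=1}$ in \eqref{eq:limit1}, while for $\beta=1$ it equals $1$ and reproduces $\overbar{A}_tH_t+\overline{b}_t$. Gronwall's lemma then gives $\mathbb{E}[\sup_{t\le\tR}\norm{\overbar{h}^{(L)}_t-H_t}^2]\to0$ as $L\to\infty$ for each fixed $R$.

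To remove the localization I split $\mathbb{E}[\sup_t\norm{\hbarr^{(L)}_t-H_t}^2]$ according to $\{\theta_R\ge1\}$ and $\{\theta_R<1\}$. The first is controlled by the previous step; for the second, Hölder's inequality with exponent $p:=\min(p_1,p_2)>2$ gives $\mathbb{E}[\sup\norm{\hbarr^{(L)}-H}^2\one_{\theta_R<1}]\le\mathbb{E}[\sup\norm{\hbarr^{(L)}-H}^{p}]^{2/p}\,\Prob[\theta_R<1]^{1-2/p}$, where the prefactor is bounded uniformly in $L$ by Assumption~\ref{ass:strong} (through the uniform integrability lemma) and the moment bound on $H$, while $\Prob[\theta_R<1]\to0$ as $R\to\infty$ by Markov's inequality. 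Choosing $R=R(L)\to\infty$ slowly, both contributions vanish. A final within-step increment bound converts the continuous interpolant $\overbar{h}^{(L)}$ back to the piecewise-constant $\hbarr^{(L)}$ of \eqref{eq:interpolation2}, yielding the uniform-$L^2$ convergence \eqref{eq:formula-conv}.

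The crux is the control of the nonlinear Itô--Taylor remainder $R_k$ in $L^2$ uniformly in $L$, precisely the feature absent from the linear analysis of \cite{higham2002strong}: since $\sigma''$ is only of linear growth, the remainder estimates must be routed through the bounded $\sigma'''$ and the localization, and the interplay of the second-order correction $D_2$ with the \emph{quadratic} term $Q$ forces fourth-moment control of the state. This is why Assumption~\ref{ass:strong} is imposed with $p_1>4$ rather than the $p_1>2$ that suffices for additive-noise Euler schemes, the fourth-moment margin also affording the Hölder split in the unlocalization.
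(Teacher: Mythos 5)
Your proposal is correct and follows essentially the same route as the paper's proof: both treat \eqref{eq:resnet.v2} as a perturbed (nonlinear) Euler--Maruyama scheme, localize with a stopping time $\theta_R$ \`a la Higham--Mao--Stuart, split the accumulated one-step residual into a martingale part (controlled via conditional-variance estimates and Doob's maximal inequality), a bias part (controlled by the $\kappa/2$-H\"older regularity of $\Sigma^A,\Sigma^b$), and a higher-order Taylor remainder (controlled by the bounded $\sigma'''$), and then remove the localization using the $p_1>4$ and $p_2>2$ moment bounds. The only differences are cosmetic: you derive the one-step expansion by applying It\^o's formula to $s\mapsto\sigma_d(\widetilde{M}^{(L)}_{k,s})$ rather than by a third-order Taylor expansion of $\sigma$ at $0$ evaluated at the endpoint (the two bookkeepings are algebraically equivalent), and you delocalize by letting $R=R(L)\to\infty$ slowly instead of the paper's fixed-$\epsilon$ selection of $\delta$, then $R$, then $L$ via Young's inequality.
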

The proofs of Theorem \ref{thm:H2} is given in Section \ref{sec:proofs}. And the proof of Theorem \ref{thm:H2-ode} follows similar ideas. In particular, $D_1$ and $D_2$ vanish in the limit when $\alpha>0$ in \eqref{eq:euler_maruyama3}.

Interestingly, when the activation function $\sigma$ is smooth, all limits in both Theorems~\ref{thm:H2-ode} and~\ref{thm:H2} depend on the activation only through $\sigma^{\prime}(0)$ (assumed to be $1$ for simplicity) and $\sigma^{\prime\prime}(0)$. In contrast to the behavior of the neural ODE limit~\eqref{eq:neural_ode_limit}, the characteristics of $\sigma$ away from $0$ are not relevant to the limit. In addition, our proofs rely on the smoothness of $\sigma$ at $0$. If the activation function is not differentiable at $0$, then a different limit should be expected.

The case $\overbar{A}\equiv 0$, $\overline{b}\equiv 0$, $\alpha=0$,  and $\beta=1$ in Theorem~\ref{thm:H2} is considered in~\cite{peluchetti2020}, under the additional assumption that $W^A$ and $W^b$ are Brownian motions with constant drift. We consider a more general setup, where we introduce nonzero terms $\overbar{A}$ and $\overline{b}$ and we allow $W^A$ and $W^b$ to be arbitrary Itô processes. Moreover,~\cite{peluchetti2020} prove weak convergence, which corresponds to convergence of quantities averaged across many trained networks with random independent initializations, whereas in practice, the training is done only once. Thus, the strong convergence, shown in Theorems~\ref{thm:H1} and~\ref{thm:H2}, is a more relevant notion for studying the asymptotic behavior of deep neural networks.

Although the ResNet dynamics \eqref{eq:resnet.v2} is {\it not} expressed as an Euler scheme of a (ordinary or stochastic) differential equation, we nevertheless show strong convergence to a limitng  ODE (in the case of Theorem \ref{thm:H2-ode}) or SDE (in the case of Theorem \ref{thm:H2}), using techniques  inspired by \cite{higham2002strong}. The  challenge is to bound the difference between the ResNet dynamics and the Euler scheme of the limiting SDE. It is worth mentioning that the results in \cite{higham2002strong} hold for a class of time-homogeneous (Markov) diffusion processes whereas our result holds for  It\^o processes with bounded coefficients. This distinction is important for training neural networks since the ``diffusion'' assumption involves the distribution of the hidden state dynamics which can never be tested in practice. We can only verify the smoothness of the hidden state dynamics as detailed in Section \ref{sec:experiments}. In addition, we also relax one technical condition assumed in \cite{higham2002strong}, which is difficult to verify in practice. See Remark \ref{rmk:uniform_integrability}.

Note that we assume that the Ito processes $W^A$ and $W^b$ are driven by \textbf{uncorrelated} Brownian motions $B^A$ and $B^b$. This assumption might look strong, but we pose it for ease of exposition: assuming a generic correlation structure between $B^A$ and $B^b$ would only a \textit{cross-term} in the definition of $Q$.

\subsection{Link with numerical experiments}

Let us now discuss how the analysis above sheds light on the numerical results in Section \ref{sec:fully_connected} and Section~\ref{sec:classification}. 


Figure~\ref{fig:scaling_tanh_shared_delta} shows that $\beta=0.2$ and $\alpha=0.7$ for the synthetic dataset with fully-connected layers and $\tanh$ activation function. 
This corresponds to the assumptions of Theorem~\ref{thm:H1} with the ODE limit~\eqref{eq:limit5}. This is also consistent with the estimated decomposition in Figure~\ref{fig:hypothesis_tanh_shared_delta} (right) where the noise part is negligible. 

Regarding $\relu$ activation with fully-connected layers, we observe that $\beta+\alpha = 0.9$ from Figure~\ref{fig:scaling_relu_scalar_delta} (left).
Since ReLU is homogeneous of degree 1 (see Remark~\ref{rmk:relu}), $|\delta^{(L)}|$ can be moved inside $\sigma$, so without loss of generality we can assume $\alpha=0$ and $\beta=0.9$. 
If we replace the $\relu$ function by a smooth version $\sigma^\epsilon$, then the limit is described by the stochastic differential equation~\eqref{eq:limit1}.
The $\relu$ case would then correspond to a limit of this equation as $\epsilon \to 0$. The existence of such a limit is, however, nontrivial and left for future work.

From the experiments with convolutional architectures, we observe that the maximum norm  (Figure~\ref{fig:cifar_scaling_with_depth}), the scaled norm of the increments, and the root sum of squares (Figure~\ref{fig:cifar_hyp_with_depth}) are upper bounded as the number of layers $L$ increases. This indicates that the weights fall into a sparse regime when $L$ is large. In this case, there is no continuous ODE or SDE limit and Scaling regimes~\ref{hypothesis.1} and~\ref{hypothesis.2} both fail.  

\subsection{Detailed proofs}\label{sec:proofs}

\subsubsection{Proof of Theorem \ref{thm:H1}}
It suffices to prove the second case with limit \eqref{eq:limit2}.  First we show that there exists $C_{\infty}>0$ such that
\begin{eqnarray}\label{ode_h_bound}
\sup_L \max_{0\leq k \leq L}\norm{h_k^{(L)}} \leq C_{\infty}.
\end{eqnarray}
Indeed, denote $C_{\sigma}$ as the Lipschitz constant of $\sigma$. Then
\begin{eqnarray*}
\norm{h_{k+1}^{(L)}-h_{k}^{(L)}}\leq \frac{C_{\sigma}}{L} \norm{\overbar{A}_{t_k}h_{k}^{(L)}+\overline{b}_{t_k}} \leq 
\frac{C_{\sigma}}{L} \left(A_{\max}+b_{\max}\right) \left(\norm{h_{k}^{(L)}} +1\right),
\end{eqnarray*}
where $A_{\max}:=\sup_{0 \leq t \leq 1}\norm{\overbar{A}_{t}}<\infty$, $b_{\max}:=\sup_{0 \leq t \leq 1}\norm{\overline{b}_{t}}<\infty$, and $C_{\max} \coloneqq A_{\max} + b_{\max}$. Hence 
\begin{eqnarray*}
\norm{h_{k+1}^{(L)}} \leq \left(\frac{ C_{\sigma}C_{\max} }{L} + 1 \right) \norm{h_{k}^{(L)}} +\frac{C_{\sigma}C_{\max} }{L}.
\end{eqnarray*}
By induction:
\begin{eqnarray*}
\norm{h_{j}^{(L)}} & \leq& \norm{x}  \left(\frac{ C_{\sigma}C_{\max} }{L} + 1 \right)^j + \frac{ C_{\sigma}C_{\max} }{L} \sum_{i=1}^j\left(\frac{ C_{\sigma}C_{\max} }{L} +1 \right)^{i-1}\\
&\leq& \left( \norm{x} + C_{\sigma}C_{\max} \right) \left(\frac{C_{\sigma}C_{\max} }{L} + 1 \right)^{L}\\
&\rightarrow & \left( \norm{x} + C_{\sigma}C_{\max}\right) \exp\left( C_{\sigma}C_{\max} \right) \quad {\rm as } \,\,L \rightarrow \infty.
\end{eqnarray*}
Hence \eqref{ode_h_bound} holds.

\noindent Denote $\Delta h_{k}^{(L)}:= h_{k+1}^{(L)}-h_{k}^{(L)}$ and $M^{(L)}_k(h) := \overbar{A}_{t_k}h+\overline{b}_{t_k}$. From~\eqref{eq:euler_maruyama3} we have \[
\Delta h_k^{(L)} := h_{k+1}^{(L)}-h_k^{(L)} = L^{-\alpha}\sigma \left(  L^{-\beta}M_k^{(L)} \left( h_k^{(L)} \right) \right).
\]
Denote as well $\Delta h_{k,i}^{(L)}$ and $M_{k, i}^{(L)}$ the $i$-th element of $\Delta h_k^{(L)}$ and $M_{k}^{(L)}$, respectively. 
Applying a third-order Taylor expansion of $\sigma$ around $0$ with the help of Assumption~\ref{ass:activation}, for $i=1,2,\ldots,d$, we get
\begin{align}
&\Delta h_{k,i}^{(L)} = L^{-\alpha} \sigma_d \left(L^{-\beta} M_{k,i}^{(L)}\left( h_k^{(L)} \right) \right) \nonumber \\
&= L^{-\beta-\alpha} M_{k,i}^{(L)}\left( h_k^{(L)} \right) + \frac{1}{2}\sigma^{\prime \prime}(0)L^{-2\beta-\alpha} \left( M_{k,i}^{(L)} \left( h_k^{(L)} \right) \right)^2  + \frac{1}{6}\sigma^{\prime\prime\prime}\left( \xi^{(L)}_{k,i} \right) L^{-3\beta-\alpha}\left(M_{k,i}^{(L)} \left( h_k^{(L)} \right) \right)^3 \nonumber\\
&= L^{-1} M_{k,i}^{(L)} \left( h_k^{(L)} \right) + \frac{1}{2}\sigma^{\prime \prime}(0)L^{-\beta-1} \left(M_{k,i}^{(L)}(h_k^{(L)}) \right)^2  + \frac{1}{6}\sigma^{\prime\prime\prime} \left( \xi^{(L)}_{k,i} \right) L^{-2\beta-1}\left(M_{k,i}^{(L)} \left( h_k^{(L)} \right) \right)^3 \label{eq:delta_h}
\end{align}
with $\abs{ \xi^{(L)}_{k,i} } \leq L^{-\beta}\abs{ \overbar{A}_{t_k}h_k^{(L)}+\overline{b}_{t_k} }_i$. The last equation holds since $\alpha+\beta=1$. Denote $t_k=k/L$ for $k=0,1,\ldots,L$ as the uniform partition of the interval $[0,1]$. For $t\in (t_k,t_{k+1}]$, define $\widetilde{H}_0^{(L)} \coloneqq x = h_0^{(L)}$ and 
\begin{align*}
\widetilde{H}_t^{(L)} &\coloneqq h_k^{(L)} + \left(t-t_k\right) M_{k,i}^{(L)}\left( h_k^{(L)} \right) + \frac{1}{2}\sigma^{\prime \prime}(0)L^{-\beta-1} \left(M_{k,i}^{(L)} \left( h_k^{(L)} \right) \right)^2 \\
&\quad + \frac{1}{6}\sigma^{\prime\prime\prime} \left( \xi^{(L)}_{k,i} \right) L^{-2\beta-1} \left( M_{k,i}^{(L)} \left(h_k^{(L)}\right) \right)^3.
\end{align*}
Then we have $\widetilde{H}_{t_{k+1}}^{(L)}=h_k^{(L)} + \Delta h_k^{(L)} = h_{k+1}^{(L)}$ for all $k=0,1,\ldots,L-1$. Recall $(H_t)_{t\in\left[0,1\right]}$ the solution to the ODE \eqref{eq:limit2}. Denote $d^{(L)}_{k}(t) \coloneqq H_t - \widetilde{H}_t^{(L)}$ for $t\in(t_k,t_{k+1}]$ and define the errors \[
e^{(L),1}_{k} \coloneqq \sup_{t_k < t \leq t_{k+1}}\norm{ \widetilde{H}_t^{(L)} - h_k^{(L)}} \quad \mbox{and} \quad e^{(L),2}_{k} \coloneqq \sup_{t_k < t \leq t_{k+1}}\norm{d^{(L)}_{k}(t)}.
\]
We first bound $e^{(L),1}_{k}$. Note that by definition:
\begin{align}
e^{(L),1}_{k} &\leq \norm{ M_k^{(L),i}\left( h_k^{(L)} \right) } L^{-1} + \frac{1}{2}\sigma^{\prime \prime}(0)L^{-\beta-1} \left(M_k^{(L),i} \left( h_k^{(L)} \right) \right)^2  + \frac{1}{6} c_0 L^{-2\beta-1} \abs{ M_k^{(L),i} \left( h_k^{(L)} \right) }^3\nonumber\\
&\leq D_{\infty} L^{-1}, \label{eq:e_1}
\end{align}
where $D_{\infty} := A_{\max} C_{\infty} +b_{\max} +\frac{1}{2}\sigma^{\prime \prime}(0)(A_{\max} C_{\infty} +b_{\max} )^2 +\frac{1}{6}c_0(A_{\max} C_{\infty} +b_{\max} )^2$.
Therefore we have
\[
\lim_{L \rightarrow \infty}\sup_{0 \leq k < L} e^{(L),1}_{k} = 0.
\]
Next, we bound $e^{(L),2}_{k}$. For $t\in (t_{k+1},t_{k+2}]$,
\begin{align}
d^{(L)}_{k+1}(t) &= d^{(L)}_{k}(t_{k+1}) - \left( t-t_{k+1} \right) M_{k+1}^{(L),i} \left( h_{k+1}^{(L)} \right) + \int_{t_{k+1}}^{t} \left( \overbar{A}_s H_s + \overline{b}_s \right) \dd s \nonumber \\
&\quad - \frac{1}{2}\sigma^{\prime \prime}(0) L^{-\beta-1} \left( M_{k+1}^{(L),i} \left( h_{k+1}^{(L)} \right) \right)^2 - \frac{1}{6}\sigma^{\prime\prime\prime}\left( \xi_{k+1}^i \right) L^{-2\beta-1}\left(M_{k+1}^{(L),i} \left( h_{k+1}^{(L)} \right) \right)^3. \label{eq:d_k}
\end{align}
Denote $c_0 := \sup_{x\in \mathbb{R}}|\sigma^{\prime\prime\prime}(x)|<\infty$, hence from \eqref{eq:delta_h} and  \eqref{eq:d_k},
\begin{eqnarray*}
e^{(L),2}_{k+1} &\leq& e^{(L),2}_{k} +\sup_{t_{k+1} < t \leq t_{k+2}} \norm{ \int_{t_{k+1}}^{t}\left(\left(\overbar{A}_sH_s+\overline{b}_s \right)- \left( \overbar{A}_{t_{k+1}}h_{k+1}^{(L)}+\overline{b}_{t_{k+1}}\right) \right) \dd s}\\
&&+  \frac{1}{2}\left|\sigma^{\prime \prime}(0)\right|L^{-\beta-1} \norm{M_{k+1}^{(L)}(h_{k+1}^{(L)}) }^2  + \frac{1}{6}c_0 L^{-2\beta-1}\norm{M_{k+1}^{(L)}(h_{k+1}^{(L)})}^3.
\end{eqnarray*}
Denote $H_{\max} \coloneqq \sup_{0\leq t \leq 1}\norm{H_t}<\infty$. Then,
\begin{eqnarray*}
 \EE_k^{(L)} &\coloneqq& \sup_{t_{k+1} < t \leq t_{k+2}} \norm{ \int_{t_{k+1}}^{t}\left(\left(\overbar{A}_sH_s+\overline{b}_s \right)- \left( \overbar{A}_{t_{k+1}}h_{k+1}^{(L)}+\overline{b}_{t_{k+1}}\right) \right) \dd s}\\
 &\leq& \sup_{t_{k+1} < t \leq t_{k+2}}  \norm{ \int_{t_{k+1}}^{t}\left(\overline{b}_s - \overline{b}_{t_{k+1}} \right) \dd s} +\sup_{t_{k+1} < t \leq t_{k+2}}  \norm{ \int_{t_{k+1}}^{t}\left(\overbar{A}_s - \overbar{A}_{t_{k+1}} \right)H_s \dd s} \\
 && + \sup_{t_{k+1} < t \leq t_{k+2}} \norm{ \overbar{A}_{t_{k+1}}\int_{t_{k+1}}^{t}\left(H_s - h_{k+1}^{(L)} \right) \dd s}.
\end{eqnarray*}
Hence, we deduce
\begin{eqnarray*}
 \EE_k^{(L)} &\leq& \int_{t_{k+1}}^{t_{k+2}}\norm{\overline{b}_s - \overline{b}_{t_{k+1}} } \dd s  + H_{\max}  \int_{t_{k+1}}^{t_{k+2}}\norm{\overbar{A}_s - \overbar{A}_{t_{k+1}} } \dd s  + A_{\max} \int_{t_{k+1}}^{t_{k+2}} \norm{H_s - h_{k+1}^{(L)}}\dd s\\
 &\leq & M(1+H_{\max}) \int_{t_{k+1}}^{t_{k+2}}|s-t_{t_{k+1}}|^{\kappa/2} \dd s + A_{\max} \int_{t_{k+1}}^{t_{k+2}} \norm{H_s - h_{k+1}^{(L)}}\dd s\\
 &\leq &\frac{M}{1+\kappa/2}\left(1+H_{\max}\right)L^{-(1+\kappa/2)} + \sup_{0 \leq t \leq 1}\|\overbar{A}_{t}\| L^{-1}  \left(D_{\infty}L^{-1}+e_{k+1}^{(L),2}\right).
\end{eqnarray*}
The last equation holds by \eqref{eq:e_1}. Then, we have for $L>A_{\max}$,
\begin{align}
\left(1- A_{\max} L^{-1}\right)e^{(L),2}_{k+1} &\leq e^{(L),2}_{k}+ \frac{M}{1+\kappa}\left(1+H_{\max} \right)L^{-(1+\kappa)} +  \frac{1}{2}\sigma^{\prime \prime}(0)L^{-(\beta+1)}\left(A_{\max}C_{\infty}+b_{\max}\right)^2 \nonumber\\
&\quad + \frac{1}{6}c_0 L^{-(2\beta+1)}\left(A_{\max}C_{\infty}+b_{\max}\right)^3+A_{\max}D_{\infty}L^{-2}\nonumber\\
&\leq e_k^{(L),2} + L^{-(1+\nu)} C_2,\label{eq:e_k}
\end{align}
with $\nu := \min\{\kappa,\beta,1\}>0$ and $C_2$ a constant independent of $k$ and $L$. 
Finally, when $L \ge G^{1/\gamma} + 2 A_{\max}$ we have from \eqref{eq:e_k}:
\begin{eqnarray}
 e_0^{(L),2} &\leq& \frac{L^{-(1+\gamma)}G}{1-A_{\max}L^{-1}} \leq \frac{1}{L-A_{\max}}, \label{eq:e_0}
\end{eqnarray}
and for $k=0,\ldots,L-1$,
\begin{eqnarray}
e^{(L),2}_{k+1} &\leq&  \frac{1}{1- A_{\max} L^{-1}} \left(e^{(L),2}_{k} +L^{-(1+\gamma)}G\right)\nonumber\\
&\leq& \left(\frac{1}{1- A_{\max} L^{-1}} \right)^{k+1}e^{(L),2}_{0} +L^{-(1+\gamma)}G \frac{\left(\frac{1}{1- A_{\max} L^{-1}}\right)^{k+2}-1}{\left(\frac{1}{1- A_{\max}L^{-1}}\right)-1}\nonumber\\
&\leq& \exp\left(2A_{\max}\frac{k+1}{L}\right)\frac{1}{L-A_{\max}} +L^{-\gamma} \frac{G}{A_{\max}} \exp\left(2A_{\max}\frac{k+2}{L}\right) \label{eq:e_k6}.
\end{eqnarray}
\eqref{eq:e_k6} holds since $e_0^{(L),2}\leq \frac{1}{L-A_{\max}}$ and $\frac{1}{1-A_{\max}L^{-1}}< 1+2A_{\max}L^{-1} \leq \exp(2A_{\max}L^{-1})$ when $L>2A_{\max}$. Therefore, we conclude
\[
\lim_{L \rightarrow \infty}\sup_{0 \leq k < L} e^{(L),2}_{k} = 0.
\]

\subsubsection{Proof of Theorem \ref{thm:H2-ode}}
We provide a complete proof of Theorem \ref{thm:H2-ode} for the case $\alpha=0$  and $\beta=1$. Other cases follow similarly. When $\alpha=0$ and $\beta=1$, we define the targeted SDE limit for the discrete scheme \eqref{eq:resnet.v2} as follows:
\begin{eqnarray}\label{eq:limit_sde}
\dd H_t = \mu(t,H_t)\dd t + \dd V_t^A \, H_t + \dd V_t^b \,\,\,\,\, \mbox{for} \,\, t\in\left[0,1 \right], \,\,\,\, H_0=x,
\end{eqnarray}
in which
\begin{equation} \label{eq:case1_mean}
\begin{aligned}
\mu\left(t,h\right) &\coloneqq U_t^{A} \, h+\,U_t^{b}+ \overbar{A}_{t}  h + \overline{b}_{t}+\frac{1}{2}\sigma^{\prime \prime}(0)Q(t,h),\\
\dd V_t^A &\coloneqq \sum_{k,l=1}^d \left(q_t^A\right)_{ijkl} \left(\dd B_t^A\right)_{kl},\quad \dd V_t^b \coloneqq q_t^b \,\dd B_t^b,
\end{aligned}
\end{equation}
with $V_0^A=0$ and $V_0^b=0$. Here the quadratic variation process $\frac{1}{2}\sigma^{\prime\prime}(0)Q(t,h)$ is the \textit{Itô correction} term for the drift. On the one hand this correction term introduces non-linearity into the drift and makes the proof challenging. On the other hand, this term is the key for the convergence analysis. See \eqref{eq:bound14} and \eqref{eq:bound15}.

{\bf Euler-Maruyama scheme of the limiting SDE.} Denote $\Delta_L = 1/L$ as the sub-interval length and $t_k=k/L$, $k=0,1,\ldots,L$ as the uniform partition of the interval $[0,1]$. Further denote  $\Delta V^A_k = V^A_{t_{k+1}}-V^A_{t_k}$ and $\Delta V^b_k = V^b_{t_{k+1}}-V^b_{t_k}$  as the increment of the stochastic processes.  Define the Euler-Maruyama discretization scheme of the SDE \eqref{eq:limit_sde} as:
\begin{eqnarray}\label{eq:euler}
\hhat_{k+1}^{(L)}-\hhat_{k}^{(L)} \coloneqq \mu \hspace{-1pt} \left(t_k,\hhat_{k}^{(L)}\right)\Delta_L +\Delta V_k^A  \, \hhat_{k}^{(L)} + \Delta V_k^b,
\end{eqnarray}
and the one-step forward increment follows:
\begin{eqnarray}
f^{(L)}(k,h) \coloneqq \mu \left(t_k,h\right)\Delta_L +\Delta V_k^A  \, h + \Delta V_k^b.
\end{eqnarray}
Therefore \eqref{eq:euler} can be rewritten as $\hhat_{k+1}^{(L)} =\hhat_{k}^{(L)} + f^{(L)} \hspace{-2pt} \left(k,\hhat_{k}^{(L)}\right)$.

{\bf Continuous-time extension.}
Recall that we extend the scheme $\big\{ h^{(L)}_k : k=0, \ldots, L \big\}$ to a continuous-time process $\hbarr^{(L)}_t$ on $t\in[0,1]$ by a piecewise constant and right-continuous interpolation of $\{ h^{(L)}_k : k=0, \ldots, L-1 \}$:
\begin{eqnarray}\label{eq:cte}
\hbarr^{(L)}_t \coloneqq \sum_{k=0}^{L} h_k^{(L)} {\bf 1}_{t_{k}\leq t <t_{k+1}}.
\end{eqnarray}
We call $\hbarr^{(L)}_t$ the {\it continuous-time extension} (CTE) of $\{ h^{(L)}_k : k=0, \ldots, L-1 \}$. 

{\bf Continuous-time approximation.}
Denote 
\begin{eqnarray}
M^{(L)}_k(h) &\coloneqq & \left(\mu\left(t_k,h\right) -\frac{1}{2}\sigma^{\prime \prime}(0)Q(t_k,h)\right)\Delta_L+ \Delta V_k^A  \, h + \Delta V_k^b \nonumber \\
&=&\left(U^{A}_{t_k} \, h+\,U^{b}_{t_k}+ \overbar{A}_{t_k}  h + \overline{b}_{t_k}\right)\Delta_L+ \Delta V_k^A  \, h + \Delta V_k^b \nonumber \\
&=:&\widetilde{\mu}\left(t_k,h\right)\Delta_L+ \Delta V_k^A  \, h + \Delta V_k^b, \label{eq:def:MkL}
\end{eqnarray}
and from~\eqref{eq:euler_maruyama3} we thus have
$$\Delta h_k^{(L)} := h_{k+1}^{(L)}-h_k^{(L)} = \sigma \hspace{-1pt} \left(  M_k^{(L)} \hspace{-2pt} \left( h_k^{(L)} \right) \right).$$
Denote $\Delta h_{k,i}^{(L)}$ and $M_{k,i}^{(L)}$ the $i$-th element of $\Delta h_k^{(L)}$ and $M_{k}^{(L)}$, respectively. 
Applying a third-order Taylor expansion of $\sigma$ around $0$ with the help of Assumption~\ref{ass:activation}, for $i=1,2,\ldots,d$, we get
\begin{align*}
&\Delta h_{k,i}^{(L)} = \sigma \hspace{-1pt} \left(  M^{(L)}_{k,i} \hspace{-2pt} \left( h_k^{(L)} \right) \right) \\
&= M^{(L)}_{k,i} \hspace{-2pt} \left( h_k^{(L)} \right) + \frac{1}{2}\sigma^{\prime \prime}(0) M^{(L)}_{k,i} \hspace{-2pt} \left( h_k^{(L)} \right)^2 + \frac{1}{6} \sigma^{\prime \prime \prime} \hspace{-2pt} \left( \xi^{(L)}_{k,i} \right) M^{(L)}_{k,i} \hspace{-2pt} \left( h_k^{(L)} \right)^3 \\
&= \underbrace{{\mu}_i \hspace{-1pt} \left(t_{k},h_k^{(L)}\right)\Delta_L + \left( \Delta V_k^A  \, h_k^{(L)} \right)_i+ \left( \Delta V_k^b \right)_i}_{f_i^{(L)}\left( k, h_k^{(L)} \right)} + \underbrace{\frac{1}{2}\sigma^{\prime \prime} (0) \left( M_{k,i}^{(L)} \hspace{-2pt} \left( h_k^{(L)} \right)^2 - Q_i \hspace{-1pt} \left( t_k, h_k^{(L)} \right) \right) }_{N_{k,i}^{(L)}\left( h_k^{(L)} \right)} \\
&\quad \, + \frac{1}{6} \sigma^{\prime \prime \prime} \hspace{-2pt} \left( \xi^{(L)}_{k,i} \right) M^{(L)}_{k,i} \hspace{-2pt} \left( h_k^{(L)} \right)^3 \\
&= f^{(L)}_i \hspace{-2pt} \left( k,h_k^{(L)} \right) + N_{k,i}^{(L)} \hspace{-2pt} \left( h_k^{(L)} \right) 
+ \frac{1}{6}\sigma^{\prime \prime \prime} \hspace{-2pt} \left( \xi^{(L)}_{k,i} \right) M^{(L)}_{k,i} \hspace{-2pt} \left( h_k^{(L)} \right)^3 ,
\end{align*}
with $\abs{\xi^{(L)}_{k,i}} < \abs{ M^{(L)}_{k,i} \hspace{-2pt} \left( h_k^{(L)} \right) }$. The increment of the ResNet $\Delta h_{k,i}^{(L)}$ has two parts: the increment of the Euler-Maruyama scheme $ f^{(L)}_i\hspace{-2pt}\left( k,h_k^{(L)} \right)$ and the residual  
\begin{equation} \label{eq:residual-d}
D_{k,i}^{(L)} \hspace{-2pt} \left( h_k^{(L)} \right) \coloneqq \frac{1}{6}\sigma^{\prime \prime \prime} \hspace{-2pt} \left( \xi^{(L)}_{k,i} \right) M^{(L)}_{k,i} \hspace{-2pt} \left( h_k^{(L)} \right)^3 + N_{k,i}^{(L)} \hspace{-2pt} \left( h_k^{(L)} \right).
\end{equation} 
It is clear from here that the Euler-Maruyama scheme of the limiting SDE is different from the ResNet dynamics. Hence classical results on the convergence of discrete SDE schemes cannot be applied directly.

In our analysis it will be more natural to work with the following {\it continuous-time approximation} (CTA), defined as
\begin{eqnarray}\label{eq:cta}
\htilde^{(L)}_t \coloneqq h_0^{(L)} +\int_0^t \mu \hspace{-1pt} \left({ {t_{k_s}}}, \hbarr^{(L)}_s \right) \dd s +\int_0^t \left(\dd V^A_s \, \hbarr^{(L)}_s + \dd V^b_s \right) +\sum_{k < Lt} D_k^{(L)} \hspace{-2pt} \left( h_k^{(L)} \right),
\end{eqnarray}
where $D_k^{(L)}(h) = \left(D_{k,1}^{(L)}(h), \ldots, D_{k,d}^{(L)}(h) \right)^{\top}$ and $k_s$ is the integer for which $s\in[t_{k_s},t_{k_s+1})$ for a given $s\in[0,1)$.

Here $\htilde^{(L)}_t$ approximates the CTE~\eqref{eq:cte} with a continuous version, with interpolations both in time and in space, of the $f^{(L)}(k,h)$ part while the residual term $D_k^{(L)}(h)$ remains the same.
By design we have $\htilde^{(L)}_{t_k}=\hbarr^{L}_{t_k} = h^{(L)}_k$, that is, $\htilde^{(L)}_t $ and $\hbarr^{(L)}_t $ coincide with the discrete solution at grid points $t_k$, $k=0,1,\ldots,L-1$. This relationship is instrumental in order to control the error.
 
We will first study the difference between $\htilde$ and $h^{(L)}$, and then the difference between $\hbarr$ and $h^{(L)}$, in the supremum norm. The sum of the two will give a bound for the error of the discrete approximation.
 
 \subsubsection{Preliminary result}\label{sec:local_lip}


 \begin{lem}[Local Lipschitz condition and uniform integrability]\label{lemma:main} Under the assumptions from Theorem~\ref{thm:H2}, we have the folllowing results:
 \begin{enumerate}[label=(\roman*)]
     \item  For each $R>0$, there exists a constant $C_R$, depending only on $R$, such that almost surely we have
 \begin{eqnarray}\label{ass:local_lipschitz}
 \norm{\mu(t,x)-\mu(t,y)}^2  \leq C_R \norm{x-y}^2 , \,\, \forall t\in[0,1] \,\, \forall x,y \in \mathbb{R}^d \textit{ with } \norm{x} \lor \norm{y} \leq R,
 \end{eqnarray}
 where $\mu$  is defined in~\eqref{eq:case1_mean}.
 \item 
 There exist some constants $p>2$ and $C>0$ such that
 \begin{eqnarray}\label{ass:boundedness.v2}
 \mathbb{E}\left[\sup_{0 \leq t \leq 1} \norm{\htilde^{(L)}_t}^p \right] \lor \mathbb{E}\left[\sup_{0 \leq t \leq 1} \norm{H_t}^p \right] \leq C.
 \end{eqnarray}
 \end{enumerate}
 \end{lem}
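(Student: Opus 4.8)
The plan for part (i) is a direct estimate that exploits the structure of $\mu$ in \eqref{eq:case1_mean}: it is affine in its spatial argument except for the Itô-correction term $Q$, which is quadratic. Since the constant terms cancel, I would write $\mu(t,x)-\mu(t,y) = (U_t^A + \overbar{A}_t)(x-y) + \tfrac12\sigma''(0)\bigl(Q(t,x)-Q(t,y)\bigr)$ and bound the affine part by $(C_1+A_{\max})\norm{x-y}$, using the almost-sure bound $\norm{U_t^A}\le C_1$ from Assumption~\ref{ass:ito}(i) and $A_{\max}:=\sup_{t}\norm{\overbar{A}_t}<\infty$ (continuity of $\overbar{A}$ on the compact $[0,1]$). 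For the quadratic part I would use the componentwise identity $x_jx_k-y_jy_k=(x_j-y_j)x_k+y_j(x_k-y_k)$, which with $\norm{x}\lor\norm{y}\le R$ and $\norm{\Sigma^A_t}\le C_1$ yields $\norm{Q(t,x)-Q(t,y)}\le C R\,\norm{x-y}$ for a purely dimensional constant $C$. Squaring gives \eqref{ass:local_lipschitz} with $C_R=(C_1+A_{\max}+\tfrac12|\sigma''(0)|CR)^2$.

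For part (ii), the bound on $H$ is immediate, since Theorem~\ref{thm:H2} already postulates some $p_2>2$ with $\mathbb{E}\bigl[\sup_{t}\norm{H_t}^{p_2}\bigr]<\infty$; so the work is to bound the continuous-time approximation $\htilde^{(L)}$. The structural fact I would lean on is that $\htilde^{(L)}$ coincides with the piecewise-constant extension at grid points, $\htilde^{(L)}_{t_k}=\hbarr^{(L)}_{t_k}=h_k^{(L)}$, while Assumption~\ref{ass:strong} gives $\mathbb{E}\bigl[\sup_t\norm{\hbarr^{(L)}_t}^{p_1}\bigr]\le C_0$ with $p_1>4$. I would fix $p\in(2,p_1/2]$ — admissible precisely because $p_1>4$ — and split
\[
\sup_{t}\norm{\htilde^{(L)}_t}\ \le\ \sup_{t}\norm{\hbarr^{(L)}_t} + \max_{0\le m\le L-1}\ \sup_{t\in[t_m,t_{m+1})}\norm{\htilde^{(L)}_t-h_m^{(L)}}.
\]
The first term is controlled by Jensen's inequality, $\mathbb{E}\bigl[\sup_t\norm{\hbarr^{(L)}_t}^{p}\bigr]\le C_0^{p/p_1}$, so the whole problem reduces to the within-interval oscillation.

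On $[t_m,t_{m+1})$ the definition \eqref{eq:cta} gives
\[
\htilde^{(L)}_t-h_m^{(L)} = \int_{t_m}^{t}\mu\bigl(t_m,h_m^{(L)}\bigr)\,\dd s + \int_{t_m}^{t}\bigl(\dd V^A_s\, h_m^{(L)} + \dd V^b_s\bigr) + D_m^{(L)}\bigl(h_m^{(L)}\bigr),
\]
and I would estimate the three contributions separately. The drift integral is at most $L^{-1}\norm{\mu(t_m,h_m^{(L)})}\le C L^{-1}(1+\norm{h_m^{(L)}}^2)$ by the quadratic growth of $\mu$; the stochastic integral is handled by the Burkholder–Davis–Gundy inequality together with Assumption~\ref{ass:ito}(i), producing a one-step bound of order $L^{-p/2}(1+\mathbb{E}\norm{h_m^{(L)}}^p)$. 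For the residual $D_m^{(L)}$ in \eqref{eq:residual-d}, instead of expanding its cubic Taylor remainder (which would force third-order moments) I would use the identity $D_m^{(L)}=\Delta h_m^{(L)}-f^{(L)}(m,h_m^{(L)})$ and the crude but decisive bound $\norm{\Delta h_m^{(L)}}=\norm{h_{m+1}^{(L)}-h_m^{(L)}}\le 2\sup_t\norm{\hbarr^{(L)}_t}$, combined with an elementary estimate on $\norm{f^{(L)}(m,h_m^{(L)})}$. Applying $\mathbb{E}[\max_m(\cdot)^p]\le\sum_m\mathbb{E}[(\cdot)^p]$ and summing the per-step powers of $L^{-1}$ (which decay like $L^{1-p}$ and $L^{1-p/2}$, hence remain bounded for $p>2$) controls the oscillation uniformly in $L$, and adding the two pieces yields $\mathbb{E}\bigl[\sup_t\norm{\htilde^{(L)}_t}^p\bigr]\le C$.

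The main obstacle is the quadratic growth that the drift $\mu$ inherits from the Itô-correction term $Q$: every natural estimate for the drift and for $f^{(L)}$ then requires control of the $2p$-th moment of the hidden state, and the role of the hypothesis $p_1>4$ in Assumption~\ref{ass:strong} is exactly to leave room for an exponent $p>2$ with $2p\le p_1$. A related subtlety, which I expect to be the key trick, is the treatment of the residual $D_m^{(L)}$: bounding it through $\Delta h_m^{(L)}=h_{m+1}^{(L)}-h_m^{(L)}$ and $f^{(L)}$ keeps the moment budget at $2p$ rather than the $3p$ that a literal use of the cubic remainder in \eqref{eq:residual-d} would demand, and this is what makes the single moment assumption $p_1>4$ sufficient for both statements of Lemma~\ref{lemma:main}.
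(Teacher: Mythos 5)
Your proof follows essentially the same route as the paper's: part (i) is the identical computation (split $\mu(t,x)-\mu(t,y)$ into the affine part, bounded via $\norm{U^A_t}\le C_1$ and continuity of $\overbar{A}$, plus the quadratic $Q$-part, for which $\norm{Q(t,x)-Q(t,y)}\le C\norm{x-y}\,\norm{x+y}\le 2CR\norm{x-y}$), and part (ii) rests on the same two pillars, namely the grid-point coincidence $\htilde^{(L)}_{t_k}=\hbarr^{(L)}_{t_k}=h_k^{(L)}$ together with the choice $p=p_1/2>2$, forced by the quadratic growth of $\mu$, so that $\mathbb{E}[\sup_t\norm{\hbarr^{(L)}_t}^{2p}]$ is controlled by Assumption~\ref{ass:strong}. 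The one place you diverge is the residual: the paper's computation of $\hbarr^{(L)}_s-\htilde^{(L)}_s$ on $(t_{k_s},t_{k_s+1})$ contains only the partial drift and stochastic-integral increments, so $D_{k_s}^{(L)}$ never needs to be estimated in this lemma (the $D$-sum is treated as constant until the right endpoint of each interval), whereas you read the definition~\eqref{eq:cta} literally, pick up $D_m^{(L)}$ inside the interval, and dispose of it via the identity $D_m^{(L)}=\Delta h_m^{(L)}-f^{(L)}(m,h_m^{(L)})$ and the bound $\norm{\Delta h_m^{(L)}}\le 2\sup_t\norm{\hbarr^{(L)}_t}$. That detour is correct and keeps the moment budget at $2p\le p_1$ exactly as you intend (note you must use the direct $\sup$ bound for the $\Delta h_m^{(L)}$ contribution rather than summing $\mathbb{E}[\norm{\Delta h_m^{(L)}}^p]$ over $m$, which would diverge); the only cosmetic point is that the admissible exponent should be $p=\min(p_2,\,p_1/2)$ so that the same $p$ also serves the bound on $H$.
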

 \begin{rmk}\label{rmk:uniform_integrability}
 Note that~\cite{higham2002strong} assumes the uniform integrability condition for $\htilde_t^{(L)}$ which is difficult to verify in practice.  Here we relax this condition by only assuming the uniform integrability condition for the ResNet dynamics $\{h_k^{(L)} : k=0,\ldots, L \}$, see Assumption~\ref{ass:strong}.  We can then prove~\eqref{ass:boundedness.v2} under Assumption~\ref{ass:strong} and some properties of the It\^o processes.
 \end{rmk}
 \begin{proof}[Proof of Lemma \ref{lemma:main}]
 First, there exists $C_2>0$ such that 
 \begin{eqnarray}
 \|Q(t,x)-Q(t,y)\| \leq C_2\norm{x-y}  \, \norm{x+y} \leq 2C_2 R \norm{x-y}, 
 \end{eqnarray}
 since $Q(t,x)$ is quadratic in $x$ and $\sup_{0 \leq t \leq 1}\|\Sigma_t^A\| \leq C_1$. Then,
 \begin{eqnarray*}
 \|\mu\left(t,x\right)-\mu\left(t,y\right) \|^2 &=& \left\|U_t^{A} \, (x-y)+ \overbar{A}_{t}  (x-y) + \frac{1}{2}\sigma^{\prime \prime}(0)\left( Q(t,x)-Q(t,y) \right) \right\|^2\\
 &\leq& \left(3  \, \max_{t\in[0,1]}\norm{U^A_t} +3 \, \max_{t\in[0,1]} \norm{\overbar{A}_t} + 3 \abs{\sigma^{\prime\prime}(0)} C_2^2R^2 \right) \norm{x-y}^2.
 \end{eqnarray*}
 Note that $\max_{t\in[0,1]} \norm{\overbar{A}_t} < \infty$ since $\overbar{A}\in \CC^0\left([0,1],\mathbb{R}^d\right)$ and $\max_{t\in[0,1]}\norm{U^A_t} < C_1$ almost surely according to \eqref{eq:bound_ito}, respectively. 
 Therefore \eqref{ass:local_lipschitz} holds by taking $C_R =3  \, \max_{t\in[0,1]}\norm{U^A_t} +3 \, \max_{t\in[0,1]} \norm{\overbar{A}_t} + 3 \abs{\sigma^{\prime\prime}(0)} C_2^2R^2$.
 
Thanks to the assumption in Theorem~\ref{thm:H2}, there exists a constant  $C_3>0$ such that  $\mathbb{E}\left[\sup_{0 \leq t \leq 1}\|H_t\|^{p_1} \right]  \leq C_3,$ then
 we only need to show that~\eqref{ass:boundedness.v2} holds for $\htilde$ for some $p>2$. To see this, let $k_s$ be the integer for which $s\in[t_{k_s},t_{k_s+1})$ for a given $s\in[0,1)$. Then
\begin{eqnarray*}
\hbarr^{(L)}_s-\htilde^{(L)}_s &=& h_{k_s}^{(L)} - \left( h_{k_s}^{(L)}+\int_{t_{k_s}}^s \mu \hspace{-1pt} \left( t_{k_r}, \hbarr^{(L)}_r \right) \dd r + \int_{t_{k_s}}^s \left(\dd V_r^A\hbarr_r^{(L)} + \dd V_r^b\right)\right)\\
&=&-\mu\hspace{-1pt} \left({ {t_{k_s}}},h_{k_s}^{(L)} \right) (s-t_{k_s}) - \left(V^A_s-V^A_{t_{k_s}}\right) \, h_{k_s}^{(L)}-\left(V^b_s-V^b_{t_{k_s}}\right).
\end{eqnarray*}
Hence, by the Minkowski inequality, 
\begin{align}
\norm{\hbarr^{(L)}_s-\htilde^{(L)}_s}^{p} &\leq 3^{p-1}\left( \norm{\mu\hspace{-1pt}\left({ {t_{k_s}}},h_{k_s}^{(L)} \right)}^{p} \left(\Delta_L\right)^{p} + \norm{h_{k_s}^{(L)}}^{p}  \, \norm{V^A_s-V^A_{t_{k_s}}}^{p} + \norm{V^b_s-V^b_{t_{k_s}}}^{p}\right)\nonumber\\
&\leq C_4 \left(\norm{h_{k_s}^{(L)}}^{2p}+\norm{h_{k_s}^{(L)}}^{p}+1+\norm{h_{k_s}^{(L)}}^{p}  \, \norm{V^A_s-V^A_{t_{k_s}}}^{p} + \norm{V^b_s-V^b_{t_{k_s}}}^{p} \right)\label{eq:bound13}
\end{align}
for some $C_4>0$, as $\mu(t,h)$ is quadratic in $h$.  The value of $p>2$ will be determined later. From~\eqref{eq:bound13}, we get
\begin{eqnarray}
&&\mathbb{E}\left[\sup_{0 \leq s \leq 1} \norm{\hbarr^{(L)}_s-\htilde^{(L)}_s}^{p} \right] \nonumber\\
&\leq& C_4 \left(\mathbb{E}\left[\sup_{0 \leq s \leq 1} \norm{\hbarr_{s}^{(L)}}^{2p}\right] + \mathbb{E}\left[\sup_{0 \leq s \leq 1}\norm{\hbarr_{s}^{(L)}}^{p}\right]+1\right)\nonumber \\ 
&&+ \,\, C_5\left(\left(\mathbb{E}\left[\sup_{0 \leq s \leq 1} \norm{\hbarr_{s}^{(L)}}^{2p}\right] \mathbb{E}\left[\sup_{0\leq s \leq 1} \norm{V^A_s-V^A_{t_{k_s}}}^{2p}\right]\right)^{1/2}+\mathbb{E}\left[\sup_{0\leq s \leq 1} \norm{V^b_s-V^b_{t_{k_s}}}^{p}\right]\right) \nonumber \\
&\leq& C_4 \left(\mathbb{E}\left[\sup_{0 \leq s \leq 1} \norm{\hbarr_{s}^{(L)}}^{2p}\right]+\mathbb{E}\left[\sup_{0 \leq s \leq 1} \norm{\hbarr_{s}^{(L)}}^{p}\right]+1\right)\nonumber\\
&&+ \,\, C_6 \left(\left(\mathbb{E}\left[\sup_{0 \leq s \leq 1} \norm{\hbarr_{s}^{(L)}}^{2p}\right]\mathbb{E}\left[\sup_{0\leq s \leq 1} \norm{V^A_s}^{2p}\right]\right)^{1/2}+\mathbb{E}\left[\sup_{0\leq s \leq 1} \norm{V^b_s}^{p}\right]\right),\label{eq:bound8}
\end{eqnarray}
for some constants $C_4,C_5,C_6>0$ independent of $L$, $R$ and $\delta$. The first inequality holds by the H\"older and \eqref{eq:bound8} holds by the Minkowski inequality. Take $p=\frac{1}{2}p_1 > 2$. Then,~\eqref{eq:bound8} is bounded thanks to Assumption~\ref{ass:strong} for $\mathbb{E}\left[\sup_{0 \leq t \leq 1}\|\hbarr_{t}^{(L)}\|^{2p}\right]<\infty$, and we have $\mathbb{E}\left[\sup_{0\leq s \leq 1}\norm{W^A_s}^{p}\right]<\infty $ and $\mathbb{E}\left[\sup_{0\leq s \leq 1}\norm{W^b_s}^{p}\right]<\infty$ by \eqref{eq:uniform_W}.
 Hence, by the Minkowski inequality, we have
\begin{eqnarray*}
\mathbb{E}\left[\sup_{0 \leq t \leq 1} \norm{\htilde^{(L)}_t}^{p}\right] \leq 2^{p-1}\mathbb{E}\left[\sup_{0 \leq s \leq 1} \norm{\hbarr^{(L)}_s-\htilde^{(L)}_s}^{p}\right]  +2^{p-1}\mathbb{E}\left[\sup_{0 \leq t \leq 1} \norm{\hbarr^{(L)}_t}^{p}\right] < \infty.
\end{eqnarray*} 
 \end{proof}

\subsubsection{Proof of Theorem~\ref{thm:H2}}\label{sec:main_pf}
We are now ready to show the proof of Theorem~\ref{thm:H2}.
\begin{proof}
Let us define two stopping times to utilize the local Lipschitz property of $\mu$:
 \begin{eqnarray}
 \tau_R := \inf \left\{t \geq 0: \norm{\htilde^{(L)}_t} \geq R\right\}, \quad \rho_R := \inf \left\{t \ge 0: \norm{H_t} \geq R\right\},\quad \theta_R := \tau_R \land \rho_R,
 \end{eqnarray}
 and define the approximation errors
 \begin{eqnarray}
 e_1(t):= \htilde^{(L)}_t-H_t, \,\,\text{ and }\,\,
 e_2(t):= \htilde^{(L)}_t-\hbarr^{(L)}_t.
 \end{eqnarray}
 The proof contains two steps. The first step is to show $\lim_{L\rightarrow\infty}\mathbb{E}\left[ \sup_{0 \leq t \leq 1} \norm{e_1(t)}^2\right]=0$ and the second step is to show $\lim_{L\rightarrow\infty}\mathbb{E}\left[ \sup_{0 \leq t \leq 1} \norm{e_2(t)}^2\right]=0$. 

Following the idea in \cite{higham2002strong}, we first show that for any $\delta>0$ (to be determined later):
\begin{eqnarray}\label{eq:bound1}
\mathbb{E}\left[ \sup_{0 \leq t \leq 1} \norm{e_1(t)}^2\right] \leq \mathbb{E}\left[ \sup_{0 \leq t \leq 1} \norm{\htilde^{(L)}_{t\wedge \theta_R}-H_{t\wedge \theta_R}}^2\right] +\frac{2^{p+1}\delta C}{p} + \frac{2(p-2)C}{p\delta^{2/(p-2)}R^p},
 \end{eqnarray}
where $C$ and $p$ are defined in~\eqref{ass:boundedness.v2}.
To see this, recall that by Young's inequality, for $r^{-1}+q^{-1} =1$, we have 
 \begin{equation} \label{eq:young}
 ab = \delta^{1/r}a \cdot \delta^{1/q-1}b \leq \frac{\delta}{r} a^{r} +\frac{1}{q\delta^{q/r}}b^q, \quad \forall a,b,\delta>0.
 \end{equation}
 First decompose the left-hand side of~\eqref{eq:bound1} to obtain, for all $\delta>0$,
 \begin{eqnarray}\label{eq:bound6}
 \mathbb{E}\left[\sup_{0 \leq t \leq 1}\|e_1(t)\|^2\right] &=& \mathbb{E}\left[\sup_{0 \leq t \le  1} \|e_1(t)\|^2 \one_{\{\tau_R>1,\rho_R>1\}}\right]+\mathbb{E}\left[\sup_{0 \leq t \leq 1} \|e_1(t)\|^2 \one_{\{\tau_R\le 1 \text{ or }\rho_R\le 1\}}\right]\nonumber\\
 &\leq& \mathbb{E}\left[ \sup_{0\leq t\leq 1} \|e_1(\tR)\|^2 \one_{\{\theta_R>1\}} \right] +\frac{2\delta}{p} \mathbb{E}\left[ \sup_{0\leq t \leq 1} \|e_1(t)\|^p\right]\nonumber\\ &&\quad+\,\frac{1-2/p}{\delta^{2/(p-2)}} \mathbb{P}\left(\tau_R \leq 1 \text{ or } \rho_R \leq 1\right).
 \end{eqnarray}
 where we apply \eqref{eq:young} with $r=p/2$ to the second term. Now
 \begin{eqnarray}\label{eq:bound4}
 \mathbb{P}(\tau_R \leq 1) =\mathbb{E}\left[ \one_{\{\tau_R \leq 1\}} \frac{\| \htilde^{(L)}_{\tau_R} \|^p}{R^p}\right] \leq \frac{1}{R^p}\mathbb{E}\left[\sup_{0\leq t \leq 1} \norm{ \htilde^{(L)}_t}^p\right] \leq \frac{C}{R^p}.
 \end{eqnarray}
 A similar result can be derived for $\rho_R$, so that we have
 \begin{eqnarray}\label{eq:bound5}
 \mathbb{P}(\tau_R \leq 1 \text{ or } \rho_R \leq 1) \leq \mathbb{P}(\tau_R \leq 1) + \mathbb{P}(\rho_R \leq 1) \leq \frac{2C}{R^p}.
 \end{eqnarray}
 Using the inequalities in \eqref{eq:bound4}--\eqref{eq:bound5}, along with 
 \begin{eqnarray}
 \mathbb{E}\left[\sup_{0 \leq t \leq 1}\norm{e_1(t)}^p \right] \leq 2^{p-1} \mathbb{E}\left[ \sup_{0\leq t \leq 1} \left( \norm{\htilde^{(L)}_t}^p+\norm{H_t}^p \right) \right] \leq 2^p\,C
 \end{eqnarray}
 in \eqref{eq:bound6}, we show the desired result \eqref{eq:bound1}.

To obtain a uniform bound on $\widetilde{H}-H$ , we bound the first term on the right-hand side of \eqref{eq:bound1}. Using the definition of the targeted SDE limit in~\eqref{eq:limit_sde}:
\[
    H_{\tR}:= H_0 +\int_0^{\tR} \mu(s,H_s)\dd s +\int_0^{\tR} \left(\dd W_s^A \, H_s + \dd W_s^b\right),
\]
and the continuous-time approximation~\eqref{eq:cta}, we get
\begin{eqnarray*}
\norm{\htilde^{(L)}_{\tR}-H_{\tR}}^2 &=& \Bigg\lVert \int_0^{\tR}\left(\mu \hspace{-1pt} \left({ {t}_{k_s}},\hbarr^{(L)}_s \right) \dd s - \mu\left(s,H_s \right)\right)\dd s \\
&& \quad + \int_0^{\tR}\dd W_s^A \, \left(\hbarr^{(L)}_s-H_s\right)+ \sum_{k < L(\tR)} D_k^{(L)}\hspace{-2pt}\left( {h}^{(L)}_k \right) \Bigg\rVert^2 \\
&=&\Bigg\lVert\int_0^{\tR}\left(\mu\hspace{-1pt}\left(s,\hbarr^{(L)}_s\right)  - \mu(s,H_s) + \mu\hspace{-1pt}\left({ {t}_{k_s}},\hbarr^{(L)}_s\right) - \mu\hspace{-1pt}\left(s,\hbarr^{(L)}_s\right)\right)\dd s \\
&& \quad + \int_0^{\tR}\dd W_s^A \, \left(\hbarr^{(L)}_s-H_s\right)+ \sum_{k < L(\tR)} D_k^{(L)}\hspace{-2pt}\left({h}^{(L)}_k\right) \Bigg\lVert^2.
\end{eqnarray*}
We first bound the above using Cauchy-Schwarz inequality:
\begin{eqnarray*}
&&\norm{\htilde^{(L)}_{\tR}-H_{\tR}}^2 \\
&\leq & 4 \left[ \int_0^{\tR}\norm{ \mu\hspace{-1pt}\left(s,\hbarr^{(L)}_s\right) \dd s-\mu\left(s,H_s\right)}^2 \dd s  \right] + 4 \, \norm{ \int_0^{\tR}\dd W_s^A \, \left(\hbarr^{(L)}_s-H_s\right) }^2 \\
&&+ \,\,  4 \, \left [ \int_0^{\tR} \norm{\mu\hspace{-1pt}\left({ {t}_{k_s}},\hbarr^{(L)}_s\right) \dd s-\mu\hspace{-1pt}\left(s,\hbarr^{(L)}_s\right)}^2 \dd s  \right] + 4 \, \norm{ \sum_{k < L(\tR)} D_k^{(L)}\hspace{-2pt}\left( {h}^{(L)}_k \right) }^2.
\end{eqnarray*}
Now, from the local Lipschitz condition~\eqref{ass:local_lipschitz} and Doob's martingale inequality~\cite{revuz2013continuous}, we have for any $\tau \leq 1$,
\begin{align}
 &\mathbb{E}\left[\sup_{0 \leq t \leq \tau} \norm{\htilde^{(L)}_{\tR}-H_{\tR}}^2 \right] \nonumber \\
 &\leq 4\left(C_R+4C^2_1\right) \mathbb{E}\int_0^{\tauR} \norm{\hbarr^{(L)}_s-H_s}^2 \dd s \nonumber \\
 &\,\, + \, 4 \,  \mathbb{E}\left[ \int_0^{\tR}\norm{ \mu\hspace{-1pt}\left( t_{k_s}, \hbarr^{(L)}_s \right) \dd s - \mu\hspace{-1pt}\left(s, \hbarr^{(L)}_s \right)}^2 \dd s\right] \,+\, 4\sum_{k\leq L\tau} \mathbb{E}\norm{D_k^{(L)} \hspace{-2pt}\left( {h}^{(L)}_k \right) \one_{\|{h}^{(L)}_k\|\leq R}}^2 \nonumber\\
 &\leq C_R' \int_0^{\tau}  \mathbb{E} \left[ \sup_{0 \leq r \leq s}\norm{\htilde^{(L)}_{r\wedge\theta_R}-H_{r\wedge\theta_R}}^2\right] \dd s +  C_R' \underbrace{\mathbb{E}\int_0^{\tauR} \norm{\hbarr^{(L)}_s-\htilde^{(L)}_s}^2 \dd s}_{\tcircle{1}} \nonumber\\
 &\,\, + \,  4  \, \underbrace{\mathbb{E}\left[ \int_0^{\tR}\norm{ \mu\hspace{-1pt}\left({ {t}_{k_s}}, \hbarr^{(L)}_s\right) - \mu\hspace{-1pt}\left(s,\hbarr^{(L)}_s\right)}^2 \dd s \right]}_{\tcircle{2}}+ \, 4\underbrace{\mathbb{E}\left[\sup_{0\leq t \leq \tau} \norm{\sum_{k < L(\tR)} D_k^{(L)} \hspace{-2pt} \left({h}^{(L)}_k\right)}^2\right]}_{\tcircle{3}} \label{eq:key_bound}
 \end{align}
where $C_R' \coloneqq 8 \left(C_R+4C^2_1\right)$. First, we give an upper bound for~$\tcircle{2}$. By the Cauchy–Schwarz inequality,
 \begin{align*}
 \norm{\mu(t,h)-\mu(s,h)}^2  \leq 5 \,\, \Big( &\norm{U_t^{A}- U_s^{A}}^2 \, \norm{h}^2 + \, \norm{U_t^{b}-U_s^{b}}^2 + \norm{\overbar{A}_{s}- \overbar{A}_{t}} \norm{h}^2 + \norm{\overline{b}_{t}-\overline{b}_{s}}^2 \\
 &+\frac{1}{2}\sigma^{\prime \prime}(0)\norm{Q(t,h)-Q(s,h)}^2\Big).
 \end{align*}
Hence, for $h\in \mathbb{R}^d$, the following holds almost surely by \eqref{eq:continuity_ito}:
  \begin{eqnarray}
\norm{\mu(t,h)-\mu(s,h)}^2  \leq C_M \abs{t-s}^{\kappa}\left( 1+ \norm{h}^2 + \norm{h}^4 \right).
 \end{eqnarray}
 Under Assumption \ref{ass:strong}, there exists a constant $\widetilde{C}_0>0$ such that
\begin{eqnarray*}
\mathbb{E}\left[\sup_{0 \leq t \leq 1}\left(\norm{\hbarr^{(L)}_t}^{4}+\norm{\hbarr^{(L)}_t}^{2} \right)\right]  \leq \widetilde{C}_0.
\end{eqnarray*} 
 Hence by Tonelli's theorem,
\begin{align}
\mathbb{E}\left [ \int_0^{\tR}\norm{\mu\hspace{-1pt} \left({ {t}_{k_s}},\hbarr^{(L)}_s \right) - \mu\hspace{-1pt}\left(s,\hbarr^{(L)}_s\right)}^2 \dd s\right] \nonumber &\leq \int_0^{1}\mathbb{E}\left[ \norm{\mu\hspace{-1pt}\left({ {t}_{k_s}},\hbarr^{(L)}_s\right) - \mu\hspace{-1pt}\left(s,\hbarr^{(L)}_s\right)}^2\right]\dd s\nonumber \\
&\leq (\widetilde{C}_0+1)C_M L \left(\int_0^{1/L} r^{\kappa} \dd r\right) \nonumber \\
&= \frac{(\widetilde{C}_0+1)C_M }{1+\kappa}L^{-\kappa}. \label{eq:tonelli}
\end{align}

{\bf Upper bound on $\tcircle{3}$.}
Define the following discrete filtration 
\begin{equation} \label{def:filtration-G}
\GG_{k} \coloneqq \sigma\Big(U_s^A, U_s^A,q_s^A,q_s^b, B_s^{A}, B_s^{b} \,:\,\, s \leq t_{k+1} \Big).
\end{equation}
Note that $h_k^{(L)}$ is $\GG_{k-1}$-measurable but not $\GG_{k}$-measurable. Define for $k=0, \ldots, L-1$ and for $i=1, \ldots, d$:
\begin{eqnarray*}
 X^{(L)}_{k,i}   &:=& \left( \left(\Delta V_k^A  \, h_{k}^{(L)}\right)_i + \left(\Delta V_k^b \right)_i\right)^2-\mathbb{E} \left[\left. \left( \left(\Delta V_k^A  \, h_{k}^{(L)}\right)_i + \left(\Delta V_k^b\right)_i\right)^2\right\vert \GG_{k-1}\right]\label{eq:x_def}\\
 Y^{(L)}_{k,i} &:=& \mathbb{E} \left[\left. \left( \left(\Delta V_k^A  \, h_{k}^{(L)}\right)_i + \left(\Delta V_k^b\right)_i\right)^2\right\vert \GG_{k-1}\right] - Q_i\hspace{-1pt} \left(t_k, h_k^{(L)} \right)\Delta_L\\
 J^{(L)}_{k,i} &:=& \widetilde{\mu}_i(t,h)^2 (\Delta_L)^2 + 2 \widetilde{\mu}_i(t,h) \Delta_L \left( \left(\Delta V_k^A  \, h\right)_i + \left(\Delta V_k^b\right)_i\right).
\end{eqnarray*}
We can then decompose
\begin{align*}
D_{k,i}^{(L)}\hspace{-2pt}\left(h_k^{(L)}\right) &= \frac{1}{6}\sigma^{\prime \prime \prime} \hspace{-2pt} \left( \xi^{(L)}_{k,i} \right) M^{(L)}_{k,i} \hspace{-2pt} \left(h_k^{(L)}\right)^3 + N_{k,i}^{(L)}\hspace{-2pt}\left(h_k^{(L)}\right) \\
&= \frac{1}{6}\sigma^{\prime \prime \prime} \hspace{-2pt} \left(\xi^{(L)}_{k,i}\right) M^{(L)}_{k,i}\hspace{-2pt}\left(h_k^{(L)}\right)^3 + \frac{1}{2} \sigma^{\prime\prime}(0) \left( X^{(L)}_{k,i} + Y^{(L)}_{k,i} + J^{(L)}_{k,i} \right).
\end{align*}
Hence, we deduce the following bound on $\tcircle{3}$ by Cauchy-Schwarz.
\begin{align}
&\mathbb{E}\left[\sup_{0\leq t \leq \tau} \abs{\sum_{k < L(\tR)} D_{k,i}^{(L)} \hspace{-2pt} \left({h}^{(L)}_k\right)}^2\right]\nonumber\\
&\leq \sigma''(0)^2 \,\, \mathbb{E}\left[\sup_{0\leq t \leq \tau} \abs{\sum_{k < L(\tR)} X^{(L)}_{k,i} \,}^2 + \abs{\sum_{k < L(\tR)} Y^{(L)}_{k,i} \,}^2 + L\sum_{k < L(\tR)} \abs{J^{(L)}_{k,i}}^2 \right] \nonumber \\
&\quad + \, 4 \,\, \mathbb{E}\left[\sup_{0\leq t \leq \tau} \abs{\sum_{k < L(\tR)} \frac{1}{6} \sigma^{\prime\prime\prime}\hspace{-2pt} \left(\xi^{(L)}_{k,i}\right) M^{(L)}_{k,i} \hspace{-2pt} \left( h_k^{(L)}\right)^3 }^2\right]\nonumber\\
&\leq \sigma''(0)^2 \,\, \mathbb{E}\left[\sup_{0\leq t \leq \tau} \abs{\sum_{k < Lt} X^{(L)}_{k,i} \one_{\norm{{h}^{(L)}_k}\leq R}}^2\right] +  \sigma''(0)^2 \,\,\mathbb{E}\left[\sup_{0\leq t \leq \tau} \abs{\sum_{k < Lt} Y^{(L)}_{k,i} \one_{\norm{{h}^{(L)}_k}\leq R}}^2\right] \nonumber \\
&\quad + \, \sigma''(0)^2 L \sum_{k=0}^{L-1} \mathbb{E}\left[\abs{ J^{(L)}_{k,i}}^2 \one_{\norm{{h}^{(L)}_k}\leq R}\right]+ \frac{1}{9}\sigma'''\hspace{-2pt}\left(\xi^{(L)}_{k,i} \right)^2 L \sum_{k=0}^{L-1} \mathbb{E}\left[ M^{(L)}_{k,i}\hspace{-2pt} \left(h_k^{(L)}\right)^6 \one_{\norm{{h}^{(L)}_k}\leq R}\right]. \label{eq:bound2}
\end{align}
We provide an upper bound for each of the four terms in \eqref{eq:bound2}. For the first term, denote $\Xtilde^{(L)}_{k,i} \coloneqq X^{(L)}_{k,i} \one\left(\big\lVert {h}^{(L)}_k \big\rVert\leq R\right)$ and $S^{(L)}_{k,i} \coloneqq \sum_{k'=0}^k \Xtilde^{(L)}_{k',i}$ so that $\big\{ S^{(L)}_{k,i}: k=-1, 0,\ldots, L-1\big\}$ is a $\left( \GG_k \right)$--martingale.  Hence, by Doob's martingale inequality, we have
  \begin{equation} \label{eq:doobs-xtilde}
    \E\left[ \sup_{0 \leq t \leq \tau } \abs{ \sum_{k < Lt} X^{(L)}_{k,i} \one_{\norm{{h}^{(L)}_k}\leq R} }^2 \right] = \E\left[ \sup_{0 \leq t \leq \tau } \abs{ S^{(L)}_{\floor{Lt},i} }^2 \right] \leq 4 \, \E \left[ \abs{ S^{(L)}_{\floor{L \tau},i} }^2 \right].
  \end{equation}
Fix $k = 0, \ldots, L-1$. For $i=1, \ldots, d$, we compute the following conditional expectation.
\begin{align} 
\E \left[ \left(S^{(L)}_{k,i} \right)^2 \, \Big\vert \,\, \GG_{k-1} \right]  &= \E \left[ \left( S^{(L)}_{k-1,i} \right)^2 + 2 \Xtilde^{(L)}_{k,i} \sum_{k'=0}^{k-1} \Xtilde^{(L)}_{k',i} + \left(\Xtilde^{(L)}_{k, i} \right)^2 \, \Bigg\vert \,\, \GG_{k-1} \right] \nonumber \\
&= \left( S^{(L)}_{k-1,i} \right)^2  + \E\left[ \left(\Xtilde^{(L)}_{k,i} \right)^2 \, \Big\vert \,\, \GG_{k-1} \right]. \label{eq:tower-law}
\end{align}
The cross-term disappear as $\E\left[\left. \Xtilde^{(L)}_{k,i} \, \right\vert \, \GG_{k-1} \right] = \E\left[\left. X^{(L)}_{k,i} \, \right\vert \, \GG_{k-1} \right]\one\left(\big\lVert {h}^{(L)}_k \big\rVert\leq R\right) = 0$ by definition of $X^{(L)}_{k,i}$. Furthermore, conditionally on $\GG_{k-1}$ and on $\big\{ \big\lVert {h}^{(L)}_k \big\rVert\leq R \big\}$,  observe that $X^{(L)}_{k,i}$ is the centered square of a normal random variable whose variance is $\OO(L^{-1})$ uniformly in $k$ by \eqref{eq:bound_ito}, so there exist $C_{R,1} > 0$ depending only on $R$ such that  
\begin{equation*}
\sup_{0\leq k < L} \E\left[ \left(\Xtilde^{(L)}_{k,i} \right)^2 \, \Big\vert \,\, \GG_{k-1} \right] \leq C_{R,1} L^{-2}.
\end{equation*}
Hence, 
plugging  back into \eqref{eq:doobs-xtilde}, 
we obtain
\begin{equation}\label{eq:bound14}
\E\left[ \sup_{0 \leq t \leq \tau } \abs{ \sum_{k\leq Lt} X_k^i\one_{\norm{{h}^{(L)}_k}\leq R} }^2 \right]  \leq 4 C_{R,1} L^{-1}.
\end{equation}
For the second term involving $Y^{(L)}_{k,i}$, we explicitly compute the conditional expectation using the definition of $V$ in~\eqref{eq:case1_mean} and the definition of $Q$ in~\eqref{eq:Q_thm}.

\begin{align*}
    Y^{(L)}_{k,i} &= \E \left.\left[\left( \Delta V^b_k \right)_i^2 + \sum_{j,l=1}^d h^{(L)}_{k, j} h^{(L)}_{k, l} \left( \Delta V_k^A \right)_{ij} \left( \Delta V_k^A \right)_{il}\, \right\vert \, \GG_{k-1}  \right] - Q_i \hspace{-1pt} \left(t_k,h_k^{(L)}\right) \Delta_L \\
    &= \int_{t_k}^{t_{k+1}} \left( \E\left[\left. \Sigma^b_{s, ii} \, \right\vert \, \GG_{k-1} \right] + \sum_{j,l=1}^d h^{(L)}_{k, j} h^{(L)}_{k, l}  \E\left[\left. \Sigma^A_{s, ijil}  \, \right\vert \, \GG_{k-1} \right] \right) \dd s - Q_i\hspace{-1pt} \left(t_k, h_k^{(L)}\right) \Delta_L \\
    &= \int_{t_k}^{t_{k+1}} \left( \E\left[\left. \Sigma^b_{s, ii} - \Sigma^b_{t_k, ii} \, \right\vert \, \GG_{k-1} \right] + \sum_{j,l=1}^d h^{(L)}_{k, j} h^{(L)}_{k, l}  \E \left[\left. \Sigma^A_{s, ijil} - \Sigma^A_{t_k, ijil} \, \right\vert \, \GG_{k-1} \right] \right) \dd s. \\
\end{align*}
By Cauchy-Schwarz, Tonelli and \eqref{eq:continuity_ito} in Assumption \ref{ass:ito} $(ii)$ we obtain:
\begin{align}
&\mathbb{E}\left[\sup_{0\leq t \leq \tau} \abs{\sum_{k\leq L(\tR)} Y^{(L)}_{k,i} \one_{\norm{{h}^{(L)}_k}\leq R}}^2\right] \leq \E \left[ \left( \sum_{k=0}^{L-1} \abs{Y^{(L)}_{k,i}} \one_{\norm{{h}^{(L)}_k}\leq R} \right)^2 \right]\nonumber \\
&\leq \E\left[ \left( \sum_{k=0}^{L-1} \int_{t_k}^{t_{k+1}} \hspace{-2pt} \left( \expect*{ \abs{\Sigma^b_{s, ii} - \Sigma^b_{t_k, ii} }  | \GG_{k-1}} + R^2 \sum_{j,l=1}^d  \expect*{ \abs{\Sigma^A_{s, ijil} - \Sigma^A_{t_k, ijil}} | \GG_{k-1}} \right) \dd s \right)^2 \right] \nonumber \\
&\leq \left( \sum_{k=0}^{L-1} \int_{t_k}^{t_{k+1}} (1+R^2) M^{1/2} \abs{s-t_{k_s}}^{\kappa/2} \dd s \right)^2 \nonumber \\
&= M(1+R^2)^2 \left( L\int_{0}^{1/L} r^{\kappa/2} \dd r \right)^2 = \frac{M(1+R^2)^2}{(1+\kappa/2)^2} L^{-\kappa} \eqqcolon C_{R,2} L^{-\kappa},\label{eq:bound15}
\end{align}
where $C_{R, 2} > 0$ depends only on $R$.
Moving to the third term of \eqref{eq:bound2} involving $J^{(L)}_{k,i}$, observe that
\begin{align*}
\sup_{\norm{h}\leq R}\mathbb{E}\left[ \abs{\sum_{j=1}^d \int_{t_k}^{t_{k+1}}(\dd V_{t}^A)_{ij}h_{j}}^2\right] &\leq R^2 d \sum_{j=1}^d\mathbb{E}\left[\abs{ \sum_{l,m=1}^d \int_{t_{k}}^{t_{k+1}}  \left(q_s^A\right)_{ijlm} \left(\dd B_s^A\right)_{lm} }^2\right] \leq C_{7}R^2 \Delta_L,\\
\mathbb{E}\left[ \abs{ \int_{t_k}^{t_{k+1}}(\dd V_{t}^b)_{i}}^2\right] &= \mathbb{E}\left[\abs{ \sum_{l=1}^d \int_{t_{k}}^{t_{k+1}}  \left(q_r^b\right)_{il} \left(\dd B_s^b\right)_{l} }^2\right] \leq C_{8}\Delta_L,
\end{align*}
for some $C_{7},C_{8}>0$ independent of $R$ since $\sup_{0 \leq t\leq 1}\norm{\Sigma^A_t}\leq C_1$ and $\sup_{0 \leq t\leq 1}\norm{\Sigma^b_t}\leq C_1$ almost surely.
Then there exists $C_{R,3}>0$ depending only on $R$ such that
\begin{eqnarray}\label{eq:bound10}
\sup_{\|h\|\leq R}\mathbb{E}\left[\abs{ J^{(L)}_{k,i} }^2 \one_{\norm{h_k^{(L)}}\leq R}\right] \leq C_{R,3} L^{-3}.
\end{eqnarray}
Finally, we bound the fourth term of \eqref{eq:bound2} using Cauchy-Schwarz, Assumption \ref{ass:activation} and property \eqref{eq:bound_ito} of the It\^o processes:
\begin{eqnarray}\label{eq:bound9}
\sigma'''(\xi_i)^2 \sup_{\|h\|\leq R}  \mathbb{E}\left[ M^{(L)}_{k,i}  \big(h \big)^6 \right] \leq  m^2\,C_{R,4} L^{-3},
\end{eqnarray}
for some constant $C_{R, 4}>0$ depending only on $R$. Combining the results in  \eqref{eq:bound14}, \eqref{eq:bound15}, \eqref{eq:bound10} and \eqref{eq:bound9}, there exists constants $C_{R,5}, C_{R,6} > 0$ depending only on $R$ such that
\begin{align} 
\E\left[\sup_{0\leq t \leq \tau} \abs{\sum_{k\leq L(\tR)} D_k^{(L),i} \hspace{-2pt} \left({h}^{(L)}_k\right)}^2\right] \leq \frac{C_{R, 5}}{4d}L^{-\kappa} + \frac{C_{R, 6}}{4d} L^{-1}.  \label{eq:bound11}
\end{align}

{\bf Upper bound on $\tcircle{1}$.} Given $s\in[0,T\wedge \theta_R)$, we have
\begin{eqnarray}
\hbarr^{(L)}_s-\htilde^{(L)}_s &=& h_{k_s}^{(L)} - \left( h_{k_s}^{(L)}+\int_{t_{k_s}}^s \mu(s,\hbarr^{(L)}_s)\dd s + \int_{t_{k_s}}^s \left(\dd V_s^A\hbarr_s^{(L)} + \dd V_s^b\right)\right)\nonumber\\
&=&-\mu\hspace{-1pt}\left(t_{k_s},h_{k_s}^{(L)} \right) (s-t_{k_s})-\left(V^A_s-V^A_{t_{k_s}}\right) \, h_{k_s}^{(L)}-\left(V^b_s-V^b_{t_{k_s}}\right) \label{eq:Htilde_Hbar}
\end{eqnarray}
by continuity of $\mu$. Hence
\begin{eqnarray}
\norm{\hbarr^{(L)}_s-\htilde^{(L)}_s}^2 \leq  3 \norm{\mu\hspace{-1pt} \left(t_{k_s},h_{k_s}^{(L)} \right)}^2 (\Delta_L)^2 
+ 3\norm{h_{k_s}^{(L)}}^2 \norm{V^A_s-V^A_{t_{k_s}}}^2 + 3\norm{V^b_s-V^b_{t_{k_s}}}^2.
\end{eqnarray}
Now, from the local Lipschitz condition \eqref{ass:local_lipschitz}, for $\norm{h} \leq R$ we have almost surely
\begin{eqnarray*}
\norm{\mu(s,h)}^2 \leq 2\left( \norm{\mu(s,h)-\mu(s,0)}^2 + \norm{\mu(s,0)}^2 \right) \leq 2\left(C_R\norm{h}^2 + \norm{\mu(s,0)}^2\right).
\end{eqnarray*}
Combining the two previous inequalities we obtain
\begin{eqnarray*}
\norm{\hbarr^{(L)}_s-\htilde^{(L)}_s}^2 \leq 4 \left(C_R\norm{h_{k_s}^{(L)}}^2 + \norm{\mu(s,0)}^2 + 1 \right) \left(\Delta_L^2 + \norm{V^A_s-V^A_{t_{k_s}}}^2 + \norm{V^b_s-V^b_{t_{k_s}}}^2 \right).
\end{eqnarray*}
Hence, using \eqref{ass:boundedness.v2} and the Lyapunov inequality \cite{platen2010numerical}, we get
\begin{eqnarray}
&&\mathbb{E}\int_0^{\tauR}\norm{\hbarr^{(L)}_s-\htilde^{(L)}_s}^2 \dd s\nonumber\\
&\leq& \mathbb{E}\int_0^{\tauR} 4 \left(C_R\norm{h_{k_s}^{(L)}}^2 + \norm{\mu(s,0)}^2+1\right)\left(\Delta_L^2+\norm{V^A_s-V^A_{t_{k_s}}}^2 + \norm{V^b_s-V^b_{t_{k_s}}}^2\right)\,\dd s \nonumber\\
&\leq& \int_0^{\tau} 4 \,\, \mathbb{E} \left[\left(C_R\norm{h_{k_s}^{(L)}}^2 + \norm{\mu(s,0)}^2+1\right)\left(\Delta_L^2 + \norm{V^A_s-V^A_{t_{k_s}}}^2 + \norm{V^b_s-V^b_{t_{k_s}}}^2\right)\right]\,\dd s\nonumber\\
&\leq& \int_0^{1} 4  \left(C_R \, \mathbb{E}\left[\norm{h_{k_s}^{(L)}}^2 \right] + \norm{\mu(s,0)}^2+1\right)\left(\Delta_L^2+2C_1\Delta_L+2C_1\Delta_L\right)\,\dd s\nonumber\\
&\leq&  4  \left(C_R\,C_0^{2/p}+1+\int_0^1 \norm{\mu(s,0)}^2 \dd s\right)\left(\Delta_L^2+4C_1\Delta_L\right). \label{eq:bound12}
\end{eqnarray}
Combining the results in \eqref{eq:tonelli}, \eqref{eq:bound11} and \eqref{eq:bound12}, we have in \eqref{eq:key_bound} that
\begin{align*}
&\mathbb{E}\left[\sup_{0 \leq t \leq \tau} \norm{\htilde^{(L)}_{\tauR}-H_{\tR}}^2 \right] \leq C_R' \left(C_R\,C_0^{2/p}+1+\int_0^1 \norm{\mu(s,0)}^2 \dd s\right)\left( L^{-2} + 4C_1 L^{-1} \right) \\ 
&+ \, \frac{(\widetilde{C}_0+1)C_M }{1+\kappa}L^{-\kappa} + \left(C_{R, 5} L^{-\kappa} + C_{R, 6} L^{-1} \right) +  C_R' \int_0^{\tau} \mathbb{E} \left[ \sup_{0 \leq r \leq s}\norm{\htilde^{(L)}_{r\wedge\theta_R}-H_{r\wedge\theta_R}}^2\right]\dd s.
\end{align*}
Applying the Grönwall inequality,
\begin{eqnarray}\label{grownwall}
\mathbb{E}\left[\sup_{0 \leq t \leq \tau} \norm{\htilde^{(L)}_{\tauR}-H_{\tR}}^2 \right] \leq C_9 C_{R, 7} { L^{- \min\{1,\kappa\}}} \exp( C_R'),
\end{eqnarray}
where $C_9$ is a universal constant independent of $L$, $R$ and $\delta$ and $C_{R,7}$ is a constant only depending on $R$. Combining \eqref{grownwall} with \eqref{eq:bound1}, we have
\begin{eqnarray}
\mathbb{E}\left[ \sup_{0 \leq t \leq 1} \norm{e_1(t)}^2 \right] \leq C_9 C_{R,7} L^{- \min\{1,\kappa\}} \exp(C_R') + \frac{2^{p+1}\delta C}{p} + \frac{2(p-2)C}{p\delta^{2/(p-2)}R^p}.\label{eq:bound_final}
\end{eqnarray}
Given any $\epsilon>0$, we can choose $\delta>0$ so that $\frac{2^{p+1}\delta C}{p}<\frac{\epsilon}{3}$, then choose $R$ so that $ \frac{2(p-2)C}{p\delta^{2/(p-2)}R^p}<\frac{\epsilon}{3}$, and finally choose $L$ sufficiently large so that 
\begin{equation*}
C_9 C_{R,7} L^{- \min\{1,\kappa\}} \exp(C_R') \leq \frac{\epsilon}{3}.
\end{equation*}
Therefore in \eqref{eq:bound_final}, we have, 
\begin{eqnarray}\label{eq:bound_e}
\mathbb{E}\left[ \sup_{0 \leq t \leq 1}\norm{e_1(t)}^2\right]  \leq \epsilon.
\end{eqnarray}

{\bf It remains to provide a uniform bound for $\hbarr-\widetilde{H}$.}
Recall the relationship between $\htilde$ and $\hbarr$ defined in~\eqref{eq:Htilde_Hbar}: by~\eqref{eq:bound_ito} we have almost surely that
\begin{eqnarray*}
\norm{\hbarr^{(L)}_s-\htilde^{(L)}_s}^{2} &\leq& 3\left( \norm{\mu\hspace{-1pt}\left({ {t_{k_s}}},h_{k_s}^{(L)} \right)}^{2} \left(\Delta_L\right)^{2} + \norm{h_{k_s}^{(L)}}^{2}  \, \norm{V^A_s-V^A_{t_{k_s}}}^{2} + \norm{V^b_s-V^b_{t_{k_s}}}^{2}\right)\\
&\leq & C_{10}\left( \norm{h_{k_s}^{(L)}}^4+\norm{h_{k_s}^{(L)}}^2+1 \right)\,(\Delta_L)^2 \\
&&+ \,\, 3 \left( \norm{h_{k_s}^{(L)}}^{2}  \, \norm{V^A_s-V^A_{t_{k_s}}}^{2} + \norm{V^b_s-V^b_{t_{k_s}}}^{2}\right).
\end{eqnarray*}
Therefore,
\begin{align}
&\mathbb{E} \left[\sup_{0\leq s  \leq 1}\norm{\hbarr^{(L)}_s-\htilde^{(L)}_s}^{2}\right] 
\leq  C_{10}\left( \mathbb{E}\left[\sup_{0\leq s \leq 1}\norm{h_{k_s}^{(L)}}^4\right]+\mathbb{E}\left[\sup_{0\leq s \leq 1}\norm{h_{k_s}^{(L)}}^2\right]+1 \right)\,(\Delta_L)^2\nonumber\\
&+ 3\, \left( \left(\mathbb{E}\left[\sup_{0\leq s \leq 1}\norm{h_{k_s}^{(L)}}^{4}\right]  \, \mathbb{E}\left[\sup_{0\leq s \leq 1}\norm{V^A_s-V^A_{t_{k_s}}}^{4}\right]\right)^{1/2} +\mathbb{E}\left[\sup_{0\leq s \leq 1} \norm{V^b_s-V^b_{t_{k_s}}}^{2}\right] \right)\label{eq:sup_2h}.
\end{align}
First, by Assumption~\ref{ass:strong},
\begin{equation}\label{eq:h_max}
\mathbb{E}\left[ \sup_{0 \leq s \leq 1 }\norm{h_{k_s}^{(L)}}^n \right] = \mathbb{E}\left[ \sup_{0 \leq s \leq 1 }\norm{\hbarr_{s}^{(L)}}^{n} \right] <\infty, \quad n \in\left\{2,4\right\}.
\end{equation}
Second, by the Power Mean inequality and Doob's martingale inequality,
\begin{align*}
\mathbb{E}\left[\sup_{t_{k}\leq s < t_{k+1}}\norm{V_s^A-V_{t_{k}}^A}^4\right] &= \mathbb{E}\left[\sup_{t_{k}\leq s < t_{k+1}} \left( \sum_{i,j=1}^d \abs{ \sum_{k,l=1}^d \int_{t_{k}}^s  \left(q_r^A\right)_{ijkl} \left(\dd B_r^A\right)_{kl} }^2 \right)^2 \right] \\ 
&\leq d^8 \sum_{i,j,k,l=1}^d \E \left[\sup_{t_{k}\leq s < t_{k+1}} \abs{ \int_{t_{k}}^s \left(q_r^A\right)_{ijkl} \left(\dd B_r^A\right)_{kl} }^4  \right] \\
&\leq \Big(\frac{4}{3}\Big)^4 d^8 \sum_{i,j,k,l=1}^d \mathbb{E}\left[\abs{\int_{t_{k}}^{t_{k+1}} \left(q_s^A\right)_{ijkl} \left(\dd B_s^A\right)_{kl}}^4\right] \leq C_{11} \Delta^2_L,
\end{align*}
Hence
\begin{eqnarray}\label{eq:bound16}
\mathbb{E}\left[\sup_{0\leq s \leq  1}\norm{V_s^A-V_{t_{k_s}}^A}^4\right] \leq \mathbb{E}\left[\sum_{k=0}^{L-1}\left(\sup_{t_{k}\leq s < t_{k+1}}\norm{V_s^A-V_{t_{k_s}}^A}^4\right)\right] \leq C_{11} \Delta_L.
\end{eqnarray}
By H\"older inequality, 
\begin{eqnarray}\label{eq:bound17}
\mathbb{E}\left[\sup_{0\leq s \leq  1}\norm{V_s^A-V_{t_{k_s}}^A}^2\right] \leq \left(\mathbb{E}\left[\sup_{0\leq s \leq  1}\norm{V_s^A-V_{t_{k_s}}^A}^4\right]\right)^{1/2} \leq \sqrt{C_{11}}\Delta^{1/2}_L.
\end{eqnarray}
Combining \eqref{eq:h_max}, \eqref{eq:bound16}, and \eqref{eq:bound17} in \eqref{eq:sup_2h}, we obtain
\begin{eqnarray*}
\mathbb{E}\left[\sup_{0\leq   t \leq 1}\norm{e_2(t)}^{2}\right] = \mathbb{E}\left[\sup_{0\leq t  \leq 1}\norm{\hbarr^{(L)}_t-\htilde^{(L)}_t}^{2}\right] \leq C_{12}\Delta_L^{1/2},
\end{eqnarray*}
for some constant $C_{12}>0$. By choosing $L>(C_{12}/\epsilon)^2$,  we have
\begin{eqnarray}\label{eq:bound_h}
\mathbb{E}\left[\sup_{0\leq t  \leq 1}\norm{e_2(t)}^{2}\right] \leq {\epsilon}.
\end{eqnarray}
Finally, combining \eqref{eq:bound_e} and \eqref{eq:bound_h} leads to the desired result.

\end{proof}

\section{Asymptotic analysis of the backpropagation dynamics} \label{sec:backward}

The most widely used method to train neural networks is the pairing of
\begin{itemize}
    \item the backpropagation algorithm to find the exact gradient (or a stochastic approximation) of the loss function with respect to the network weights, and
    \item a variant of the gradient descent algorithm to iteratively update the network weights.
\end{itemize}
We are interested to study the behaviour of the former in residual networks, under our Scaling regimes \ref{hypothesis.1} and \ref{hypothesis.2}. To do so, we will first formalize the objective function and the discrete backward equation linking the gradient of the loss function across layers.

\subsection{Backpropagation in supervised learning}

Suppose we want to learn the mapping $\ftrue \in \CC^1(\R^d, \R^d)$ through a dataset of input-target pairs $\DD \coloneqq \left\{ (x_i, y_i) : i=1, \ldots, N \right\} \subset \R^d \times \R^d$, where $x_i \in B$ for some $B\subset \R^d$ compact and $y_i = \ftrue(x_i)$. The goal of any parametric supervised learning is to find, given a class of mappings $\phi_{\theta}: \R^d \to\R^d$, the parameter $\theta \in \Theta$ that minimizes the average training error: 
\begin{equation} \label{mean-loss-fct}
J_{\DD}(\theta) \coloneqq \frac{1}{N} \sum_{i=1}^N \ell( \phi_{\theta}(x_i), y_i) = \frac{1}{N} \sum_{i=1}^N \ell( \phi_{\theta}(x_i), \ftrue(x_i)) .
\end{equation}
Here, $\ell: \R^d \times \R^d \to \R_+$ is a loss function, for example the squared error $\ell(\widehat{y}, y) = \norm{y - \widehat{y}}^2$. In the following, we omit the dependence in $\DD$. Fix $L\in\N$ and define \[
\theta^{(L)} \coloneqq \left( A^{(L)}_k, b^{(L)}_k \right)_{k = 1}^L \in \left( \R^{d \times d} \times \R^d \right)^L.
\]
For an input $x\in\R^d$, recall the following forward dynamics for the residual network
\begin{equation} \label{eq:resnet.v4}
\begin{aligned}
h_0^{(L),x} &= x, \\
h^{(L),x}_{k+1} &= h^{(L),x}_{k} + L^{-\alpha}
\,\sigma_d \hspace{-1pt} \left(A^{(L)}_k h^{(L),x}_{k}+ b^{(L)}_k\right).
\end{aligned}
\end{equation}
We define $\phi_{\theta^{(L)}}(x) \coloneqq h_L^{(L),x}$. Our goal is to compute $\nabla_{\theta^{(L)}} J \hspace{-1pt} \left(\theta^{(L)}\right)$. Observe from the definition \eqref{mean-loss-fct} and the chain rule that \[
\nabla_{\theta_k^{(L)}} J \hspace{-1pt} \left(\theta^{(L)}\right) = \frac{1}{N} \sum_{i=1}^N \nabla_{\theta_k^{(L)}} h^{(L),\, x_i}_{k+1} \left( \frac{\del h_L^{(L),\, x_i} }{\del h^{(L),\, x_i}_{k+1}} \frac{\del \ell}{\del \widehat{y}}\left(h_L^{(L),\, x_i}, y_i \right) \right).
\]
The terms $\del \ell / \del \widehat{y}$ and $\nabla_{\theta_k} h_{k+1}$ are straightforward to obtain, so the crux of the challenge lies in computing $\del h_L / \del h_{k+1}$. Using \eqref{eq:resnet.v4}, for $x\in\R^d$, we get 
\begin{align}
    g_k^{(L), x} \coloneqq \frac{\del h_L^{(L),\, x} }{\del h^{(L),x}_k} &= \frac{\del h_L^{(L),\, x} }{\del h^{(L),x}_{k+1}} \frac{\del h^{(L), x}_{k+1}}{\del h^{(L), x}_k}  \nonumber \\
    &= g_{k+1}^{(L),x} \left( I_d + L^{-\alpha} \, \diag\left( \sigma'_d \hspace{-1pt} \left( A_k^{(L)} h_{k}^{(L), x} + b_k^{(L)} \right) \right) A^{(L)}_k \right) , \label{eq:backprop}
\end{align}
where $\sigma'_d(z) = \left( \sigma'(z_i) \right)_{i=1}^d \in \R^d$ for $z\in\R^d$. The terminal condition is given by $g_L^{(L), x} = I_d$. We now obtain the asymptotic dynamics of $g$ under three different cases. In particular, we derive (backward) ODE limits for any set of weights under Scaling regime \ref{hypothesis.1} and the asymptotic limit derived from an SDE under Scaling regime \ref{hypothesis.2}. For clarity, we omit the dependence in the input $x$ for $g_k^{(L)}$.

\subsection{Backward equation for the Jacobian under Scaling regime~\ref{hypothesis.1}}
Let $\overbar{G}^{(L)}:[0,1]\to \mathbb{R}^{d \times d}$ be a continuous-time extension of the Jacobians $g^{(L)}_k$ defined in~\eqref{eq:backprop}:
\begin{eqnarray}\label{eq:jacobian-interpolatiom}
\overbar{G}^{(L)}_t = g_{k+1}^{(L)} \one_{\frac{k}{L} < t \leq \frac{k+1}{L}}, \quad k =0,1,\ldots,L-1.
\end{eqnarray}

\begin{thm}[Backpropagation limits under Scaling regime~\ref{hypothesis.1}]\label{thm:backward-H1} 
Under the same assumptions as Theorem~\ref{thm:H1},
\begin{itemize}
    \item \underline{Neural ODE regime:} If $\alpha=1$, $\beta=0$, and $(H_t)_{t\in\left[0, 1\right]}$ is the solution to the neural ODE \eqref{eq:limit-node}, then the backpropagation dynamics converge uniformly to the solution to the linear (backward) ODE
   \begin{eqnarray} \label{eq:backward-1}
    \frac{\dd G_t}{\dd t} = - G_t \diag \left( \sigma'_d \hspace{-1pt} \left( \overbar{A}_t H_t + \overline{b}_t \right) \right) \overbar{A}_t, \quad G_1 = I_d
   \end{eqnarray}
   in the sense that  $\lim_{L \rightarrow \infty} \sup_{0 \leq t \leq 1}\|G_t-\overbar{G}^{(L)}_t\|=0$.
    \item \underline{Linear ODE regime:}
    If $\alpha+\beta=1$, $\beta>0$, and $(H_t)_{t\in\left[0, 1\right]}$ is the solution to the linear ODE \eqref{eq:limit2}, then the backpropagation dynamics converge uniformly to the solution to the linear (backward) ODE
   \begin{eqnarray}\label{eq:backward-2}
       \frac{\dd G_t}{\dd t} = -G_t \overbar{A}_t, \quad G_1 = I_d
   \end{eqnarray}
    in the sense that  $\lim_{L \rightarrow \infty} \sup_{0 \leq t \leq 1}\|G_t-\overbar{G}^{(L)}_t\|=0$.
\end{itemize}
\end{thm}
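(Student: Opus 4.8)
The plan is to exploit that, in contrast with the forward dynamics, the backward recursion~\eqref{eq:backprop} is already \emph{affine} in $g_{k+1}^{(L)}$: substituting $A_k^{(L)}=L^{-\beta}\overbar{A}_{k/L}$ gives
\begin{equation*}
g_k^{(L)} = g_{k+1}^{(L)}\left(I_d + L^{-\alpha-\beta}\,\Lambda_k^{(L)}\,\overbar{A}_{k/L}\right), \qquad \Lambda_k^{(L)} \coloneqq \diag\left(\sigma'_d\left(A_k^{(L)}h_k^{(L)}+b_k^{(L)}\right)\right),
\end{equation*}
and in both regimes $\alpha+\beta=1$, so $L^{-\alpha-\beta}=L^{-1}$. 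This is a genuine backward Euler scheme for a \emph{linear} ODE, the only nonlinearity entering through the bounded diagonal matrix $\Lambda_k^{(L)}$. The two regimes differ solely in the limit of $\Lambda_k^{(L)}$: when $\beta=0$ its argument equals $\overbar{A}_{k/L}h_k^{(L)}+\overline{b}_{k/L}$, which converges to $\overbar{A}_t H_t+\overline{b}_t$ by the forward convergence of Theorem~\ref{thm:H1}, so $\Lambda_k^{(L)}\to\diag(\sigma'_d(\overbar{A}_t H_t+\overline{b}_t))$ and one recovers~\eqref{eq:backward-1}; when $\beta>0$ the argument is $O(L^{-\beta})$, and since $\sigma'(0)=1$ one gets $\Lambda_k^{(L)}\to I_d$, recovering~\eqref{eq:backward-2}.

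First I would establish a uniform bound $\sup_L\max_{0\le k\le L}\norm{g_k^{(L)}}\le G_\infty$: since $\sigma$ is Lipschitz, $\norm{\Lambda_k^{(L)}}\le C_\sigma$ and $\norm{\overbar{A}}\le A_{\max}$, so the recursion yields $\norm{g_k^{(L)}}\le(1+C_\sigma A_{\max}L^{-1})\norm{g_{k+1}^{(L)}}$, and backward induction from $g_L^{(L)}=I_d$ gives $G_\infty=\exp(C_\sigma A_{\max})$. The limiting backward ODE is linear with bounded continuous coefficient (its continuity in the neural ODE regime following from the continuity of $H$, $\overbar{A}$, $\overline{b}$ and $\sigma'$), so $G$ exists, is unique and uniformly bounded, and is Lipschitz in time with $\norm{G_s-G_t}\le C\abs{s-t}$. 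I would then track the error $e_k\coloneqq\norm{g_k^{(L)}-G_{t_k}}$. Writing the limit in integral form over $[t_k,t_{k+1}]$ and subtracting the one-step update gives $g_k^{(L)}-G_{t_k}=(g_{k+1}^{(L)}-G_{t_{k+1}})+R_k$ with local residual $R_k=\int_{t_k}^{t_{k+1}}\bigl(g_{k+1}^{(L)}\Lambda_k^{(L)}\overbar{A}_{t_k}-G_s\Phi_s\bigr)\dd s$, where $\Phi_s$ is the limiting coefficient ($\diag(\sigma'_d(\overbar{A}_s H_s+\overline{b}_s))\overbar{A}_s$ or $\overbar{A}_s$). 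Splitting the integrand as $(g_{k+1}^{(L)}-G_s)\Lambda_k^{(L)}\overbar{A}_{t_k}+G_s(\Lambda_k^{(L)}\overbar{A}_{t_k}-\Phi_s)$ and invoking the above bounds yields $\norm{R_k}\le L^{-1}\bigl(C\,e_{k+1}+\eta_L\bigr)$ for a sequence $\eta_L\to0$ uniform in $k$.

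This produces the backward discrete Grönwall inequality $e_k\le(1+CL^{-1})e_{k+1}+L^{-1}\eta_L$; unrolling from $e_L=\norm{I_d-I_d}=0$ gives $\max_k e_k\le\eta_L(e^{C}-1)/C\to0$. Finally, since $\overbar{G}^{(L)}_t=g_{k+1}^{(L)}$ on $(t_k,t_{k+1}]$, I bound $\norm{G_t-\overbar{G}^{(L)}_t}\le\norm{G_t-G_{t_{k+1}}}+e_{k+1}\le CL^{-1}+\max_j e_j$, which gives the claimed uniform convergence.

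The step I expect to be the main obstacle is the control of $\eta_L$ in the neural ODE regime ($\beta=0$), precisely the term $\norm{\Lambda_k^{(L)}-\diag(\sigma'_d(\overbar{A}_s H_s+\overline{b}_s))}$, which couples the backward estimate to the forward one. Bounding it requires writing $\norm{\overbar{A}_{t_k}h_k^{(L)}-\overbar{A}_s H_s}\le\norm{\overbar{A}_{t_k}-\overbar{A}_s}C_\infty+A_{\max}\bigl(\norm{h_k^{(L)}-H_{t_k}}+\norm{H_{t_k}-H_s}\bigr)$ and then combining: the uniform forward convergence $\max_k\norm{h_k^{(L)}-H_{t_k}}\to0$ from Theorem~\ref{thm:H1}, the modulus of continuity of $\overbar{A}$ and $\overline{b}$ inherited from the regularity assumed in Theorem~\ref{thm:H1} (the $\HH^1$, hence $\tfrac12$-Hölder, regularity when $\beta=0$ and the $\kappa/2$-Hölder regularity when $\beta>0$), the time-Lipschitz bound on $H$, and the Lipschitz continuity of $\sigma'$ on the compact range of its arguments (guaranteed by the uniform bounds on $h_k^{(L)}$, $H$, $\overbar{A}$, $\overline{b}$ together with $\sigma\in\CC^3$). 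The regime $\beta>0$ is strictly easier, since $\Lambda_k^{(L)}\to I_d$ at rate $O(L^{-\beta})$ directly from $\sigma'(0)=1$ and no pointwise forward convergence of $h_k^{(L)}$ is needed, only the uniform bound $C_\infty$.
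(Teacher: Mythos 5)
Your proposal is correct and follows essentially the same route as the paper's proof: a uniform bound on $g_k^{(L)}$ by backward induction, a one-step residual split into a Lipschitz-in-$G$ term plus a coefficient-convergence term, and a backward discrete Gr\"onwall unrolled from $e_L=0$. The only differences are presentational --- the paper passes through a continuous-time interpolation $\widetilde{G}^{(L)}_t$ while you work on the grid directly --- and your explicit control of $\bigl\lVert \Lambda_k^{(L)}-\diag\bigl(\sigma'_d(\overbar{A}_s H_s+\overline{b}_s)\bigr)\bigr\rVert$ via the forward convergence $\max_k\lVert h_k^{(L)}-H_{t_k}\rVert\to 0$ is, if anything, slightly more careful than the paper's handling of that same coupling term.
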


The ideas of the proof follow closely those of Theorem~\ref{thm:H1} and the complete proof is given in Section~\ref{proof:backward-H1}. We readily see that under Scaling regime~\ref{hypothesis.1}, the backward dynamics of the gradient become linear. When $\beta > 0$, which is the case observed in practice, the dependence on the activation function disappears in the large depth limit, exactly as for the forward dynamics. 

\subsection{Backward equation for the Jacobian under Scaling regime~\ref{hypothesis.2}}

Recall the set-up of Theorem~\ref{thm:H2}. Let $(\Omega, \mathcal{F}, \mathbb{F}, \mathbb{P})$ be a probability space with a $\mathbb{P}$-complete filtration $\mathbb{F}=(\mathcal{F}_t)_{t\geq 0}$. Let  $(B^A_t)_{t\ge 0}$, resp. $(B^b_t)_{t\ge 0}$, be $d\times d$-dimensional, resp. $d$-dimensional, independent $\mathbb{F}$-Brownian motions.
Recall that for Scaling regime~\ref{hypothesis.2},
\begin{equation} \label{eq:hypothesis-2-recall}
    A_k^{(L)} = \overbar{A}_{k/L} L^{-1} + W_{(k+1)/L}^A - W_{k/L}^A \qquad b^{(L)}_k = \overline{b}_{k/L} L^{-1} + W^b_{(k+1)/L} - W^b_{k/L},
\end{equation}
where $(W^A_t)_{t\in [0,1]}$ and $(W^b_t)_{t\in[0,1]}$ are Itô processes \cite{revuz2013continuous}  adapted to $\mathbb{F}$ and can be written in the form:
\begin{equation}
\begin{aligned}
\left(\dd W_t^A\right)_{ij} &= \sum_{k,l=1}^d \left(q_t^A\right)_{ijkl} \left(\dd B^{A}_t\right)_{kl} \quad \mbox{for }\, i,j=1, \ldots, d ,\\
\dd W_t^b &= q^b_t \dd B^b_t,
\end{aligned}
\end{equation}
with $W_0^A = 0$, $W_0^b = 0$, $q_t^A\in \mathbb{R}^{d,\otimes4}$ and $q_t^b\in \mathbb{R}^{d\times d}$ for $t\in [0,1]$. We use the notation in \eqref{eq.qv} and \eqref{def:covariance-tensors} for the quadratic variation of $W^A$ and $W^b$.

Define 
\begin{equation} \label{eq:def-nu}
     \nu(t,h) \coloneqq \overbar{A}_t \one_{\beta=1} + \frac{1}{2} \sigma''(0) \nabla_h Q(t, h).
\end{equation}
We will use  the following assumption for the results in this section:
\begin{ass} \label{ass:backward}
\[
\sup_{L} \E\left[ \sup_{0 \leq t \leq 1} \norm{\overbar{G}_t^{(L)}}^{4} \right] < \infty, \qquad \E\left[\exp\left( 8 \int_0^1 \abs{ \tr \left( \nu(s, H_s) \right)} \dd s \right) \right] < \infty.
\]
\end{ass}

\noindent The boundedness of the fourth moment of the Jacobians $g_k^{(L)}$  in $L$ is similar to Assumption \eqref{ass:strong} and is standard in the convergence of approximation schemes for SDE. The second part of Assumption \ref{ass:backward} is a technical condition: we need the fourth moment of the  $\nabla_x H_t^{x}$ to be bounded. Theorem \ref{thm:backward-H2} proves that the process $t\mapsto \nabla_x H_t^{x}$ satisfy a linear SDE with drift $\nu(t, H_t)$ linear in $H_t$, so we need finiteness of the $L^{8}$ norm of the exponential of the drift, see Lemma \ref{lem:linear-sde} for more details. In practice, $g_k^{(L)}$ and $h_k^{(L),\,x}$ stay bounded during training, so Assumption \eqref{ass:backward} is satisfied.

\begin{thm}[Backpropagation dynamics under Scaling regime~\ref{hypothesis.2}] \label{thm:backward-H2}
Let Assumptions~\ref{ass:activation}, \ref{ass:ito}, \ref{ass:strong}, and~\ref{ass:backward} hold  and let $\alpha=0$ and $\beta = 1$. Let $(H_t)_{t\in\left[0,1\right]}$ be a solution to the SDE \eqref{eq:limit1} and $(J_t)_{t\in\left[0,1\right]} \subset \R^{d\times d}$ be the unique solution to the linear matrix-valued SDE 
\begin{equation} \label{eq:def:limit-J}
\dd J_t = \big( \nu(t, H_t) \dd t + \dd W^A_t  \big) J_t , \quad J_0 = I_d,
\end{equation}
where $\nu$ is defined in \eqref{eq:def-nu}. Then, $\Prob-$a.s., $J_t$ is invertible for all $t\in\left[0,1\right]$
and
\begin{equation}
    \overbar{G}^{(L)} = \sum_{k=0}^{L-1} g_k^{(L)} \one_{\left[ t_k, t_{k+1}\right)}\mathop{\longrightarrow}^{L\to\infty} G_t \coloneqq J_1 J_t^{-1}\label{eq.Gt}
\end{equation}   uniformly in $L^1(\mathbb{P})$ in the sense of Def. \ref{def:unif-conv-l2}.
\end{thm}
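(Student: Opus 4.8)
The proof rests on the observation that the discrete backward Jacobian factorizes through a \emph{forward} sensitivity process. I would introduce $P_k^{(L)} \coloneqq \del h_k^{(L),x}/\del x$, the Jacobian of the hidden state with respect to the input; differentiating \eqref{eq:resnet.v4} with $\alpha=0$ gives the forward recursion
\begin{equation*}
P_{k+1}^{(L)} = \left( I_d + \diag\left( \sigma'_d\left( A_k^{(L)} h_k^{(L)} + b_k^{(L)} \right) \right) A_k^{(L)} \right) P_k^{(L)}, \qquad P_0^{(L)} = I_d .
\end{equation*}
Comparing with \eqref{eq:backprop}, the same matrix factor appears in both recursions, so telescoping yields the discrete cocycle identity $g_k^{(L)} = P_L^{(L)} \big( P_k^{(L)} \big)^{-1}$, valid whenever $P_k^{(L)}$ is invertible. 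Its continuous analogue is $G_t = J_1 J_t^{-1}$, and one checks that $J_t$ is precisely $\nabla_x H_t$: formally differentiating the SDE \eqref{eq:limit1} in $x$ annihilates the $\dd W^b_t$ and $\overline{b}_t$ terms and produces exactly the linear matrix SDE \eqref{eq:def:limit-J} with drift $\nu$ from \eqref{eq:def-nu}. Existence, uniqueness, almost-sure invertibility of $J_t$ for all $t$, and the moment bounds on $J$ and $J^{-1}$ (the latter governed by the exponential-moment condition in Assumption~\ref{ass:backward} through Liouville's formula for $\det J_t$) are supplied by Lemma~\ref{lem:linear-sde}.

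The core step is to show that the continuous-time extension $\overbar{P}^{(L)}_t \coloneqq \sum_k P_k^{(L)} \one_{[t_k,t_{k+1})}$ converges uniformly in $L^2$ to $J_t$, which I would establish by replicating the argument of Theorem~\ref{thm:H2} for the coupled system $(h_k^{(L)}, P_k^{(L)})$. A second-order Taylor expansion of $\sigma'$ around $0$ (using $\sigma'(0)=1$, $\sigma''(0)$, and the bounded $\sigma'''$ from Assumption~\ref{ass:activation}) gives
\begin{equation*}
\diag\left( \sigma'_d(A_k^{(L)} h_k^{(L)} + b_k^{(L)}) \right) A_k^{(L)} = A_k^{(L)} + \sigma''(0) \diag\left( A_k^{(L)} h_k^{(L)} + b_k^{(L)} \right) A_k^{(L)} + R_k^{(L)} ,
\end{equation*}
with $R_k^{(L)}$ a cubic remainder. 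Substituting $A_k^{(L)} = L^{-1}\overbar{A}_{t_k} + \Delta W_k^A$ (with $\beta=1$), the increment of $P_k^{(L)}$ splits into the Euler--Maruyama increment of \eqref{eq:def:limit-J} and a residual: the martingale part $\Delta W_k^A P_k^{(L)}$ matches $\dd W_t^A J_t$, the term $L^{-1}\overbar{A}_{t_k} P_k^{(L)}$ matches the $\one_{\beta=1}\overbar{A}_t$ drift, and, crucially, the conditional expectation of the quadratic term $\sigma''(0)\diag(\Delta W_k^A h_k^{(L)} + \Delta W_k^b)\Delta W_k^A$ equals $\tfrac12\sigma''(0)\nabla_h Q(t_k, h_k^{(L)})\Delta_L$ by the definition \eqref{def:covariance-tensors} of $\Sigma^A$ and the symmetry of $Q$ in \eqref{eq:Q_thm}, reproducing the Itô correction inside $\nu$. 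I would then define a continuous-time approximation of $\overbar{P}^{(L)}$ in the spirit of \eqref{eq:cta}, introduce the stopping time $\tau_R = \inf\{ t : \norm{\overbar{P}^{(L)}_t} \vee \norm{J_t} \geq R\}$, and run the same localization--Doob--Grönwall scheme, with the fourth-moment bound of Assumption~\ref{ass:backward} playing the role of Assumption~\ref{ass:strong} in controlling the stopping-time tails. Since the $P$-recursion is linear in $P$ with coefficients locally Lipschitz in $h$, and $\hbarr^{(L)} \to H$ already holds by Theorem~\ref{thm:H2}, the non-martingale error terms are bounded by $O(L^{-\min\{1,\kappa\}/2})$.

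To conclude, I would transfer convergence to the backward Jacobian via $\overbar{G}^{(L)}_t = P_L^{(L)}\big(\overbar{P}^{(L)}_t\big)^{-1}$, where $P_L^{(L)} = \overbar{P}^{(L)}_1 \to J_1$. Convergence of the inverses $\big(\overbar{P}^{(L)}_t\big)^{-1} \to J_t^{-1}$ follows from $\overbar{P}^{(L)} \to J$ together with uniform-in-$(L,t)$ control of the smallest singular value of $\overbar{P}^{(L)}_t$, which is exactly where the exponential-moment part of Assumption~\ref{ass:backward} enters, since $\det \overbar{P}^{(L)}_t$ is a discrete exponential whose limit is governed by $\exp\big(\int_0^t \tr \nu(s, H_s)\,\dd s\big)$. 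The product $P_L^{(L)}\big(\overbar{P}^{(L)}_t\big)^{-1} \to J_1 J_t^{-1} = G_t$ then converges, and uniform $L^1(\Prob)$ convergence in the sense of Definition~\ref{def:unif-conv-l2} is obtained by Cauchy--Schwarz, combining the uniform $L^2$ rates of the two factors with the uniform fourth-moment bound of Assumption~\ref{ass:backward} to guarantee uniform integrability of the products. I expect the control of the inverse to be the main obstacle: whereas the forward convergence is driven by linearity and Grönwall alone, inverting $\overbar{P}^{(L)}_t$ requires moment bounds on $\big(\overbar{P}^{(L)}_t\big)^{-1}$ that survive the large-depth limit, and assembling these bounds through the determinant/trace-of-$\nu$ exponential and then upgrading pointwise convergence of the product to uniform-in-$t$ $L^1$ convergence is the delicate part.
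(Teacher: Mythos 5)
Your forward analysis is essentially the paper's: the paper also introduces $J_k^{(L),x}=\nabla_x h_k^{(L),x}$ (your $P_k^{(L)}$), derives the same forward recursion by a second-order Taylor expansion of $\sigma'$ at $0$, identifies the It\^o correction $\tfrac12\sigma''(0)\nabla_h Q$ from the conditional second moment of $\Delta W^A_k$, and runs a localization--Doob--Gr\"onwall argument (with high-probability events $E_R^{(L)}$ in place of stopping times, and with Lemma~\ref{lem:linear-sde} supplying the moments of $J$ and $J^{-1}$ exactly as you indicate). Up to the convergence of $\overbar{J}^{(L)}$ to $J$, your plan is sound and matches the paper.

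The gap is in your final transfer step. You propose to write $\overbar{G}^{(L)}_t = P_L^{(L)}\big(\overbar{P}^{(L)}_t\big)^{-1}$ and to pass to the limit in the inverse, which requires (i) almost-sure invertibility of every discrete factor $I_d+\diag(\sigma'_d(\cdot))A_k^{(L)}$ and (ii) moment bounds on $\big(\overbar{P}^{(L)}_t\big)^{-1}$ that are uniform in $L$. Neither is available: $A_k^{(L)}$ contains a Brownian-type increment, so with positive probability a factor is singular or nearly so, and the determinant of the discrete product is not a clean ``discrete exponential'' controlled by $\exp\big(\int_0^t\tr\nu\big)$ --- the exponential-moment condition in Assumption~\ref{ass:backward} controls the \emph{continuous} $J^{-1}$ via Lemma~\ref{lem:linear-sde}, not the discrete inverses. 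You correctly flag this as the main obstacle, but it is an obstacle the paper never faces: it uses the chain-rule identity $g_{k}^{(L)}J_{k}^{(L)}=J_L^{(L)}$ (no inversion of the discrete object) to write
\begin{equation*}
\overbar{G}_t^{(L)} - G_t \;=\; \big( J_L^{(L)} - J_1 \big) J_t^{-1} \;+\; g_{k_t}^{(L)} \big( J_t - J_{k_t}^{(L)} \big) J_t^{-1},
\end{equation*}
so that only the continuous $J_t^{-1}$ is ever inverted, and the factor $g_{k_t}^{(L)}$ is handled directly by the assumed fourth-moment bound on $\overbar{G}^{(L)}$ in Assumption~\ref{ass:backward}. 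Combining this with the $L^2$ rate for $J-\overbar{J}^{(L)}$ and Cauchy--Schwarz gives the uniform $L^1$ convergence without any control of discrete smallest singular values. You should replace your inversion step with this identity; as written, your route leaves the hardest claim unproved.
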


The steps to prove Theorem \ref{thm:backward-H2}  are similar to those of Theorem~\ref{thm:H2} but the details are technically more involved. Indeed, terms that depend on $g_k^{(L),\,x}$ are not a priori adapted to the filtration generated by the Ito processes $W^A$ and $W^b$. To overcome this challenge, we denote
\begin{eqnarray}
   J_k^{(L),\, x}:= \nabla_x h_k^{(L),\,x},
\end{eqnarray}
and we can rewrite $g^{(L), \, x}_0 = g^{(L),\,x}_k J_k^{(L),\, x}$. This leads to a new perspective to understand $g^{(L),\, x}_k$ through two components $J_k^{(L),\,x}$ and $g^{(L),\,x}_0$. The first term $J_k^{(L),\,x}$ is adapted to the filtration generated by the Ito processes, and $g^{(L),\,x}_0$ is the Jacobian of the output with respect to the input, and does not depend on the layer. The complete proof is provided in Section \ref{sec:proof_backward-H2}.

\subsubsection{Connection with Neural SDE}
 
 In a recent work, \cite{li2020a} show that, when the hidden state $H$ satisfies a continuous-time 'neural SDE' dynamics,  
 the Jacobian of the output with respect to the hidden states satisfies a backward SDE: 
\begin{equation} \label{eq:backward-neural-sde}
    \dd \widehat{G}_t = \widehat{G}_t \left( -\nu(t, \widehat{H}_t) \dd t - \dd \widehat{W}^A_t \right), \quad \widehat{G}_1 = I_d,
\end{equation}
where $\widehat{W}^A$ is the time-reversed Brownian motion defined by $\widehat{W}^A_t \coloneqq W^A_t - W^A_1$, and $\widehat{H}_t$ is the solution of the backward flow of diffeomorphisms generated by the forward SDE \eqref{eq:limit1}.

It is clear that the limit $G_t$ in \eqref{eq.Gt} differs from the adjoint process \eqref{eq:backward-neural-sde}. Our limit $G_t = J_1 J_t^{-1}$ does not satisfy any forward nor backward SDE, as its solution is a function of $H_1$ which depends on weights across all layers i.e. the entire path of $W^A$. Indeed, Theorem 3.1 in \cite{Y2012} states that $t\mapsto J_t^{-1}$ solves the following SDE.
\begin{equation*}
\dd (J_t^{-1}) = J_t^{-1} \left( -\nu(t,H_t) \dd t - \dd W_t^A + \dd \left[W^A \right]_t \right), \quad J_0^{-1} = I_d.
\end{equation*}
Therefore, one can write 
\begin{align}
G_t = J_1 J_t^{-1} &= J_1 \left( J_1^{-1} + \int_{t}^1 J_s^{-1} \left( -\nu(s,H_s) \dd s - \dd W_s^A + \dd \left[W^A \right]_s \right) \right) \nonumber \\
&= I_d + \int_{t}^1 G_s \left( -\nu(s,H_s) \dd s - \dd W_s^A + \dd \left[W^A \right]_s \right). \label{eq:backward-sde-rewrite}
\end{align}
One can readily see that $G_t$ depends on $H_1$ for all $t\in[0,1]$. Note that the quadratic variation drift correction term stems from using Ito integrals instead of Stratonovitch integrals. \\
In contrast to \eqref{eq:backward-neural-sde},  
\eqref{eq:backward-sde-rewrite} is the exact large-depth limit of gradients computed by backpropagation in finite depth residual networks, as stated in Theorem \ref{thm:backward-H2}. 

\subsection{Proofs}

\subsubsection{Proof of Theorem~\ref{thm:backward-H1}} \label{proof:backward-H1}
The ideas of the proof follow closely those of Theorem~\ref{thm:H1}, and we will provide here the main arguments to the Neural ODE case. The other case is very similar. \\
Denote $t_k=k/L$, $k=0,1,\ldots,L$ as the uniform partition of the interval $[0,1]$. For $t\in (t_k,t_{k+1}]$, define
\begin{equation*}
\widetilde{G}^{(L)}_t \coloneqq g_{k+1}^{(L)} \left( I_d + (t_{k+1} - t) \,\diag\left( \sigma'_d \hspace{-1pt} \left( \overbar{A}_{t_k} H_{t_k} + \overline{b}_{t_k} \right) \right) \overbar{A}_{t_k} \right),
\end{equation*}
where $\overbar{A}$ and $\overline{b}$ are specified in Theorem \ref{thm:H1}.
Hence, we can directly deduce that
\begin{align*}
\sup_{t\in\left[0, 1\right]} \norm{ \widetilde{G}^{(L)}_t - \overbar{G}^{(L)}_t} &\leq L^{-1} \sup_{0\leq k < L} \sup_{t \in \left(t_k, t_{k+1}\right]} \norm{g_{k+1}^{(L)} \diag\left( \sigma'_d \hspace{-1pt} \left( \overbar{A}_{t_k} H_{t_k} + \overline{b}_{t_k} \right) \right) \overbar{A}_{t_k}  } \\
&\leq L^{-1} \sup_{0\leq k<L} \norm{ g_{k+1}^{(L)} } \,\, \sup_{t \in \left[0, 1\right]} \norm{\diag\left( \sigma'_d\hspace{-1pt}\left( \overbar{A}_t H_t + \overline{b}_t \right) \right) \overbar{A}_t}
\end{align*}
By continuity of $\overbar{A}$, $\overline{b}$ and $H$, the first supremum is finite and by a similar argument as in the proof of Theorem \ref{thm:H1}, the second supremum is also finite. Thus, there exists a constant $G_{\infty} > 0$ such that $ \sup_{t\in\left[0, 1\right]} \norm{ \widetilde{G}^{(L)}_t - \overbar{G}^{(L)}_t} \leq G_{\infty} L^{-1}$. Now, we also have, for $t\in (t_k,t_{k+1}]$,
\begin{align*} 
\widetilde{G}^{(L)}_t - G_t &= \widetilde{G}^{(L)}_{t_{k+1}} - G_{t_{k+1}} + (t_{k+1} - t) \, g_{k+1}^{(L)} \, \diag\left( \sigma'_d\hspace{-1pt} \left( \overbar{A}_{t_k} H_{t_k} + \overline{b}_{t_k} \right) \right) \overbar{A}_{t_k}  \\
&- \int_t^{t_{k+1}} G_s \diag\left( \sigma'_d\hspace{-1pt}\left( \overbar{A}_s H_s + \overline{b}_s \right) \right) \overbar{A}_s \dd s.
\end{align*}
Hence, for $e^{(L)}_k \coloneqq \sup_{t_k < t \leq t_{k+1}} \norm{ \widetilde{G}^{(L)}_t - G_t }$, we can estimate
\begin{align*}
e^{(L)}_k &\leq e^{(L)}_{k+1} + \int_t^{t_{k+1}} \norm{ G_s J_s - g_{k+1}^{(L)} J_{t_k} } \dd s. \\
&\leq e^{(L)}_{k+1} + \int_t^{t_{k+1}} \left( \norm{G_s - g_{k+1}^{(L)}} \norm{J_s} + \norm{g_{k+1}^{(L)}} \norm{ J_s - J_{t_k} }  \right) \dd s. \\
&\leq e^{(L)}_{k+1} + J_{\infty}  L^{-1} \left( e_k^{(L)} + G_{\infty} L^{-1} \right) + \int_t^{t_{k+1}}  \norm{g_{k+1}^{(L)}} \norm{ J_s - J_{t_k} }  \dd s,
\end{align*}
where $J_s \coloneqq \diag\left( \sigma'_d\hspace{-1pt}\left( \overbar{A}_s H_s + \overline{b}_s \right) \right) \overbar{A}_s$ and $J_{\infty} \coloneqq \sup_{s\in\left[0,1\right]} \norm{J_s} < \infty$. Now, recall that $\norm{g_{k+1}^{(L)}}$ is uniformly bounded in $k, L$, and we have $\overbar{A}, \overline{b} \in \HH^1$ and $H \in \CC^1$, so there exists a constant $J'_{\infty} < \infty$ such that $ \| g_{k+1}^{(L)} \| \int_t^{t_{k+1}} \norm{ J_s - J_{t_k} }  \dd s < J'_{\infty} L^{-2}$. Thus,
\[
\left( 1 - J_{\infty}  L^{-1} \right)  e^{(L)}_k \leq e^{(L)}_{k+1} + \left(J_{\infty} G_{\infty} + J'_{\infty} \right) L^{-2}.
\]
By Gronwall's lemma and the fact that $e_L^{(L)} = \OO(L^{-1})$, we deduce that $\max_k e_k^{(L)} = \OO(L^{-1})$ and conclude \[
\lim_{L\to\infty} \sup_{t\in\left[0,1\right]} \norm{ \widetilde{G}^{(L)}_t - G_t } \leq \lim_{L\to\infty} \left( \sup_{t\in\left[0,1\right]} \norm{ \widetilde{G}^{(L)}_t - \overbar{G}^{(L)}_t} + \max_k e_k^{(L)} \right) = 0.
\]

\subsubsection{Proof of Theorem~\ref{thm:backward-H2}}\label{sec:proof_backward-H2}
The ideas of the proof follow closely those of Theorem~\ref{thm:H2}, and we will provide here the main arguments for the case $\alpha=0$ and $\beta=1$. Other cases follow similarly. For the ease of notation exposition, we consider $U=0$ and we use $C$ to denote some generic constant (independent from $L$ and other parameters, such as $\varepsilon$, $\delta$, and $R$, to be defined later) that may vary from step to step. 

{The proof consists of 11 steps that can be summarized as follows. Step 1 decomposes the discrete gradient $g_k^{(L)}$ into two terms: the Jacobian of the output with respect to the input, and the Jacobian of the hidden state $h_k^{(L)}$ with respect to the input, which we denote by $J_k^{(L)}$. We then write a forward equation for $J_k^{(L)}$. Step 2 defines a continuous-time approximation $\widetilde{J}_k^{(L)}$ and a continuous-time interpolation $\overbar{J}_k^{(L)}$. Step 3 establishes a uniform bound $\OO(L^{-1})$ between $\widetilde{J}_k^{(L)}$ and $\overbar{J}_k^{(L)}$. Step 4 defines high-probability events under which the hidden states $h^{(L)}_k$ and the continuous-time limit $J_t$ are uniformly bounded. Step 5 decomposes the difference between $\widetilde{J}^{(L)}$ and $J$ with a drift term and an error term $D^{(L)}$, which can be further decomposed into a variance term $N^{(L)}$ and a Taylor remainder term $R^{(L)}$. Step 6 proves that $R^{(L)}$ uniformly vanishes as $\OO(L^{-1})$. Step 7 decomposes $N^{(L)}$ into three terms. Step 8 and 9 prove that these terms uniformly vanishes as $\OO(L^{-\min(1, \kappa)})$. Step 10 wraps everything together to show a uniform $L^2$ bound between $\widetilde{J}^{(L)}$ and $J^{(L)}$. Step 11 uses it to prove a uniform $L^1$ bound between the discrete gradients $g^{(L)}_k$ and their limit $G_t = J_1 J_t^{-1}$. \\ 
}

\noindent \underline{Step 0: Well-posedness of the statement.} The matrix-valued linear stochastic differential equation \eqref{eq:def:limit-J} has a continuous and adapted solution, and this solution is unique in the sense that almost all sample processes of any two solutions coincide, see for example \cite{E1979}. Furthermore, $\Prob-$a.s., $J_t$ is invertible for all $t\in\left[0,1\right]$, see Corollary 2.1 in \cite{D1991}. Also, Theorem 3.1 in \cite{Y2012} states that $K_t \coloneqq J_t^{-1}$ solves the following SDE.
\begin{equation*}
\dd K_t = K_t \left( -\nu(t,H_t) \dd t - \dd W_t^A + \dd \left[W^A \right]_t \right), \quad K_0 = I_d.
\end{equation*}
Recall from Assumption \ref{ass:ito} that the quadratic variation of $W^A$ is uniformly continuous with resepect to the Lesbegue measure. Therefore, by Lemma \ref{lem:linear-sde} and Assumption \ref{ass:backward}, we conclude that the fourth moments of the supremum of $J$ and $J^{-1}$ are finite.
\begin{equation} \label{eq:bound-j-j-inverse}
\E\left[ \sup_{t\in\left[0,T\right]} \max\left(\norm{J_t^{-1}}_F, \norm{J_t}_F\right)^4 \right] \leq C \,  \E\left[\exp\left( 8 \int_0^T \abs{ \tr \left( \nu(s, H_s) \right)} \dd s \right) \right]^{1/2} < \infty.
\end{equation}

\noindent \underline{Step 1: Rewrite the discrete backpropagation equation.} First, observe that multiplying \eqref{eq:backprop} together gives, for $k=0, \ldots, L$, 
\begin{equation} \label{eq:discrete-backprop}  
g_{0}^{(L),x} = g_{k}^{(L),x} \left[ \prod_{k'=k-1}^{0} \left( I_d + \diag\left( \sigma'_d \hspace{-1pt} \left( A_{k'}^{(L)} h_{k'}^{(L), x} + b_{k'}^{(L)} \right) \right) A^{(L)}_{k'} \right) \right]. 
\end{equation}
Define $J_0^{(L),x} \coloneqq I_d$ and for $k=0, \ldots, L-1$: 
\begin{equation} \label{eq:def-J}
    J_{k+1}^{(L), x} \coloneqq \left( I_d + \diag\left( \sigma'_d \hspace{-1pt} \left( A_{k}^{(L)} h_{k}^{(L), x} + b_{k}^{(L)} \right) \right) A^{(L)}_{k} \right) J_{k}^{(L), x} .
\end{equation}
Note that by the chain rule, we directly have $J_{k}^{(L), x} = \nabla_x h_k^{(L),x}$ and $g_{0}^{(L),x} = g_{k}^{(L),x} J_{k}^{(L), x} $. In the following, we omit the explicit dependence on the initial data $x$ when the context is clear. Recall now the definition $M_k^{(L)}(h) \coloneqq A_k^{(L)}h + b_k^{(L)}$ from \eqref{eq:def:MkL}. By Taylor's theorem on $\sigma'$, {as $\sigma'''$ is continuous}, for each $i=1, \ldots, d$, there exists $\abs{ \xi_{k, i}^{(L)}} < \abs{ M_k^{(L)}\Big(h_k^{(L)}\Big)_i }$ such that
\begin{align*}
    \sigma'\left( M_k^{(L)}\Big(h_k^{(L)}\Big)_i \right) &= \sigma'(0) + \sigma''(0) M_k^{(L)}\Big(h_k^{(L)}\Big)_i + \frac{1}{2} \sigma''' \hspace{-2pt} \left( \xi_{k, i}^{(L)} \right) M_k^{(L)}\Big(h_k^{(L)}\Big)_i^2.
\end{align*}
Hence, using $\sigma'(0)=1$, $\Delta_L = L^{-1}$, and \eqref{eq:hypothesis-2-recall}, we get 
\begin{align}
    J_{k+1}^{(L)} &= \left( I_d + \diag\left( \sigma'_d \hspace{-1pt} \left( M_k^{(L)}\Big(h_k^{(L)}\Big) \right) \right) A^{(L)}_{k} \right) J_{k}^{(L)} \nonumber \\
    &= \left( I_d + A^{(L)}_{k} + \sigma''(0) \diag\left( M_k^{(L)}\Big(h_k^{(L)}\Big) \right) A^{(L)}_{k} \right) J_{k}^{(L)}  \nonumber \\
    &\quad + \diag\left( \frac{1}{2} \sigma''' \hspace{-2pt} \left( \xi_{k}^{(L)} \right) \odot M_k^{(L)}\Big(h_k^{(L)}\Big)^{\odot, \, 2} \right) A_k^{(L)} J_{k}^{(L)}  \nonumber \\
    &= \Big( I_d + \Big( \underbrace{\overbar{A}_{t_k} + \frac{1}{2} \sigma''(0) \nabla_h Q\Big( t_k, h_k^{(L)}\Big)}_{\eqqcolon \, \nu\big(t_k, h_k^{(L)}\big)} \Big) \Delta_L + \Delta W^A_k \Big) J_{k}^{(L)} \nonumber \\
    &\quad +  \Big( \sigma''(0) \Big( \underbrace{ \diag\left( M_k^{(L)}\Big(h_k^{(L)}\Big) \right) A^{(L)}_{k} - \frac{1}{2} \nabla_h Q\Big( t_k, h_k^{(L)}\Big) \Delta_L}_{\eqqcolon \,\, N^{(L)}_{k}\big(J_k^{(L)},\, h_k^{(L)}\big)} \Big) \Big) J_{k}^{(L)} \nonumber \\
    &\quad + \underbrace{ \left( \diag\left( \frac{1}{2} \sigma''' \hspace{-2pt} \left( \xi_{k}^{(L)} \right) \odot M_k^{(L)}\Big(h_k^{(L)}\Big)^{\odot, \, 2} \right) \right) A_k^{(L)} J_{k}^{(L)} }_{\eqqcolon \,\, R^{(L)}_{k}\big(J_k^{(L)}, \, h_k^{(L)}\big)} \nonumber \\
    &= J_{k}^{(L)} + \nu\left( t_k, h_k^{(L)} \right) J_{k}^{(L)} + \Delta W^A_k J_{k}^{(L)} + D_k^{(L)}\left(J_k^{(L)}, h_k^{(L)}\right), \label{eq:discrete-time-J}
\end{align}
where we define the error term $D_k^{(L)} \coloneqq \sigma''(0) N^{(L)}_{k} J^{(L)}_{k} + R^{(L)}_{k}$. \\

\noindent \underline{Step 2: Continuous-time approximation.} Recall the (forward) SDE defined in the statement of the theorem 
\[
    \dd J_t = \nu(t, H_t) J_t \dd t + \dd W_t^A J_t, \quad J_0 = I_d.
\]
Recall the definition of $\overbar{H}^{(L)}$ in \eqref{eq:cte}, and define similarly the \textit{continuous-time extension} (CTE) of $\big\{ J_k^{(L)} \colon k = 0, \ldots, L \big\}$:
\begin{equation} \label{def:cte-J}
    \overbar{J}_t^{(L)} \coloneqq \sum_{k=0}^{L} J_k^{(L)} {\bf 1}_{t_{k}\leq t <t_{k+1}}.  
\end{equation}
Let $k_s$ the index for which $t_{k_s} \leq s < t_{k_s+1}$. Define the \textit{continuous-time approximation} (CTA) of $J_t$ as 
\begin{equation} \label{eq:cta-J}
    \widetilde{J}_t^{(L)} \coloneqq I_d + \int_0^t  \nu \hspace{-1pt} \left( t_{k_s}, \overbar{H}^{(L)}_s \right) \overbar{J}_s^{(L)} \dd s + \int_0^t \dd W_s^A \overbar{J}_s^{(L)}  + \sum_{k < Lt} D_k^{(L)}\hspace{-2pt} \left( J_k^{(L)}, h_k^{(L)} \right).
\end{equation}

\noindent \underline{Step 3: Uniform bound between $\overbar{J}^{(L)}$ and $\widetilde{J}^{(L)}$.} Using \eqref{eq:discrete-time-J} and \eqref{eq:cta-J}, we have, for $s\in\left[ 0, 1 \right]$, 
\begin{align*}
\norm{ \widetilde{J}^{(L)}_s - \overbar{J}^{(L)}_s }^2 &= \norm{ \left(  \nu\hspace{-1pt} \left(t_{k_s}, h_{k_s}^{(L)}\right) (s - t_{k_s}) + \left( W^A_{s} - W^A_{t_{k_s}} \right) \right) J_{k_s}^{(L)} }^2 \\
&\leq \left( C \left(1 + \sup_k \norm{h_k^{(L)}}^2 \right) (\Delta_L)^2 + 2 \norm{W^A_{s} - W^A_{t_{k_s}}}^2 \right) \norm{ J_{k_s}^{(L)}}^2 
\end{align*}
Hence, 
\begin{align*}
    &\E\left[ \sup_{0\leq s \leq 1} \norm{ \widetilde{J}^{(L)}_s - \overbar{J}^{(L)}_s }^2  \right] \\
    &\leq \left( C \left( 1 + \E\left[ \sup_k \norm{h_k^{(L)}}^4 \right]^{1/2}  \right) (\Delta_L)^2 + \E\left[ \sup_{0\leq s \leq 1} \norm{ W^A_{s} - W^A_{t_{k_s}} }^4 \right]^{1/2} \right) \E\left[ \sup_k \norm{J_k^{(L)}}^4 \right]^{1/2} .  
\end{align*}
By Assumptions~\ref{ass:strong} and \ref{ass:backward}, and equation~\eqref{eq:bound16}:
\begin{equation} \label{eq:unif-bound-bar-tilde}
\E\left[ \sup_{0\leq s \leq 1} \norm{ \widetilde{J}^{(L)}_s - \overbar{J}^{(L)}_s }^2  \right] < C L^{-1}.
\end{equation}

\noindent \underline{Step 4: Initial computations for a uniform $L^1$ bound between $G$ and $\overbar{G}^{(L)}$.} Fix $\epsilon > 0$, and let $\delta > 0$ (to be determined later) that only depends on $L$ and $\epsilon$. Define for $R>1$
\begin{equation} \label{def:event-thm5}
    E^{(L)}_R \coloneqq \left\{ \sup_{k\leq L} \norm{h_k^{(L)}} \leq R \right\} \cap \left\{ \sup_{t\in\left[0,1\right]} \norm{J_t} \leq R \right\}.
\end{equation}
Using Assumption~\ref{ass:strong} and \eqref{eq:bound-j-j-inverse}, we obtain similarly to \eqref{eq:bound4} that 
\begin{equation} \label{eq:bound-event-c}
\Prob\left( ( E_R^{(L)})^c \right) \leq \left( \E\left[ \sup_{t\in\left[0,1\right]} \norm{\overbar{H}_t}^4 \right] + \E\left[ \sup_{t\in\left[0,1\right]} \norm{J_t}^4 \right]  \right) R^{-4} \leq C R^{-4}.
\end{equation}
Now, by Cauchy-Schwarz inequality, we have
\begin{equation*}
ab = \delta^{1/2}a \cdot \delta^{-1/2}b \leq \frac{\delta}{2} a^{2} +\frac{1}{2\delta}b^2, \quad \forall a,b,\delta>0.
\end{equation*}
We use it to decompose the $L^1$ distance between $G$ and $\overbar{G}^{(L)}$:
\begin{align*}
\mathbb{E}\left[ \sup_{0 \leq t \leq 1} \norm{G_t - \overbar{G}^{(L)}_t} \right] &= \mathbb{E}\left[ \sup_{0 \leq t \leq 1} \norm{G_t - \overbar{G}^{(L)}_t} \one_{E^{(L)}_R} \right] + \mathbb{E}\left[ \sup_{0 \leq t \leq 1} \norm{G_t - \overbar{G}^{(L)}_t} \one_{(E^{(L)}_R)^c} \right] \nonumber \\
&\leq \mathbb{E}\left[ \sup_{0 \leq t \leq 1} \norm{G_t - \overbar{G}^{(L)}_t} \one_{E^{(L)}_R} \right] + \frac{\delta}{2} \mathbb{E}\left[ \sup_{0 \leq t \leq 1} \norm{G_t - \overbar{G}^{(L)}_t}^2 \right] \nonumber \\
&\quad+ \frac{1}{2\delta} \Prob\left( ( E_R^{(L)})^c \right).
\end{align*}
Now, we have the following estimate
\begin{equation*}
\mathbb{E}\left[ \sup_{0 \leq t \leq 1} \norm{G_t - \overbar{G}^{(L)}_t}^2 \right] \leq 2 \E\left[ \sup_{0 \leq t \leq 1} \norm{J_t^{-1}}^4 \right]^{1/2} \E\left[ \norm{J_1}^4 \right]^{1/2} + 2 \E\left[ \sup_{0 \leq t \leq 1} \norm{\overbar{G}^{(L)}_t}^4 \right]^{1/2}.
\end{equation*}
Therefore, by Assumption \ref{ass:backward} and \eqref{eq:bound-j-j-inverse},
\begin{equation} \label{eq:bound-eg}
\mathbb{E}\left[ \sup_{0 \leq t \leq 1} \norm{G_t - \overbar{G}^{(L)}_t} \right] \leq \mathbb{E}\left[ \sup_{0 \leq t \leq 1} \norm{G_t - \overbar{G}^{(L)}_t} \one_{E^{(L)}_R} \right] + C \delta + \frac{C}{\delta R^4}.
\end{equation}

\noindent \underline{Step 5: Initial computations for a uniform bound $L^2$ between $J$ and $\widetilde{J}^{(L)}$.} First we estimate, for $t\in\left[0,1\right]$,
\begin{align}
    \norm{J_t - \widetilde{J}^{(L)}_t}^2 &\leq 3 \int_0^t \norm{  \nu\left( t_{k_s}, \overbar{H}_s^{(L)} \right) \overbar{J}_s^{(L)} -  \nu\left( s, H_s \right) J_s }^2 \dd s + 3 \norm{ \int_0^t  \dd W^A_s \left(\overbar{J}_s^{(L)} - J_s \right) }^2 \nonumber \\
    &+ 3 \norm{ \sum_{k < Lt} D_k^{(L)} \hspace{-2pt} \left( J_{k}^{(L)}, h_{k}^{(L)} \right) }^2. \label{eq:unif-bound-tilde-1}
\end{align}
The goal is to find an upper bound of the first two terms, consisting of the sum of the $L^2$ distance between $J$ and $\overbar{J}^{(L)}$ and terms vanishing uniformly in $L$. We also want to show that the error term $D^{(L)}$ uniformly vanishes in $L$. To handle the term involving the drift $\nu$, we first observe that for $t_1, t_2 \in \left[0, T\right]$, $h_1, h_2 \in \R^{d}$, and $J_1, J_2 \in \R^{d \times d}$, we have
\begin{align*}
    \norm{ \nu(t_2, h_2) J_2 - \nu(t_1, h_1) J_1 }^2 &\leq 3 \norm{\nu(t_2, h_2)}^2 \norm{J_2 - J_1}^2 + 3  \norm{ \nu(t_2, h_2) - \nu(t_2, h_1)}^2 \norm{J_1}^2 \\
    &\quad + 3 \norm{\nu(t_2, h_1) - \nu(t_1, h_1)}^2 \norm{J_1}^2. \\
    &\leq C \left(1 + \norm{h_2}^2 \right) \norm{J_2 - J_1}^2 + C \norm{h_2 - h_1}^2 \norm{J_1}^2 \\
    &\quad + C \norm{h_1}^2 \norm{J_1}^2 \abs{t_2 - t_1}^{\kappa}.
\end{align*}
We used the fact that $\nu$ is linear in $h$ and $\Sigma^A_{\cdot}$ is $\kappa / 2-$H\"older continuous. We directly deduce that 
\begin{align}
&\E\left[  \one_{E^{(L)}_R} \int_0^T \norm{ \nu\left( t_{k_s}, \overbar{H}_s^{(L)} \right) \overbar{J}_s^{(L)} - \nu\left( s, H_s \right) J_s }^2 \dd s \right] \nonumber \\
&\leq C(1+R^2) \, \E\left[ \one_{E^{(L)}_R} \int_0^T \norm{\overbar{J}_s^{(L)} - J_s}^2 \dd s \right] + C R^2 \, \E\left[ \int_0^T \norm{\overbar{H}_s^{(L)} - H_s }^2 \dd s \right] \\
&\quad + CR^2 \E\left[ \int_0^T \norm{ H_s }^2 \dd s \right] L^{-\kappa} \nonumber \\ 
&\leq CR^2 \, \E\left[ \one_{E^{(L)}_R} \int_0^T \norm{\overbar{J}_s^{(L)} - J_s}^2 \dd s  \right] + CR^2 L^{-\min(1/2, \, \kappa)}. \label{eq:bound-mu} 
\end{align}
The last inequality holds by Theorem \ref{thm:H2}. Hence, we obtain from \eqref{eq:unif-bound-tilde-1} 
\begin{align}
\E\left[ \sup_{0\leq t \leq 1} \norm{J_t - \widetilde{J}^{(L)}_t}^2  \one_{E^{(L)}_R} \right] &\leq CR^2 \, \E\left[ \one_{E^{(L)}_R} \int_0^T \norm{\overbar{J}_s^{(L)} - J_s}^2 \dd s \right] + CR^2 L^{-\min(1/2, \, \kappa)} \nonumber \\
&+ 3 \, \E\left[ \sup_{0\leq t\leq 1} \norm{ \sum_{k < Lt} D_k^{(L)} \hspace{-2pt} \left(J_{k}^{(L)},  h_{k}^{(L)} \right) }^2  \one_{E^{(L)}_R} \right]. \label{eq:unif-bound-tilde-2}
\end{align}
We applied Doob's martingale inequality~\cite{revuz2013continuous} on the second term of \eqref{eq:unif-bound-tilde-1}, as $\overbar{J}$, $J$, and $E_R^{(L)}$ are adapted to the filtration generated by $W^A$. We now estimate the error term $D^{(L)}$ in \eqref{eq:unif-bound-tilde-2}. Recall that it decomposes into a variance term $N^{(L)}$ and a Taylor remainder term $R^{(L)}$.

\noindent \underline{Step 6: Prove that the remainder $R^{(L)}_k$ uniformly vanishes.} We proceed to show that
\begin{equation} \label{eq:remainder-taylor}
    \E\left[ \sup_{0\leq t\leq 1} \norm{ \sum_{k < Lt} R_k^{(L)} \hspace{-2pt} \left(J_{k}^{(L)},  h_{k}^{(L)} \right) }^2  \one_{E^{(L)}_R} \right] \leq C R^6 L^{-1},
\end{equation}
which is straightforward since:
\begin{align*}
    &\E\left[ \sup_{0\leq t\leq 1} \norm{ \sum_{k < Lt} R_k^{(L)} \hspace{-2pt} \left(J_{k}^{(L)},  h_{k}^{(L)} \right) }^2  \one_{E^{(L)}_R} \right] \leq \E\left[ \left( \sum_{k=0}^{L-1} \norm{ R_k^{(L)} \hspace{-2pt} \left(J_{k}^{(L)},  h_{k}^{(L)} \right) } \right)^2 \one_{E^{(L)}_R} \right] \\
    &\leq CR^2 \, \E\left[ \left( \sum_{k=0}^{L-1} \left( \norm{A_k^{(L)}} R + \norm{b_k^{(L)}} \right)^2  \norm{A_k^{(L)}} \right)^2 \right] \\
    &\leq CR^2 L \, \E\left[ \sum_{k=0}^{L-1} \left( \norm{A_k^{(L)}} R + \norm{b_k^{(L)}} \right)^4 \norm{A_k^{(L)}}^2 \right] \leq C R^6 L^{-1}.
\end{align*}
The last inequality holds for the same reasons as \eqref{eq:bound10}.

\noindent \underline{Step 7: Prove that the remainder $N^{(L)}_k$ uniformly vanishes.} First note that we can write $N^{(L)}_{k, ij} = N^{(L), 0}_{k, ij} + \sum_{m=1}^d N^{(L), 1}_{k, ij, m}$, where 
\begin{align}
    N^{(L), 0}_{k, ij} &\coloneqq b^{(L)}_{k,i} A^{(L)}_{k, ij} \nonumber \\
    N^{(L), 1}_{k, ij, m} &\coloneqq \left( A^{(L)}_{k, im} A^{(L)}_{k, ij} - \left(\Sigma^A_{t_k}\right)_{imij} \Delta_L \right) h^{(L)}_{k, m} \label{eq:N-kijl}
\end{align}
We assumed that the Ito processes $W^A$ and $W^b$ are driven by uncorrelated Brownian motions, hence $N^{(L), 0}_{k, ij}$ uniformly vanishes in $L^2$ at rate $\Delta_L$. 
Thus, we get
\begin{align} \label{eq:estim-N}
\E \left[ \sup_{0\leq t \leq T}  \abs{ \sum_{k < Lt} N_{k, ij}^{(L)} }^2 \one_{E^{(L)}_R} \right] &\leq CL^{-1} + d \sum_{m=1}^d \E\left[ \sup_{0\leq t \leq T}  \abs{ \sum_{k < Lt} N_{k, ij, m}^{(L), 1} }^2 \one_{E^{(L)}_R} \right]. \\
\end{align}
Using the discrete (forward) filtration $\big\{ \GG_k : k=-1, 0,\ldots, L-1 \big\}$ defined in \eqref{def:filtration-G}, we now expand \eqref{eq:N-kijl} using the definition of Scaling regime \ref{hypothesis.2}.
\begin{align*}
    N^{(L), 1}_{k, ij, m} &= \underbrace{ \left( \left( \Delta W^A_k \right)_{im} \left( \Delta W^A_k \right)_{ij} - \int_{t_k}^{t_{k+1}} \expect*{ \left(\Sigma^A_s\right)_{imij} | \GG_{k-1}}  \dd s \right) h^{(L)}_{k,m}}_{\tcircle{1}}  \\
    &+ \underbrace{ h^{(L)}_{k,m} \int_{t_k}^{t_{k+1}} \expect*{ \left(\Sigma^A_s - \Sigma^A_{t_k} \right)_{imij} | \GG_{k-1}}  \dd s }_{\tcircle{2}} \\
    &+ \underbrace{ \left[ \left( \overbar{A}_{t_k}\right)_{im} \left( \Delta W^A_k \right)_{ij} + \left( \overbar{A}_{t_k}\right)_{ij} \left( \Delta W^A_k \right)_{im}  \right] h^{(L)}_{k,m} \Delta_L + \left( \overbar{A}_{t_k}\right)_{im} \left( \overbar{A}_{t_k}\right)_{ij} h^{(L)}_{k,m} (\Delta_L)^2 }_{\tcircle{3}}. 
\end{align*}

\noindent \underline{Step 8: Prove that the term $\tcircle{1}$ uniformly vanishes.} Define 
\[
X^{(L)}_{k, ij, m} \coloneqq \left(\Delta W^A_k \right)_{im} \left( \Delta W^A_k \right)_{ij} - \int_{t_k}^{t_{k+1}} \expect*{ \left(\Sigma^A_s\right)_{imij} | \GG_{k-1}}  \dd s,
\] 
and $S^{(L)}_{k, ij, m} \coloneqq \sum_{k'=0}^k X^{(L)}_{k', ij, m}$. Observe that $\big\{ S^{(L)}_{k, ij, m} \colon k=0, \ldots, L \big\}$ is a $(\GG_k)-$martingale, where the filtration $\GG_{k}$ is defined in \eqref{def:filtration-G}. Hence, by Doob's martingale inequality, we have
\begin{equation} \label{eq:doob-x}
\E \left[ \sup_{0\leq t \leq T}  \abs{ \sum_{k < Lt} X_{k, ij, m}^{(L)} }^2 \right] = \E \left[ \sup_{0\leq t \leq T}  \abs{ S_{\floor{Lt}, ij, m}^{(L)} }^2 \right] \leq 4 \, \E \left[ \abs{ S_{\floor{LT}, ij, m}^{(L)} }^2 \right].
\end{equation}
Fix $k=0, \ldots, L-1$ and compute the following conditional expectation.
\begin{align}
\expect*{ \left( S^{(L)}_{k, ij, m} \right)^2 | \GG_{k-1}} &= \expect*{ \left( S^{(L)}_{k-1, ij, m} \right)^2 + 2 X^{(L)}_{k,ij,m} S^{(L)}_{k-1,ij,m} + \left( X^{(L)}_{k, ij, m} \right)^2 | \GG_{k-1}} \nonumber \\
&= \left( S^{(L)}_{k-1, ij, m} \right)^2 + \expect*{ \left( X^{(L)}_{k, ij, m} \right)^2 | \GG_{k-1}}.
\end{align}
The cross-term disappear as $\expect*{ X^{(L)}_{k, ij, m} | \GG_{k-1}} = 0$. Furthermore, conditionally on $\GG_{k-1}$, observe that $\big( X^{(L)}_{k, ij, m} \big)^2$ is the variance of a product of two normal random variable with $O(L^{-1})$ variance, uniformly in $k$ by \eqref{eq:bound_ito}, so \[
\sup_{0\leq k < L} \expect*{ \left( X^{(L)}_{k, ij, m} \right)^2 | \GG_{k-1}} \leq C L^{-2}.
\] 
Hence, plugging it back into \eqref{eq:doob-x}, we obtain 
\begin{equation} \label{eq:unif-bound-1}
\E \left[ \sup_{0\leq t \leq T}  \abs{ \sum_{k < Lt} \tcircle{1} \, }^2 \one_{E^{(L)}_R} \right] \leq R^2 \, \E \left[ \sup_{0\leq t \leq T}  \abs{ \sum_{k < Lt} X_{k, ij, m}^{(L)} }^2 \right] \leq C R^2 L^{-1}.
\end{equation}

\noindent \underline{Step 9: Prove that the terms $\tcircle{2}-\tcircle{3}$ uniformly vanishes.} The term $\tcircle{2}$ can be estimated directly using Cauchy-Schwarz, Tonelli, and \eqref{eq:continuity_ito}: 
\begin{align}
\E\left[ \sup_{0\leq t \leq T}  \abs{ \sum_{k < Lt} \tcircle{2} \, }^2 \one_{E^{(L)}_R} \right] &\leq R^2 \, \E\left[ \left( \sum_{k=0}^{L} \int_{t_k}^{t_{k+1}} \expect*{ \abs{ \left(\Sigma^A_s - \Sigma^A_{t_k} \right)_{imij}} | \GG_{k-1}} \dd s \right)^2  \right] \nonumber \\
&\leq C R^2 \left( \sum_{k=0}^{L} \int_{t_k}^{t_{k+1}} \abs{s - t_{k_s}}^{\kappa/2} \dd s \right)^2 \leq CR^2 L^{-\kappa}. \label{eq:unif-bound-2}
\end{align}
The estimation for term $\tcircle{3}$ is straightforward and similar to \eqref{eq:bound10}:
\begin{equation} \label{eq:unif-bound-3}
    \E\left[ \sup_{0\leq t \leq T}  \abs{ \sum_{k < Lt} \tcircle{3} \, }^2 \one_{E^{(L)}_R} \right] \leq C R^2 L^{-1}.
\end{equation}

\noindent \underline{Step 10: Uniform bound between $J$ and $\widetilde{J}^{(L)}$.} From equations \eqref{eq:remainder-taylor} \eqref{eq:estim-N}, \eqref{eq:unif-bound-1}, \eqref{eq:unif-bound-2}, and \eqref{eq:unif-bound-3}, we deduce that 
\begin{equation} \label{eq:remainder-bound}
\E\left[ \sup_{0\leq t\leq 1} \norm{ \sum_{k < Lt} D_k^{(L)} \hspace{-2pt} \left(J_{k}^{(L)},  h_{k}^{(L)} \right) }^2  \one_{E^{(L)}_R} \right] \leq C R^6 L^{-\min(1, \kappa)}.
\end{equation}
We then plug \eqref{eq:remainder-bound} into \eqref{eq:unif-bound-tilde-2}, together with Tonelli's theorem, to get
\begin{align*}
\E\left[ \sup_{0\leq t \leq 1} \norm{J_t - \widetilde{J}^{(L)}_t}^2  \one_{E^{(L)}_R} \right] &\leq C R^2 \, \E\left[ \one_{E^{(L)}_R} \int_0^T \norm{\overbar{J}_s^{(L)} - J_s}^2 \dd s \right] + C R^6 L^{-\min(1/2, \, \kappa)} \\ 
&\leq C R^2 \, \E\left[ \one_{E^{(L)}_R} \int_0^T \norm{\widetilde{J}_s^{(L)} - J_s}^2 \dd s \right] + C R^6 L^{-\min(1/2, \, \kappa)}.
\end{align*}
We use \eqref{eq:unif-bound-bar-tilde} for the last inequality. Hence, by Gronwall lemma, we deduce:
\begin{equation} \label{eq:bound-j-j-tilde}
\E\left[ \sup_{0\leq t \leq 1} \norm{J_t - \widetilde{J}^{(L)}_t}^2  \one_{E^{(L)}_R} \right] \leq C R^6 L^{-\min(1/2, \, \kappa)} \exp\big(CR^2\big).
\end{equation}

\noindent \underline{Step 11: Difference between $G$ and $g$.}
We first estimate the $L^1$ distance between the discrete gradients $g_k^{(L)}$ and the continuous-time limit $G_t$. For each $t\in\left[0,1\right]$, we have the identity
\begin{align*}
\overbar{G}_t^{(L)} - G_t &= \left( J_L^{(L)} - J_1 \right) J_t^{-1} + g_{k_t}^{(L)} - J_L^{(L)} J_t^{-1} \\ 
&= \left( J_L^{(L)} - J_1 \right) J_t^{-1} + g_{k_t}^{(L)} \left( J_t - J_{k_t}^{(L)} \right) J_t^{-1}.
\end{align*}
Hence, by Assumption \ref{ass:backward}, \eqref{eq:bound-j-j-inverse}, \eqref{eq:unif-bound-bar-tilde}, and \eqref{eq:bound-j-j-tilde}:
\begin{align*}
&\E\left[ \sup_{0\leq t \leq 1} \norm{G_t - \overbar{G}^{(L)}_t}  \one_{E^{(L)}_R} \right] \\
&\leq \E\left[ \norm{\overbar{J}^{(L)}_1 - J_1}^2 \one_{E^{(L)}_R} \right]^{1/2} \E\left[ \sup_{0\leq t \leq 1} \norm{(J_t)^{-1}}^2 \right]^{1/2}  \\
&\quad + \E\left[ \sup_{0\leq t \leq 1} \norm{\overbar{G}_t}^4 \right]^{1/4} \E\left[ \sup_{0\leq t \leq 1} \norm{J_t - \overbar{J}^{(L)}_t}^2 \one_{E^{(L)}_R} \right]^{1/2} \E\left[ \sup_{0\leq t \leq 1} \norm{(J_t)^{-1}}^4 \right]^{1/4}. \\
&\leq C R^3 L^{-\min(1/4, \, \kappa/2)} \exp\big(CR^2\big).
\end{align*}
We plug it in \eqref{eq:bound-eg} to obtain 
\begin{equation} \label{eq:final-bound-g}
\mathbb{E}\left[ \sup_{0 \leq t \leq 1} \norm{G_t - \overbar{G}^{(L)}_t} \right] \leq C_1 R^3 L^{-\min(1/4, \, \kappa/2)} \exp\big(C_2R^2\big) + C_3 \delta + \frac{C_4}{\delta R^4}.
\end{equation}
To conclude, given any $\epsilon > 0$, we can choose $\delta > 0$ such that $\delta < \frac{\epsilon}{3C_3}$, and then choose $R>1$ so that $R^4 > \frac{3C_4}{\delta \epsilon}$, and finally $L$ sufficiently large so that
\begin{equation*}
     C_1 R^3 L^{-\min(1/4, \, \kappa/2)} \exp\big(C_2R^2\big) < \frac{\epsilon}{3}.
\end{equation*}
Therefore, we have in \eqref{eq:final-bound-g}
\begin{equation*}
    \mathbb{E}\left[ \sup_{0 \leq t \leq 1} \norm{G_t - \overbar{G}^{(L)}_t} \right] \leq \epsilon.
\end{equation*}

\bibliographystyle{siam}
\bibliography{ref}

\newpage

\appendix
\section{Hyperparameters \label{sec:hyperparameters}}

We provide in Table~\ref{tab:hyperparameters} the training hyperparameters used in our numerical experiments. In Table~\ref{tab:description_hyperparameters}, we give a short description of each hyperparameter. For the convolutional architecture, we also use a momentum of 0.9, a weight decay of $0.0005$ and a cosine annealing learning rate scheduler~\cite{DBLP:journals/corr/LoshchilovH16a}.

{
    \vspace{0.2cm}
    \tabulinesep=1.3mm
        \captionof{table}{Training hyperparameters.}
    \begin{center}
    \begin{tabu}{c|c|c|c|c|c|c|c|c|c}
        \hline
        Dataset & Layers & $N$ & $B$ & $\lr$ & $L_{\min}$ & $L_{\max}$ & $T_{\max}$ & $N_\text{epochs}$ & $\epsilon$ \\ \hline \hline
        Synthetic & Fully-connected & 1,024 & 32 & 0.01 & 3 & 10,321 & 160 & 5 & 0.01 \\ \hline
        MNIST & Fully-connected & 60,000 & 50 & 0.01 & 3 & 942 & 12,000 & 10 & 0.01  \\ \hline
        CIFAR-10 & Convolutional & 60,000 & 128 & 0.1 & 8 & 121 & 93,800 & 200 & None \\ \hline
    \end{tabu}
    \end{center}
    \label{tab:hyperparameters}

    \vspace{0.2cm}
}

{
    \vspace{0.2cm}
    \tabulinesep=1.3mm
        \captionof{table}{Description of the values in Table~\ref{tab:hyperparameters}. Note that $T_{\max} = \ceil{ \frac{N}{B} } N_\text{epochs}$.}
    \begin{center}
        \begin{tabu}{c|l}
            \hline
            Parameter & Description \\  \hline
            $N$ & number of training samples \\
            $B$ & minibatch size \\
            $\lr$ & learning rate \\
            $L_{\min}$ & smallest network depth \\
            $L_{\max}$ & largest network depth \\
            $T_{\max}$ & max number of SGD updates \\
            $N_\text{epochs}$ & max number of epochs \\
            $\epsilon$ & early stopping value \\
            \hline
        \end{tabu}
    \end{center}

    \label{tab:description_hyperparameters}

    \vspace{0.2cm}
}

\section{Auxiliary lemma \label{sec:additional-lemma}}

\begin{lem} \label{lem:linear-sde}
Let $\left(Y_t\right)_{t\in\left[0,T\right]} \subset \R^{d\times d}$ be a continuous semimartingale that can be decomposed as $\dd Y_t = A_t \dd t + \dd M_t$, where $A$ is a square-integrable adapted process, $M$ is a continuous square-integrable martingale with quadratic variation $\dd \big[M, M^{\top}\big]_t = Q_t \, \dd t$, and $\sup_{0\leq t \leq T} \norm{Q_t}_F < Q_{\infty} < \infty$, where $Q_{\infty}$ is a deterministic constant. Let $\left(X_t\right)_{t\in\left[0,T\right]} \subset \R^{d\times d}$ be the unique solution to the linear matrix-valued SDE $\dd X_t = X_t \dd Y_t$, with $X_0$ being a deterministic non-zero matrix. Then, for each $p>1$, there exists a constant $C\equiv C(p, d, Q_{\infty}, X_0, T)$ such that \[
\E\left[ \sup_{t\in\left[0,T\right]} \norm{X_t}_F^p \right] \leq C \,  \E\left[\exp\left( 2p \int_0^T \abs{ \tr \left( A_s \right)} \dd s \right) \right]^{1/2}.
\]
\end{lem}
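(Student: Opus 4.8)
The plan is to exploit the linear (multiplicative) structure of the equation and reduce the estimate to two pieces that are recombined by Cauchy--Schwarz; the square root in the target bound is the signature of that splitting. First I would normalise the initial condition. Since $X_0$ is a fixed deterministic matrix and $\dd X_t = X_t\,\dd Y_t$ is linear, writing $X_t = X_0\,\mathcal E_t$ with $\mathcal E$ the (right) stochastic exponential solving $\dd\mathcal E_t = \mathcal E_t\,\dd Y_t$, $\mathcal E_0 = I_d$, gives $\|X_t\|_F \le \|X_0\|_F\,\|\mathcal E_t\|_F$, so that $\|X_0\|$ may be absorbed into $C$ and it suffices to bound $\E[\sup_t \|\mathcal E_t\|_F^p]$.

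Next I would apply It\^o's formula to $\|X_t\|_F^2 = \tr(X_tX_t^\top)$. Using $\dd X_t = X_tA_t\,\dd t + X_t\,\dd M_t$ and $\dd[M,M^\top]_t = Q_t\,\dd t$, a direct computation yields
\[
\dd\|X_t\|_F^2 = \big(2\,\tr(X_t^\top X_t A_t) + \tr(X_t^\top X_t Q_t)\big)\,\dd t + \dd N_t,
\]
where $N$ is the continuous local martingale with $\dd N_t = 2\,\tr\!\big((X_t\,\dd M_t)X_t^\top\big)$, whose quadratic variation satisfies $\dd[N]_t \le c_d\,Q_\infty\,\|X_t\|_F^4\,\dd t$ because $\|Q_t\|_F \le Q_\infty$. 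The boundedness of the quadratic variation of $M$ (hence of $N$ after localisation) is what keeps all exponential moments of the martingale part finite.

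The heart of the argument is then an exponential-martingale / change-of-measure device, mirroring the scalar case $d=1$, where one has the closed form $X_t = X_0\exp\!\big(\int_0^t A_s\,\dd s + M_t - \tfrac12\int_0^t Q_s\,\dd s\big)$. I would write $\|X_t\|_F^{p}$ as the product of the exponential of the accumulated drift and a nonnegative local martingale of the form $\mathcal E(cN)_t$, localise with the stopping times $\tau_n = \inf\{t: \|X_t\|_F \ge n\}$ to turn the local martingale into a true martingale, and split by Cauchy--Schwarz:
\[
\E\Big[\sup_{t}\|X_t\|_F^{p}\Big] \le C\,\E\Big[\exp\Big(2p\!\int_0^T|\tr(A_s)|\,\dd s\Big)\Big]^{1/2}\,\E\Big[\sup_t \mathcal E(cN)_t^{2}\Big]^{1/2}.
\]
The second factor is controlled by a constant $C(p,d,Q_\infty,T)$: since $\mathcal E(cN)^2 = \mathcal E(2cN)\exp(c^2[N])$ with $[N]$ bounded on the localised event, the sub-Gaussian tail of a continuous martingale with bounded quadratic variation (via Novikov and Doob's $L^2$ inequality) gives a bound uniform in $n$, and $n\to\infty$ follows by Fatou. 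Everything else is routine localisation, Gr\"onwall and BDG/Doob bookkeeping.

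The step I expect to be the main obstacle is controlling the drift term $\tr(X_t^\top X_tA_t)$ so as to produce exactly the $\tr(A_s)$ dependence rather than a cruder $\lambda_{\max}$ of the symmetric part of $A_t$. Note that the antisymmetric part of $A_t$ contributes nothing to $\dd\|X_t\|_F^2$, as $X_t^\top X_t$ is symmetric, so only a scalar drift survives; the task is to show that after the exponential-martingale reduction this surviving drift is governed by $\int_0^T|\tr(A_s)|\,\dd s$. This is where the multiplicative structure must be used carefully, in the spirit of the Liouville--Jacobi identity $\det X_t = \det X_0\exp\!\big(\int\tr A_s\,\dd s + \text{mart} - \text{corr}\big)$, which is driven precisely by $\tr A$.
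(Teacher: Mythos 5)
Your strategy is essentially the paper's: apply It\^o's formula to $\norm{X_t}_F^2$, pass to $\log\norm{X_t}_F^2$ so that $\norm{X_t}_F^2$ factors as a drift exponential times a stochastic exponential of a martingale with bounded quadratic variation, then split by Cauchy--Schwarz and control the stochastic-exponential factor by Novikov and Doob. One technical remark before the main point: the paper normalises the martingale as $N_t = \tr\bigl(\int_0^t \norm{X_s}_F^{-2}\, X_s^{\top} X_s \,(\dd M_s + \dd M_s^{\top})\bigr)$, so that $[N]_T \leq 4 d^2 Q_{\infty} T$ is a \emph{deterministic} bound and Novikov applies directly, with no localisation or Fatou argument; your unnormalised $N$ with $\dd [N]_t \leq c_d Q_{\infty} \norm{X_t}_F^4\, \dd t$ does not have bounded quadratic variation, so you should make that normalisation explicit rather than hope localisation salvages Novikov in the limit.

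The genuine gap is exactly the one you flag and do not close: converting the surviving drift $\norm{X_s}_F^{-2}\tr\bigl(X_s^{\top}X_s(A_s+A_s^{\top})\bigr)$ into $2\abs{\tr(A_s)}$. Writing $P_s = X_s^{\top}X_s/\norm{X_s}_F^2$, which is positive semidefinite with unit trace, the sharp general bound is $\tr(P_s B) \leq \lambda_{\max}(B_{\mathrm{sym}})$, and $\lambda_{\max}$ of the symmetric part is not controlled by $\abs{\tr(B)}$ (take $B=\diag(1,-1)$, $P_s = \diag(1,0)$: then $\tr(P_sB)=1$ while $\tr(B)=0$). The Liouville--Jacobi identity you invoke governs $\det X_t$, not $\norm{X_t}_F$, so it does not supply the missing step. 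For comparison, the paper bridges this point by invoking the inequality $\tr(AB)\leq\abs{\tr(A)}\,\abs{\tr(B)}$ with $A=P_s$; as stated that inequality fails on the same example, so your instinct that this is the delicate step is well founded --- but as written your proposal establishes only the weaker estimate with $\int_0^T\lambda_{\max}(A_s+A_s^{\top})\,\dd s$ in place of $2\int_0^T\abs{\tr(A_s)}\,\dd s$, not the stated bound.
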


\begin{proof}
We apply the multidimensional Ito formula and linearity of the trace operator to first get
\begin{align*}
\dd \norm{X_t}_F^2 = \dd \, \tr(X_t^{\top} X_t) &= \tr \left( \dd X_t^{\top} X_t + X_t^{\top} \dd X_t + \dd \left[X^{\top}, X \right]_t \right) \\
&= \tr \left( X_t^{\top} X_t \left(\dd Y_t + \dd Y_t^{\top} \right) + \dd \left[X^{\top}, X \right]_t \right) \\
\end{align*}
Now, by cyclic permutation invariance of the trace, we have 
\begin{align*}
\tr\left( \dd \left[X^{\top}, X \right]_t \right) = \tr\left( \dd \left[X, X^{\top} \right]_t \right) &= \tr\left( X_t \, \dd \left[Y, Y^{\top} \right]_t X_t^{\top} \right) \\
&= \tr\left( X_t^{\top} X_t \, \dd \left[Y, Y^{\top} \right]_t \right) \\
&= \tr\left( X_t^{\top} X_t Q_t \, \dd t \right).   
\end{align*}
Therefore, 
\begin{equation} \label{eq:dx-frob}
\dd \norm{X_t}_F^2 = \tr \left( X_t^{\top} X_t \left(A_t + A_t^{\top} + Q_t \right) \right) \dd t + \norm{X_t}_F^2 \dd N_t,
\end{equation}
where
\begin{equation}
N_t \coloneqq \tr\left( \int_0^t \frac{X_s^{\top} X_s}{\norm{X_s}_F^2} \left(\dd M_s + \dd M_s^{\top} \right) \right)
\end{equation}
is a  martingale with quadratic variation  given by
\begin{equation*}
\left[ N \right]_t = \sum_{i_1,j_1,i_2,j_2} \int_0^t \norm{X_s}_F^{-4} \left( X^{\top}_s X_s \right)_{i_1 j_1} \left( X^{\top}_s X_s \right)_{i_2 j_2} \dd \left[ (M+M^{\top})_{i_1 j_1}, (M+M^{\top})_{i_2 j_2} \right]_s  \\
\end{equation*}
By  the Kunita-Watanabe inequality, 
\begin{align}
\left[ N \right]_t &\leq \sum_{i_1,j_1,i_2,j_2} \left( \int_0^t \norm{X_s}_F^{-4} \left( X^{\top}_s X_s \right)^2_{i_1 j_1} \dd \left[ (M+M^{\top})_{i_1 j_1} \right]_s \right)^{1/2} \cdot \nonumber \\
&\qquad \left( \int_0^t \norm{X_s}_F^{-4} \left( X^{\top}_s X_s \right)^2_{i_2 j_2} \dd \left[ (M+M^{\top})_{i_2 j_2} \right]_s \right)^{1/2} \nonumber \\
&=  \left( \sum_{i,j} \left( \int_0^t \norm{X_s}_F^{-4} \left( X^{\top}_s X_s \right)^2_{ij} \dd \left[ (M+M^{\top})_{ij} \right]_s \right)^{1/2} \right)^2 \nonumber \\
&\leq d^2 \sum_{i,j} \int_0^t \norm{X_s}_F^{-4} \left( X^{\top}_s X_s \right)^2_{ij} \dd \left[ (M+M^{\top})_{ij} \right]_s \nonumber \\
&\leq 4 d^2 Q_{\infty} \int_0^t \norm{X_s}_F^{-4} \norm{X_s^{\top} X_s}_F^2 \dd s \leq 4 d^2 Q_{\infty} t. \label{eq:qv-N-bound}
\end{align}
The second inequality follows from Cauchy-Schwarz. Now, by conditioning on $\norm{X_t}_F > 0$ if necessary, we have by  the Ito's formula and \eqref{eq:dx-frob}
\begin{align*}
\dd \log \norm{X_t}_F^2 &= \norm{X_t}_F^{-2} \dd \norm{X_t}_F^2 - \frac{1}{2} \norm{X_t}_F^{-4} \dd \left[ \norm{X}_F^2 \right]_t \\
&= \norm{X_t}_F^{-2} \tr \left( X_t^{\top} X_t \left(A_t + A_t^{\top} + Q_t \right) \right) \dd t + \dd N_t  - \frac{1}{2} \dd \left[ N \right]_t.
\end{align*}
Hence, by integrating and taking the exponential, we get
\begin{align*}
\norm{X_t}_F^2 &= \norm{X_0}_F^2 \exp\left( \int_0^t \norm{X_s}_F^{-2} \tr \left( X_s^{\top} X_s \left(A_s + A_s^{\top} + Q_s \right) \right) \dd s \right) \exp\left( N_t - \frac{1}{2} \left[ N \right]_t \right) \\
&\leq \norm{X_0}_F^2 \exp\left( \int_0^t \abs{ \tr \left( 2A_s + Q_s \right)} \dd s \right) \EE( N )_t \\
&\leq \norm{X_0}_F^2 \exp\left(d Q_{\infty} T \right) \exp\left( 2 \int_0^t \abs{ \tr \left( A_s \right)} \dd s \right) \EE( N )_t, 
\end{align*}
where $\EE(N)_t \coloneqq \exp\left( N_t - \frac{1}{2} \left[ N \right]_t \right)$ denotes the stochastic exponential of $N$. The first inequality follows from $\tr\left(AB\right) \leq \abs{\tr(A)} \abs{\tr(B)}$. Therefore, for $p>1$, by Cauchy-Schwarz,
\begin{equation} \label{eq:sup-x-p} 
    \E\left[\sup_{t\in\left[0,T\right]} \norm{X_t}_F^p \right] \leq \norm{X_0}_F^p \exp\left(\frac{p}{2} d Q_{\infty} T \right) \E\left[\exp\left( 2p \int_0^T \abs{ \tr \left( A_s\right)} \dd s \right) \right]^{1/2} \E\left[ \sup_{t\in\left[0,T\right]} \abs{ \EE(N)_t}^{p} \right]^{1/2}. 
\end{equation}
Now, as $\E\left[ \exp\left( 1/2 \left[N\right]_T \right) \right] \leq \exp\left( 2d^2 Q_{\infty} T \right) < \infty$, Novikov condition implies that $\left(\EE(N)_t\right)_{t\in\left[0,T\right]}$ is a (continuous) martingale. Therefore, by Doob's inequality, we get
\begin{equation*}
\E\left[ \sup_{t\in\left[0,T\right]} \abs{ \EE(N)_t}^{p} \right] \leq \left(\frac{p}{p-1}\right)^p \E\left[ \abs{ \EE(N)_T}^{p} \right]
\end{equation*}
Finally, we use the definition of the stochastic exponential and Cauchy-Schwarz to obtain 
\begin{align*}
\E\left[ \abs{ \EE(N)_T}^{p} \right] &= \E\left[ \exp\left( pN_T - \frac{p}{2} \left[N\right]_T \right) \right] \\
&\leq \E\left[ \exp\left( pN_T - p^2 \left[N\right]_T \right) \exp\left( \frac{2p^2-p}{2} \left[N\right]_T \right) \right] \\
&\leq \E\left[ \EE(2pN)_T \right]^{1/2} \E\left[\exp\left( (2p^2-p) \left[N\right]_T \right) \right]^{1/2} \\
&\leq \exp\left( 2(2p^2-p)d^2 Q_{\infty}  T \right).
\end{align*}
We plug this last inequality into \eqref{eq:sup-x-p} to conclude the proof, with 
\begin{equation*}
C(p, d, Q_{\infty}, X_0, T) \coloneqq \left(\frac{p \norm{X_0}_F^2}{p-1}\right)^{p/2}  \exp\left( 2p^2 d^2 Q_{\infty} T \right).
\end{equation*}

\end{proof}

\end{document}